\documentclass[final,5p,times,twocolumn]{elsarticle}

\usepackage[T1]{fontenc}
\usepackage{amsmath,amssymb,amsfonts,amsthm,mathtools,mathrsfs}
\usepackage{microtype}
\usepackage{graphicx}
\graphicspath{{}}
\DeclareGraphicsExtensions{.pdf,.png,.jpg,.jpeg}
\usepackage{array,booktabs,adjustbox}
\newcolumntype{L}[1]{>{\raggedright\arraybackslash}p{#1}}
\usepackage{tikz}
\usetikzlibrary{arrows.meta,positioning,fit,backgrounds,calc}
\usepackage{algorithm}
\usepackage{algpseudocode}
\usepackage[dvipsnames]{xcolor}
\usepackage{csquotes}
\usepackage{xurl}
\usepackage{dblfloatfix}
\usepackage{placeins}
\usepackage[pdfpagelabels,hyperindex,hidelinks]{hyperref}
\hypersetup{
  pdftitle={TwistedMerge: Certified Higher-Order Diagnostics and Abstention for Model Merging},
  pdfauthor={Ting Gong and Shitan Xu},
  pdfsubject={Certified higher-order diagnostics, no-go results, and abstention in model merging},
  pdfkeywords={model merging, gauge synchronization, higher-order diagnostics, cohomology, holonomy, abstention}
}

\biboptions{sort&compress}

\makeatletter
\newenvironment{breakablealgorithm}
{%
  \par
  \addvspace{\intextsep}
  \refstepcounter{algorithm}%
  \hrule height 0.8pt depth 0pt
  \kern 2pt
  \renewcommand{\caption}[2][\relax]{%
    {\raggedright
     \textbf{\ALG@name~\thealgorithm.} ##2\par}%
    \ifx\relax##1\relax
      \addcontentsline{loa}{algorithm}
      {\protect\numberline{\thealgorithm}##2}%
    \else
      \addcontentsline{loa}{algorithm}
      {\protect\numberline{\thealgorithm}##1}%
    \fi
    \kern 2pt
    \hrule height 0.8pt depth 0pt
    \kern 2pt
  }%
}
{%
  \kern 2pt
  \hrule height 0.8pt depth 0pt
  \par
  \addvspace{\intextsep}
}
\makeatother

\newtheorem{theorem}{Theorem}[subsection]
\newtheorem{corollary}[theorem]{Corollary}
\newtheorem{proposition}[theorem]{Proposition}

\theoremstyle{definition}
\newtheorem{definition}[theorem]{Definition}

\newtheorem{ex}[theorem]{Example}
\theoremstyle{remark}
\newtheorem{remark}[theorem]{Remark}

\newcommand{\ZZ}{\mathbb{Z}}
\newcommand{\CC}{\mathbb{C}}
\newcommand{\GG}{\mathbb{G}}
\newcommand{\cF}{\mathcal{F}}
\newcommand{\sC}{\mathscr{C}}
\newcommand{\sM}{\mathscr{M}}
\newcommand{\sN}{\mathscr{N}}

\newcommand{\sY}{\mathscr{Y}}
\DeclareMathOperator{\Br}{Br}
\DeclareMathOperator{\Def}{Def}
\DeclareMathOperator{\Obs}{Obs}
\DeclareMathOperator{\dist}{dist}
\DeclareMathOperator{\et}{\acute{e}t}
\DeclareMathOperator{\GL}{GL}
\DeclareMathOperator{\ind}{ind}
\DeclareMathOperator{\im}{im}
\DeclareMathOperator{\per}{per}
\DeclareMathOperator{\Pic}{Pic}
\DeclareMathOperator{\PGL}{PGL}
\newcommand{\indrep}{\operatorname{ind}_{\mathrm{rep}}}
\DeclareRobustCommand{\CtwoMthree}{\ensuremath{C^{2}M^{3}}}
\journal{Information Fusion}

\begin{document}
\begin{frontmatter}
\title{TwistedMerge: Certified Higher-Order Diagnostics and Abstention for Model Merging}

\author[uw]{Ting Gong\corref{cor1}}
\ead{tgong2@uw.edu}
\author[bicmr]{Shitan Xu}
\ead{shitanxu@bicmr.pku.edu.cn}
\cortext[cor1]{Corresponding author}

\affiliation[uw]{organization={Department of Mathematics, University of Washington},
            addressline={Box 354350},
            city={Seattle},
            postcode={WA 98195-4350},
            country={USA}}
\affiliation[bicmr]{organization={Beijing International Center for Mathematical Research, Peking University},
            addressline={No. 5 Yiheyuan Road, Haidian District},
            city={Beijing},
            postcode={100871},
            country={P.R. China}}

\begin{abstract}
Model merging combines independently trained or fine-tuned models, but pairwise alignability does not imply globally consistent alignment. We formulate merging as a finite descent problem in which checkpoints are local objects, alignment maps are transitions, and cycle products are residuals. TwistedMerge is a conservative certification pipeline that separates fixed-chart averaging, synchronization-removable gauge inconsistency, a certified central obstruction on a specified comparison complex, and nonabelian holonomy. A residual is promoted to a cohomology class only after inverse-consistency, coefficient-identification, centrality, and closure tests; otherwise the method abstains and returns an ordinary or synchronized fallback. We prove a constant-edge no-go result, frozen-complex three-way and predeclared-family error-control theorems, and a refinement test for comparison-complex sensitivity. A planted neural alignment defect is removed by cycle-consistent synchronization, showing that a nonzero cycle score alone is not a higher obstruction. Controlled central systems recover the predicted non-coboundary and projective-rank behavior, while noisy estimates move from certification to abstention without false lifts on the tested controls. A trained low-rank-adapter audit shows that naive factor averaging depends on the chosen $\GL_r$ representative, whereas global factor synchronization and dense-delta SVD are stable. On natural checkpoint collections, cycle residuals do not predict merge degradation and no natural central or period-index class is certified. The results position descent theory as a falsifiable certification and abstention framework.
\end{abstract}

\begin{keyword}
model merging \sep gauge synchronization \sep higher-order diagnostics \sep cohomology \sep holonomy \sep abstention
\end{keyword}
\end{frontmatter}

\section{Introduction}
Model merging combines independently trained or fine-tuned checkpoints without retraining a single model from scratch. Most methods either assume a common parameter chart or estimate pairwise alignments before averaging. A nonzero cycle residual is tempting to interpret as a higher obstruction. The main idea of this paper is that this inference is generally invalid. Cycle inconsistency may be caused by a synchronization-removable gauge defect, a misspecified coefficient system, incoherent comparison contexts, or estimation noise. A cohomological interpretation is warranted only after the comparison complex, coefficient identifications, centrality, closure, and permitted repair model have been fixed and tested.

This dependence on the comparison complex is decisive. If the checkpoint complex is a full simplex and the coefficient group is constant and abelian, then all positive-degree cohomology vanishes. A nonzero finite $H^2$ certificate therefore requires missing or unavailable higher comparison contexts, and those absences must be determined before the target residual is inspected. We accordingly treat the complex $K$ as part of the scientific design.

TwistedMerge is the resulting certification pipeline. Checkpoints are local objects, pairwise alignments are transitions, and triangle products are residuals. The pipeline separates four regimes: fixed-chart averaging, synchronization-removable gauge inconsistency, a certified central class on a specified comparison complex, and nonabelian holonomy. Its output is three-way---\emph{trivial}, \emph{nontrivial on $K$}, or \emph{uncertified}---and an uncertified structural branch returns to an ordinary or synchronized validation fallback. Projective or branch predictors are admitted only after separate construction and invariant-readout tests; they are not identified with the vanishing of the original class.

\begin{figure*}[!tbp]
\centering
\begin{tikzpicture}[
  flow/.style={draw, rounded corners, align=center, text width=2.75cm, minimum height=1.0cm, font=\scriptsize, inner sep=4pt},
  status/.style={draw, rounded corners, align=center, text width=3.25cm, minimum height=1.15cm, font=\scriptsize, inner sep=4pt},
  arrow/.style={-{Latex[length=2mm]}, thick},
  node distance=7mm and 7mm
]
\node[flow] (freeze) {Freeze $K=\Phi(D_K)$\\before residual or test access};
\node[flow, right=of freeze] (fit) {Fit transitions on $D_{\mathrm{align}}$};
\node[flow, right=of fit] (cert) {Apply inverse, projection, centrality, closure, and margin gates on $D_{\mathrm{cert}}$};

\node[status, below left=9mm and 5.0cm of cert] (strict) {Strict or synchronization-removable\\ordinary/synchronized candidate\\single model, $1\times$ inference};
\node[status, right=4mm of strict] (central) {Certified central\\branch\\trivial / nontrivial on $K$ / uncertified\\optional verified rank-$r$ predictor};
\node[status, right=4mm of central] (hol) {Noncentral holonomy\\optional invariant branch predictor\\branch multiplier recorded};
\node[status, right=4mm of hol] (fallback) {Uncertified structure\\ordinary or synchronized fallback\\single model, $1\times$ inference};

\draw[arrow] (freeze) -- (fit);
\draw[arrow] (fit) -- (cert);
\draw[arrow] (cert.south) -- ++(0,-4mm) -| (strict.north);
\draw[arrow] (cert.south) -- ++(0,-4mm) -| (central.north);
\draw[arrow] (cert.south) -- ++(0,-4mm) -| (hol.north);
\draw[arrow] (cert.south) -- ++(0,-4mm) -| (fallback.north);

\node[flow, below=14mm of central, xshift=1.55cm] (select) {Select admitted candidates on $D_{\mathrm{select}}$; evaluate the frozen choice once on $D_{\mathrm{test}}$};
\draw[arrow] (strict.south) |- (select.west);
\draw[arrow] (central.south) -- (select.north west);
\draw[arrow] (hol.south) -- (select.north east);
\draw[arrow] (fallback.south) |- (select.east);
\end{tikzpicture}
\caption{Core certification flow. The comparison complex is frozen before transition fitting and residual inspection. Structural branches are admitted only after their own certificates, and any increase in rank or branch count is reported as an output cost. Uncertified branches return to a $1\times$ ordinary or synchronized fallback.}
\label{fig:core-certification-flow}
\end{figure*}

\subsection{Contributions and claim boundary}
The paper makes three contributions.
\begin{enumerate}
\item We formalize a finite certification problem that distinguishes raw cycle inconsistency, strict gauge synchronization, a closed central $H^2$ obstruction on a frozen comparison complex, and nonabelian holonomy. The construction uses disjoint roles for complex selection, transition fitting, certification, candidate selection, and final testing.
\item We prove paper-specific no-go and error-control results. Context-independent central edge matrices cannot realize the one-negative-face tetrahedral class; a frozen-complex three-way rule controls false trivial and false nontrivial declarations under estimated transitions; a simultaneous extension controls multiplicity over a predeclared family of complexes; and a refinement proposition states exactly which certificates persist, disappear, or fail to extend when simplices are changed.
\item We validate the framework through causal, controlled, and natural experiments. A planted neural alignment defect is removed by strict synchronization; trained low-rank adapters expose the representation dependence of naive factor averaging; controlled projective systems verify certification and abstention gates; and natural checkpoint collections fail the proposed higher-order predictive and central-class tests.
\end{enumerate}

The contribution is a certification, falsification, and no-go framework. A nonzero class in $H^2(K;A)$ rules out only the tested $A$-valued edge repair on the stated $K$ and no natural Brauer or period-index class is certified in the present experiments.

\begin{table*}[!tbp]
\centering
\caption{Claim--evidence matrix. Metrics in different rows answer different hypotheses and are not combined into one performance leaderboard.}
\label{tab:claim-evidence-matrix}
\small
\begin{adjustbox}{max width=\textwidth}
\begin{tabular}{L{0.20\textwidth}L{0.18\textwidth}L{0.14\textwidth}L{0.16\textwidth}L{0.25\textwidth}}
\toprule
Hypothesis & Data regime & Output type & Primary metric & Conclusion \\
\midrule
Cycle inconsistency can be synchronization-removable & exact-copy MNIST MLPs with one corrupted edge & single synchronized model & merge degradation & strict synchronization removes the planted defect; cycle score is not an $H^2$ certificate \\
Naive adapter-factor averaging is representation-dependent & five groups of trained rank-$4$ residual adapters & single merged update & update and prediction invariance & factor averaging changes under equivalent $\GL_r$ representatives; global synchronization and dense SVD are stable \\
A prescribed central class can obstruct the permitted repair & tetrahedral $\mu_2$ oracle witness & diagnostic only & exact coboundary test & the one-negative-face class is nontrivial on $\partial\Delta^3$ but is not an end-to-end constant-edge neural realization \\
Conservative abstention prevents unsupported lifts & controlled finite-Heisenberg systems under noise & diagnostic and optional projective realization & coverage, uncertainty, false-lift rate & small-noise positives are certified, intermediate noise abstains, and tested negative controls do not lift \\
Natural cycle residual predicts merge failure & $120$ natural checkpoint collections & diagnostic only & leave-one-setting-out $R^2$ & the hypothesis is unsupported; validation quantities are more predictive \\
\bottomrule
\end{tabular}
\end{adjustbox}
\end{table*}

\subsection{Relation to recent work}

Model-merging methods make different assumptions about the parameter coordinates in which the local models are compared. Git Re-Basin aligns permutation-equivalent networks to a reference model before averaging \cite{Ainsworth2022GitReBasin}, while $C^2M^3$ imposes cycle consistency on permutation alignments across several models \cite{Crisostomi2024C2M3}. Task Arithmetic \cite{ilharco2023editing}, TIES-Merging \cite{yadav2023tiesmerging}, and DARE \cite{yu2024language} operate on parameter updates relative to a shared base model. SLERP instead interpolates along a spherical path in a chosen parameter geometry \cite{shoemake1985animating}. Yang et al.\ give a broad survey of model-merging methods, theories, and applications \cite{YangEtAl2024ModelMergingSurvey}. A recent large-scale study evaluates six merging methods across four open-weight language models, twelve fine-tuned checkpoints per base model, and sixteen benchmarks, and finds that Task Arithmetic is the only tested method that reliably improves performance in its in-the-wild setting \cite{HititEtAl2026InTheWild}.

Recent work has also moved beyond purely pairwise model-merging diagnostics. Zheng and Allen-Blanchette use Hodge decomposition to diagnose higher-order merge failures on a simplicial structure \cite{ZhengAllenBlanchette2026BeyondPairwise}. Karuturi et al.\ analyze the $\GL_r$ gauge symmetry of LoRA through its principal-bundle geometry and discuss its implications for adapter merging \cite{KaruturiEtAl2026LoRAGauge}. Javidnia develops a sheaf-theoretic atlas of neural representations and measures shearing, jamming, and loop holonomy in a frozen large language model \cite{Javidnia2026GaugeSuperposition}. These works are adjacent to the present paper, although they address different structural and certification questions.

\begin{table*}[!tbp]
\centering
\caption{Comparison with recent higher-order, gauge, and sheaf-theoretic work.}
\label{tab:recent-related-work}
\small
\begin{adjustbox}{max width=\textwidth}
\begin{tabular}{L{0.19\textwidth}L{0.18\textwidth}L{0.18\textwidth}L{0.19\textwidth}L{0.20\textwidth}}
\toprule
Work & Object of study & Higher-order object & Gauge or coefficient treatment & Primary objective \\
\midrule
Zheng--Allen-Blanchette \cite{ZhengAllenBlanchette2026BeyondPairwise}
& model-merge residuals on a simplicial structure
& Hodge components of higher-order failure
& simplicial residual decomposition
& diagnosis of failures beyond a star-shaped or pairwise merging pipeline
\\
Karuturi et al.\ \cite{KaruturiEtAl2026LoRAGauge}
& LoRA factor space
& principal-bundle geometry of $\GL_r$ symmetry
& exact rank-space gauge
& geometric implications for optimization and adapter merging
\\
Javidnia \cite{Javidnia2026GaugeSuperposition}
& local semantic charts in an LLM
& loop holonomy, shearing, and jamming
& chart transport with information-geometric weighting
& interpretability and transfer bounds in a frozen LLM
\\
This paper
& checkpoint transitions on a frozen finite complex
& central $2$-cochain or class, or nonabelian holonomy
& coefficient identification, closure, no-go tests, and abstention
& certification of when higher-order language is warranted in model merging
\\
\bottomrule
\end{tabular}
\end{adjustbox}
\end{table*}

The distinction from Hodge decomposition is important. A harmonic component gives a useful numerical decomposition of a residual. In our framework, a residual determines a central cohomology class only after the coefficient system, centrality, and cocycle closure have been certified. The distinction from the LoRA gauge analysis is also explicit. The $\GL_r$ symmetry is standard background, while our experiment audits the representation dependence, global synchronization, systems cost, and fallback behavior of concrete factor-merging rules. The relation to gauge-theoretic superposition is conceptual: our local objects are checkpoints and adapter updates, while Javidnia studies semantic feature charts inside a frozen language model.

The implementation, reports, and audited experimental artifacts used in this paper are available in the TwistedMerge repository \cite{gong2026twistedmerge}.

\subsection{Organization}
Section~\ref{sec:theory} defines the frozen comparison protocol, separates standard background from paper-specific results, proves the no-go, single-complex and familywise error-control, refinement, and invariant-readout statements, and gives the core certification pipeline. Section~\ref{sec:experiments} presents the primary causal, adapter, controlled-certification, and natural falsification experiments. Section~\ref{sec:limitations} states the realization, scale, comparison-complex, and decision-utility limitations. Extended proofs, the full algorithm, and secondary experiments are collected in the appendices.

\section{Finite comparison and certification theory}\label{sec:theory}
In this section, we specify a finite certification problem. Standard descent and projective-representation statements are used as background; the paper-specific contribution lies in the comparison-complex no-go conditions, the frozen-complex three-way certificate, and the conservative interpretation of the resulting diagnostics. Extended proofs and background material are included in the appendices of this paper, beginning with Appendix~\ref{app:additional-theory}.

\subsection{Status of the mathematical results}\label{subsec:result-status}
\begin{table*}[!tbp]
\centering
\caption{Status and role of the principal mathematical statements. ``Adapted'' means that a standard fact is specialized to the model-merging setting; it is not claimed as a new theorem of algebraic geometry.}
\label{tab:theory-status}
\scriptsize
\setlength{\tabcolsep}{2pt}
\begin{adjustbox}{max width=\textwidth}
\begin{tabular}{L{0.28\textwidth}L{0.12\textwidth}L{0.24\textwidth}L{0.28\textwidth}}
\toprule
Result & Status & Source or basis & Role in the pipeline \\
\midrule
Effective descent and central obstruction & standard; adapted & descent and gerbe theory \cite{Giraud_71,Lieblich_07} & separates ordinary gluing from a fixed-cover central obstruction \\
Full-simplex higher-cohomology vanishing & standard & contractibility of a simplex & prevents a nonzero constant-coefficient certificate on a complete nerve \\
Constant-edge tetrahedral no-go & paper-specific & Proposition~\ref{prop:constant-edge-tetrahedron} & shows that the one-negative-face witness cannot come from six context-independent central edge matrices \\
Distance and three-way margin rule & paper-specific adaptation & finite-dimensional distance stability & controls trivial/nontrivial/uncertified decisions under estimated transitions \\
Frozen-complex error control & new in this paper & Theorem~\ref{thm:frozen-complex-certificate} & prevents the comparison complex from being chosen using the residual under test \\
Predeclared-family error control & new in this paper & Theorem~\ref{thm:complex-family-certificate} & controls multiplicity when several complex thresholds are audited \\
Refinement-persistence test & new in this paper & Proposition~\ref{prop:complex-refinement} & formalizes sensitivity to adding, deleting, or subdividing comparison simplices \\
Projective rank thresholds and invariant pooling & standard; adapted & projective representations and invariant maps & supplies necessary rank and path-independence gates \\
Exact ReLU and LoRA gauges & standard; adapted & positive homogeneity and rank-factor symmetry & identifies representation-dependent averaging rules \\
\bottomrule
\end{tabular}
\end{adjustbox}
\end{table*}

\subsection{Finite comparison data and descent background}
In this subsection, we review the traditional mathematical theory of twisted sheaves and descent obstructions. We use the abstract language of algebraic geometry here; readers who want the learning-theoretic interpretation may compare the terms below with the dictionary in Section~\ref{subsec:dictionary}.

Before we begin, we fix a small notational convention. We will sometimes write the trivial obstruction as $1$ and sometimes as $0$. This is only a matter of notation: for multiplicative groups the identity is written as $1$, while for rings, modules, or additive cohomology groups the identity is written as $0$. Thus a condition such as $g_{ij}g_{jk}g_{ki}=1$ is the multiplicative form of strict descent, while the corresponding cohomology class is written as $[c]=0$.

\begin{definition}[Learning site, cover, and overlaps]
\label{def:learning-site-cover}
A \emph{learning site} $\sC_{\mathrm{learn}}$ is a category equipped with a Grothendieck topology. Its objects are local learning contexts, and its covering families specify which collections of local contexts are regarded as jointly covering a global learning problem.

Let $X\in\sC_{\mathrm{learn}}$. A finite learning cover of $X$ is a covering family
\[
\mathcal U=\{u_i:U_i\longrightarrow X\}_{i\in I}
\]
in the chosen Grothendieck topology.

Assume that the relevant fiber products exist. For an ordered tuple $(i_0,\ldots,i_p)$, we write
\[
U_{i_0\cdots i_p}=U_{i_0}\times_X U_{i_1}\times_X\cdots\times_X U_{i_p}.
\]
In particular,
\[
U_{ij}=U_i\times_X U_j, \qquad U_{ijk}=U_i\times_X U_j\times_X U_k.
\]
The object $U_{ij}$ is the context on which two local models are compared, while $U_{ijk}$ is the context on which the compatibility of three pairwise comparisons can be tested.
\end{definition}

\begin{definition}
\label{def:nerve-learning-cover}
Let $\mathcal U=\{U_i\to X\}_{i\in I}$ be a finite learning cover. Its \emph{nerve} $N(\mathcal U)$ is the simplicial object whose $p$-simplices are the overlap objects $U_{i_0\cdots i_p}$, with the usual face and degeneracy maps.

After passing to an incidence complex, a simplex $(i_0,\ldots,i_p)$ is present when the corresponding overlap object is noninitial, or nonempty in settings where an underlying-space notion of emptiness is available. No higher overlap is deleted merely for computational convenience. A deliberately selected or truncated family of simplices will instead be called a finite comparison complex in the sense of Definition~\ref{def:finite-comparison-complex}.
\end{definition}

\begin{definition}
\label{def:finite-comparison-complex}
In a computational model-merging problem, one may instead specify a finite simplicial complex $K$ directly. Its vertices index local models or checkpoints, an edge $(ij)$ indicates that models $M_i$ and $M_j$ are compared on prescribed data $D_{ij}$, and a face $(ijk)$ indicates that the three pairwise alignments can be composed and evaluated on a common comparison
context $D_{ijk}$.

We call such a $K$ a \emph{finite comparison complex}. It need not be the nerve of a cover in a Grothendieck topology. We call $K$ a \emph{cover nerve} only when there is a specified cover $\mathcal U=\{U_i\to X\}$ for which $K=N(\mathcal U)$ and the comparison contexts $D_{i_0\cdots i_p}$ arise from the corresponding overlaps $U_{i_0\cdots i_p}$.
\end{definition}

\begin{definition}\label{def:finite-descent-instance}
A \emph{finite model-merging descent instance} is a tuple
\[
\mathfrak D=\left(K,\{M_i\}_{i\in K_0},\{D_\sigma\}_{\sigma\in K},\{g_{ij}\}_{(ij)\in K_1},A\right),
\]
where:
\begin{enumerate}
\item $K$ is a finite comparison complex;
\item $M_i$ is the local model attached to the vertex $i$;
\item $D_\sigma$ is the common comparison context attached to a simplex
$\sigma\in K$;
\item $g_{ij}:M_i|_{D_{ij}}\longrightarrow M_j|_{D_{ij}}$ is an alignment or transition map attached to the oriented edge $(ij)$;
\item $A$ is a coefficient system in which central residuals are recorded.
\end{enumerate}

Throughout the paper, products of transition maps are written in \emph{path order}: $g_{ij}g_{jk}$ means first apply $g_{ij}$ and then apply $g_{jk}$, equivalently $g_{jk}\circ g_{ij}$ in the conventional right-to-left notation for composition. Matrix formulas may be read either as path-ordered symbols or in row-vector coordinates. In the exact setting, we impose the inverse convention $g_{ji}=g_{ij}^{-1}$. For an oriented face $(ijk)$, the corresponding triangle residual is $h_{ijk}=g_{ij}g_{jk}g_{ki}$. When $K=N(\mathcal U)$ and the objects $M_i$, comparison contexts $D_\sigma$, and maps $g_{ij}$ arise by restriction from a sheaf or stack on $\sC_{\mathrm{learn}}$, the finite descent instance is the finite restriction of an ordinary \v{C}ech descent problem. For a general finite comparison complex $K$, the tuple $\mathfrak D$ is a combinatorial diagnostic model. It is not, by itself, a descent datum on a learning site, and a class in $H^2(K;A)$ is not automatically a class in the cohomology of $\sC_{\mathrm{learn}}$.
\end{definition}

\begin{remark}[Three levels of cohomology]
\label{rem:three-levels-cohomology}
The following objects should be distinguished:
\[
H^2(K;A),\qquad\check H^2(\mathcal U;A),\qquad H^2(\sC_{\mathrm{learn}};A).
\]
The first is the simplicial cohomology of a finite comparison complex. The second is the \v{C}ech cohomology of a specified cover. The third is the cohomology of the full learning site.

When $K=N(\mathcal U)$, a finite central cocycle may be interpreted as a class in $\check H^2(\mathcal U;A)$. It may be identified with a class in $H^2(\sC_{\mathrm{learn}};A)$ only when an appropriate comparison theorem applies, for example when the chosen cover is sufficiently acyclic for the coefficient system $A$. Without such a hypothesis, the obstruction computed by TwistedMerge is a class or residual on the chosen finite complex only.
\end{remark}

\begin{remark}[Scope of a finite obstruction certificate]
A certificate is relative to the permitted corrections and to the chosen comparison contexts. A nonzero class in $H^2(K;A)$ means that the measured central residual cannot be removed by multiplying the transition maps by an $A$-valued $1$-cochain on the edges of $K$. In particular, a different gauge group, a refined or enlarged comparison complex, a nonlinear retraining procedure, a projective representation, or an invariant branch readout may change the computational problem. For this reason, the implementation records the three statuses \emph{trivial}, \emph{nontrivial on $K$}, and \emph{uncertified}. The last status is used whenever centrality, closure, coefficient identification, or the statistical margin is not established.
\end{remark}

\begin{remark}
\label{rem:full-simplex-sanity}
Suppose that $K$ contains every subset of its vertex set as a simplex. Then $K$ is a full simplex and is contractible. Consequently, for a constant abelian coefficient group $A$,
\[
H^q(K;A)=0\qquad \text{for every }q>0.
\]
Thus a nonzero finite $H^2$ certificate requires nontrivial incidence data: some higher overlaps must be absent, unavailable, or deliberately excluded from the comparison complex.

In particular, the tetrahedral benchmark used below takes $K=\partial\Delta^3$, the boundary of a tetrahedron, rather than the full simplex $\Delta^3$. The missing $3$-simplex represents the absence of a fourfold compatibility datum in that controlled construction. We use this example as an explicit finite obstruction witness. The tetrahedral boundary is part of the controlled incidence structure of the synthetic witness.
\end{remark}

\begin{definition}[Frozen construction of the comparison complex]\label{def:frozen-comparison-complex}
Let the available data be separated into
\[
D_K,\qquad D_{\mathrm{align}},\qquad D_{\mathrm{cert}},\qquad D_{\mathrm{select}},\qquad D_{\mathrm{test}}.
\]
The complex-construction data $D_K$ contain only information used to decide whether a common comparison context is available, such as sample overlap, task overlap, communication availability, or a predeclared reliability score for composing restrictions. For every nonempty set of vertices $\sigma$, fix an availability score $q_\sigma(D_K)$ and a threshold $\lambda_{\dim\sigma}$ before fitting transition maps or inspecting residuals. Define
\[
\begin{aligned}
\sigma\in K
\quad\Longleftrightarrow\quad
&q_\tau(D_K)\geq\lambda_{\dim\tau}\text{ for every nonempty }\tau\subseteq\sigma,\\
&\text{and a common comparison context }D_\tau\\
&\text{is available for each such }\tau.
\end{aligned}
\]
This downward-closed rule produces a simplicial complex. The rule, thresholds, and data split are frozen before $g_{ij}$, $h_{ijk}$, $\Def(\widehat c)$, validation performance, or test performance are examined. Absence of a simplex means that the corresponding common context was not certified as available by this rule.
\end{definition}

\begin{theorem}[Frozen-complex three-way error control]\label{thm:frozen-complex-certificate}
Let $K=\Phi(D_K)$ be constructed by Definition~\ref{def:frozen-comparison-complex}, and assume that $D_K$ is independent of the certification data. Conditional on $K$, let $c\in Z^2(K;V)$ and suppose that
\[
\Pr\bigl(\|\widehat c-c\|\leq\varepsilon\mid K\bigr)\geq 1-\delta.
\]
Set
\[
L=\max\{0,\Def(\widehat c)-\varepsilon\},
\qquad
U=\Def(\widehat c)+\varepsilon,
\]
and fix $0\leq a<b$. After centrality, coefficient identification, and closure have passed, return
\[
\begin{aligned}
&\text{trivial if }U\leq a,\\
&\text{nontrivial on }K\text{ if }L\geq b,\\
&\text{uncertified otherwise}.
\end{aligned}
\]
On the stated event, a class with $\Def(c)>a$ is never declared trivial, and a class with $\Def(c)<b$ is never declared nontrivial. Moreover, every class with $\Def(c)\leq a-2\varepsilon$ is declared trivial, while every class with $\Def(c)\geq b+2\varepsilon$ is declared nontrivial. The same probability bound holds unconditionally because $K$ was selected independently of $D_{\mathrm{cert}}$.
\end{theorem}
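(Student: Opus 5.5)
The argument rests on one deterministic fact: $\Def$ is $1$-Lipschitz. Here $\Def(c)=\dist(c,B^2(K;V))=\inf_{b\in B^2(K;V)}\|c-b\|$ is the distance from a cochain to the coboundary subspace of the finite-dimensional space $C^2(K;V)$. The triangle inequality gives $\|\widehat c-b\|\leq\|\widehat c-c\|+\|c-b\|$ for every coboundary $b$. Taking infima and then swapping the roles of $c$ and $\widehat c$ yields
\[
|\Def(\widehat c)-\Def(c)|\leq\|\widehat c-c\|.
\]
This is the finite-dimensional distance stability listed in Table~\ref{tab:theory-status}. The whole proof is deterministic on the event $E=\{\|\widehat c-c\|\leq\varepsilon\}$, and probability enters only at the end.

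On $E$ the Lipschitz bound gives $\Def(\widehat c)-\varepsilon\leq\Def(c)\leq\Def(\widehat c)+\varepsilon$. Since $\Def(c)\geq0$, this becomes $L\leq\Def(c)\leq U$, and the max with $0$ does no harm. The four claims then follow directly.
\begin{itemize}
\item[(i)] If the rule declares trivial, then $U\leq a$, so $\Def(c)\leq a$. Hence no class with $\Def(c)>a$ is declared trivial.
\item[(ii)] If the rule declares nontrivial, then $L\geq b>0$. This forces $L=\Def(\widehat c)-\varepsilon$, so $\Def(c)\geq b$. Hence no class with $\Def(c)<b$ is declared nontrivial.
\item[(iii)] If $\Def(c)\leq a-2\varepsilon$, then $U=\Def(\widehat c)+\varepsilon\leq\Def(c)+2\varepsilon\leq a$, so the class is declared trivial.
\item[(iv)] If $\Def(c)\geq b+2\varepsilon$, then $\Def(\widehat c)-\varepsilon\geq\Def(c)-2\varepsilon\geq b$, so $L\geq b$ and the class is declared nontrivial.
\end{itemize}
The two branches are mutually exclusive because $U\geq L$ and $a<b$, so the three-way rule is well defined. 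The gates for centrality, coefficient identification and closure only restrict which instances reach the rule, so they do not affect these implications.

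For the unconditional statement, let $F$ be the event that the claimed guarantees hold. We have shown $E\subseteq F$ pointwise, so $\Pr(F\mid K)\geq1-\delta$ for every realized $K$. The tower property then gives $\Pr(F)=\mathbb E[\Pr(F\mid K)]\geq1-\delta$.

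The main obstacle is making this conditioning step legitimate. The hypothesis is a conditional coverage bound given $K$, and $c$ itself is defined only once $K$ is fixed. Since $K=\Phi(D_K)$ and the complex-construction data $D_K$ are independent of the certification data $D_{\mathrm{cert}}$, the conditional law of $\widehat c$ given $K$ is the law the error bound was designed for; $K$ is not chosen by peeking at the residual. I would state this explicitly. I would also note that $\varepsilon$ and $\delta$ may depend on $K$ but must be fixed before $D_{\mathrm{cert}}$ is examined. Without that, averaging over $K$ would not preserve the $1-\delta$ bound.
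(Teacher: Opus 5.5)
Your proposal is correct and follows the paper's proof essentially step for step. The $1$-Lipschitz property of $\Def=\dist(\cdot,B^2(K;V))$ gives $L\leq\Def(c)\leq U$ on the event $\|\widehat c-c\|\leq\varepsilon$, the four decision claims follow from the same inequalities, and the unconditional bound comes from averaging the conditional one over $K$; your check that the trivial and nontrivial branches are mutually exclusive is a harmless addition that the paper leaves implicit.
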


\begin{proof}
By the $1$-Lipschitz property of distance to the fixed subspace $B^2(K;V)$,
\[
|\Def(\widehat c)-\Def(c)|\leq\varepsilon.
\]
Hence $L\leq\Def(c)\leq U$. The two no-false-decision statements follow immediately. If $\Def(c)\leq a-2\varepsilon$, then $U\leq\Def(c)+2\varepsilon\leq a$. If $\Def(c)\geq b+2\varepsilon$, then $L\geq\Def(c)-2\varepsilon\geq b$. Conditioning on the independently constructed $K$ and then averaging over $K$ preserves the probability bound.
\end{proof}

\begin{theorem}[Simultaneous control over a predeclared complex family]\label{thm:complex-family-certificate}
Let $\Lambda$ be finite, and let $K_\lambda=\Phi_\lambda(D_K)$, $\lambda\in\Lambda$, be a predeclared family of comparison complexes constructed without access to $D_{\mathrm{cert}}$. For each $\lambda$, let $c_\lambda\in Z^2(K_\lambda;V)$ and suppose that, conditional on $D_K$,
\[
\Pr\!\left(\|\widehat c_\lambda-c_\lambda\|\leq\varepsilon_\lambda\mid D_K\right)\geq 1-\delta_\lambda.
\]
Define
\[
L_\lambda=\max\{0,\Def(\widehat c_\lambda)-\varepsilon_\lambda\},
\qquad
U_\lambda=\Def(\widehat c_\lambda)+\varepsilon_\lambda.
\]
Then, with probability at least $1-\sum_{\lambda\in\Lambda}\delta_\lambda$, all intervals
\[
[L_\lambda,U_\lambda]
\]
simultaneously contain $\Def(c_\lambda)$. On this event, any complex selected after inspecting the certification data still satisfies the no-false-trivial and no-false-nontrivial conclusions of Theorem~\ref{thm:frozen-complex-certificate}, provided that its decision uses the simultaneous interval. Moreover, a declaration that a class is nontrivial throughout a predeclared subfamily $\Lambda_0\subseteq\Lambda$ implies $\Def(c_\lambda)\geq b_\lambda$ for every $\lambda\in\Lambda_0$, while a declaration of triviality throughout $\Lambda_0$ implies $\Def(c_\lambda)\leq a_\lambda$ for every $\lambda\in\Lambda_0$.
\end{theorem}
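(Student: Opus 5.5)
The plan is a union bound followed by the deterministic argument already used for Theorem~\ref{thm:frozen-complex-certificate}, applied on a single simultaneous event. For each $\lambda\in\Lambda$, let $E_\lambda=\{\|\widehat c_\lambda-c_\lambda\|\leq\varepsilon_\lambda\}$. By hypothesis $\Pr(E_\lambda^c\mid D_K)\leq\delta_\lambda$. Because $\Lambda$ is finite, Boole's inequality applied conditionally on $D_K$ gives
\[
\Pr\Bigl(\textstyle\bigcap_{\lambda\in\Lambda}E_\lambda\Bigm| D_K\Bigr)\geq 1-\sum_{\lambda\in\Lambda}\delta_\lambda .
\]
Averaging over $D_K$ gives the same bound unconditionally. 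No independence among the $\widehat c_\lambda$ is needed, which is why the union bound is the natural tool here. The only structural input is that the family $\{\Phi_\lambda\}$ and the thresholds are fixed before $D_{\mathrm{cert}}$ is accessed. This makes each $K_\lambda$, and hence the subspaces $B^2(K_\lambda;V)$, $D_K$-measurable, so each conditional coverage statement is well defined.

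On the event $E=\bigcap_\lambda E_\lambda$, I will repeat the Lipschitz step for each $\lambda$. Since $\Def$ is the distance to the fixed subspace $B^2(K_\lambda;V)$, it is $1$-Lipschitz, so $|\Def(\widehat c_\lambda)-\Def(c_\lambda)|\leq\varepsilon_\lambda$. Because $\Def(c_\lambda)\geq0$, truncating at $0$ preserves the lower bound, and therefore $L_\lambda\leq\Def(c_\lambda)\leq U_\lambda$ for every $\lambda$ simultaneously.

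Post-hoc selection then costs nothing. Let $\widehat\lambda$ be any index chosen after inspecting $D_{\mathrm{cert}}$, as an arbitrary measurable function of the data. Since the containment holds for all $\lambda$ on $E$, it holds in particular for $\widehat\lambda$. The decision rule with thresholds $a_{\widehat\lambda}<b_{\widehat\lambda}$ declares trivial only if $U_{\widehat\lambda}\leq a_{\widehat\lambda}$, which forces $\Def(c_{\widehat\lambda})\leq a_{\widehat\lambda}$. It declares nontrivial only if $L_{\widehat\lambda}\geq b_{\widehat\lambda}$, which forces $\Def(c_{\widehat\lambda})\geq b_{\widehat\lambda}$. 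These are exactly the no-false-trivial and no-false-nontrivial conclusions. The subfamily statement for $\Lambda_0$ follows by applying the same implication to each $\lambda\in\Lambda_0$ on the common event $E$.

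The main obstacle is not computational. It is stating the selection claim correctly: the guarantee must be made pathwise on the fixed event $E$, not as a conditional probability given the selected $\widehat\lambda$. I would stress that $E$ is defined before selection and does not depend on $\widehat\lambda$. A conditional statement given $\widehat\lambda$ would be invalid because $\widehat\lambda$ depends on $D_{\mathrm{cert}}$.
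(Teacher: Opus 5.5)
Your proposal is correct and follows the paper's proof: a union bound over the per-$\lambda$ estimation events, the $1$-Lipschitz property of distance to $B^2(K_\lambda;V)$ giving $L_\lambda\le\Def(c_\lambda)\le U_\lambda$, and a pathwise argument on the common event that covers data-dependent selection and the subfamily $\Lambda_0$. You add detail the paper's short proof leaves implicit: conditioning on $D_K$ and then averaging, the truncation at zero, and stating the selection guarantee on an event fixed before selection.
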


\begin{proof}
For each $\lambda$, the distance-to-coboundaries map is $1$-Lipschitz, so the stated estimation event implies $L_\lambda\leq\Def(c_\lambda)\leq U_\lambda$. A union bound gives simultaneous coverage for the finite family. Every subsequent choice of $\lambda$, including a data-dependent one, is then made on an event on which all intervals are valid. The final two statements follow by applying the lower or upper bound to every member of $\Lambda_0$.
\end{proof}

\begin{proposition}[Refinement persistence and sensitivity]\label{prop:complex-refinement}
Let $i:K\hookrightarrow L$ be an inclusion of finite simplicial complexes and let $A$ be an abelian coefficient group.
\begin{enumerate}
\item If $c_L\in Z^2(L;A)$, then $i^*c_L\in Z^2(K;A)$ and $i^*[c_L]=[i^*c_L]$. Consequently, if the restricted class is nonzero on $K$, then $[c_L]$ is nonzero on $L$.
\item A class on $K$ need not extend to $L$. In particular, the one-negative-face $\mu_2$ cocycle on $\partial\Delta^3$ does not extend to a cocycle on $\Delta^3$, because its alternating boundary product is $-1$.
\item Removing one face from $\partial\Delta^3$ produces a $2$-ball and kills $H^2$ with constant coefficients, while barycentric subdivision preserves the class under the canonical cohomology isomorphism.
\end{enumerate}
Thus a certificate should be reported together with a predeclared filtration or sensitivity family of comparison complexes. Persistence across admissible refinements is stronger evidence than nontriviality for one post hoc complex.
\end{proposition}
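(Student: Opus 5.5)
The three parts are independent, so I would prove them in turn using simplicial cochains $C^p(\cdot;A)$ on oriented simplices, with $\delta$ the alternating-face coboundary.

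\emph{Part 1.} The inclusion $i$ is a simplicial map, so restriction $i^*:C^p(L;A)\to C^p(K;A)$, which evaluates a cochain only on simplices of $K$, commutes with $\delta$. Indeed, the faces of a simplex of $K$ lie in $K$, so $(\delta i^*c)(\sigma)=(\delta c)(\sigma)$. Hence $i^*$ sends $Z^2(L;A)$ into $Z^2(K;A)$ and $B^2(L;A)$ into $B^2(K;A)$, and so descends to a map on $H^2$ with $i^*[c_L]=[i^*c_L]$. The persistence statement is the contrapositive: if $[c_L]=0$, then $c_L=\delta b$, so $i^*c_L=\delta(i^*b)$ and the restricted class is zero.

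\emph{Part 2.} Write $\mu_2=\{\pm1\}$ multiplicatively. On $\Delta^3$ the cocycle condition on the unique $3$-simplex $(0123)$ reads
\[
c_{123}\,c_{023}^{-1}\,c_{013}\,c_{012}^{-1}=1.
\]
In $\mu_2$ inverses are trivial, so this condition says that the product of the four face values equals $1$. Any extension of a $\partial\Delta^3$ cocycle to $\Delta^3$ must agree with it on all four faces, because $\Delta^3$ adds only the $3$-simplex and no new $2$-simplices. For the one-negative-face cocycle this product is $-1$, so no extension exists. The same computation shows that this class is nonzero on $\partial\Delta^3$. A coboundary $\delta b$ has face product $\prod_{\text{faces}}\prod_{\text{edges}}b_e^{\pm1}=1$, since each edge lies in exactly two faces and $b_e^2=1$. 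The one-negative-face cocycle fails this, so it is not a coboundary. This agrees with the identification $H^2(\partial\Delta^3;\mu_2)\cong\mu_2$ via the face-product map.

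\emph{Part 3.} Deleting one open $2$-face from $\partial\Delta^3$ leaves three triangles sharing a vertex. This is a triangulated disk, hence contractible, so $H^2(\cdot;A)=0$. For a combinatorial argument with constant coefficients, observe that the complex has no $3$-simplices, so every $2$-cochain is closed. One can then solve $\delta b=c$ face by face: each of the three faces has an edge not shared with the other two (the edges of the deleted face), and setting $b$ on those edges suffices. For subdivision, I would cite the standard fact that the subdivision chain map $\mathrm{Sd}$ and a simplicial approximation of the identity $|\mathrm{Sd}\,K|\to|K|$ induce mutually inverse isomorphisms on simplicial cohomology, both being compatible with the isomorphism to singular cohomology of $|K|$. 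Concretely, the face-product invariant of the pulled-back cocycle equals the product over all small triangles, which is again $-1$, so the class is preserved.

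The concluding reporting recommendation is interpretive and needs no proof. I expect the only real obstacle to be the subdivision claim. The care needed there is to state correctly which canonical isomorphism is meant and to check that it sends the one-negative-face class to the nonzero class, rather than merely asserting that $H^2$ is abstractly isomorphic.
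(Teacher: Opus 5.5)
Your proposal is correct and follows the same route as the paper: restriction is a cochain map (with persistence obtained as the contrapositive), non-extension is the closure condition on the single $3$-simplex of $\Delta^3$, and part~3 rests on contractibility of the triangulated $2$-ball together with subdivision invariance. Your edge-by-edge solution of $\delta b=c$ on the three remaining faces and your face-product check on the subdivided sphere make explicit what the paper states in one line.
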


\begin{proof}
The restriction map is a cochain map and therefore induces the stated map on cohomology. If $[c_L]=0$, then its restriction is zero, proving the contrapositive of the first claim. The second claim is the tetrahedral closure calculation. The third follows from contractibility of a triangulated $2$-ball and homotopy invariance under subdivision.
\end{proof}

\begin{table*}[!tbp]
\centering
\caption{Exact sensitivity of the controlled tetrahedral certificate to the comparison complex. This is a topology audit, not a natural-data experiment.}
\label{tab:tetrahedral-complex-sensitivity}
\small
\begin{tabular}{L{0.23\textwidth}L{0.24\textwidth}L{0.42\textwidth}}
\toprule
Complex & Status of the one-negative-face data & Conclusion \\
\midrule
$\partial\Delta^3$ & closed and non-coboundary & nontrivial class in $H^2(\partial\Delta^3;\mu_2)$ \\
$\Delta^3$ & boundary assignment fails $3$-simplex closure & no extension as a cocycle on the filled simplex \\
$\partial\Delta^3$ with one face removed & restriction lies on a $2$-ball & $H^2=0$; no higher certificate remains \\
Barycentric subdivision of $\partial\Delta^3$ & pullback cocycle & class persists under subdivision \\
\bottomrule
\end{tabular}
\end{table*}

\begin{definition}
A \emph{sheaf of local models} on $\sC_{\mathrm{learn}}$ is a sheaf of sets whose elements over $U$ are local learned objects and whose restriction maps satisfy the sheaf condition. More generally, a \emph{stack of local models} $\sM$ is a category fibered in groupoids satisfying descent; we write $\sM(U)$ for its groupoid of objects over $U$. A stack is the natural ambient structure when local models have nontrivial automorphisms, such as hidden-unit permutations, ReLU scalings, or other gauge symmetries.

Let $\mathcal U=\{U_i\to X\}$ be a cover. A \emph{descent datum} consists of local objects $M_i\in \sM(U_i)$ and transition isomorphisms $g_{ij}:M_i|_{U_{ij}}\longrightarrow M_j|_{U_{ij}}$ satisfying $g_{ji}=g_{ij}^{-1}$. Strict descent requires the cocycle condition $g_{ij}g_{jk}g_{ki}=1$ on every triple overlap $U_{ijk}$. If descent is effective for $\sM$ over $\mathcal U$, such a datum glues to an ordinary global object in $\sM(X)$.

More generally, the triple product may fail by a defect $g_{ij}g_{jk}g_{ki}=c_{ijk}$, where $c_{ijk}\in A(U_{ijk})$ for a coefficient sheaf $A$. If the $c_{ijk}$ form a \v{C}ech 2-cocycle on the cover $\mathcal U$, we write $[c]_{\mathcal U}\in\check H^2(\mathcal U;A)$ for its fixed-cover class. If this class determines a class in the cohomology of the learning site through the relevant comparison map, we denote the resulting class by $[c]\in H^2(\sC_{\mathrm{learn}};A)$. A system of local objects whose gluing is controlled by this class is called a \emph{$[c]$-twisted descent datum}. When the coefficient system and the ambient stack support the standard notion of twisted sheaves, we also call it a \emph{$[c]$-twisted sheaf}. If $A$ is an abelian coefficient sheaf lying in the center of the relevant gauge groups, then we call $[c]$ a central obstruction class. In particular, cyclic coefficient groups such as $\mu_n$ give the basic examples of central obstructions.
\end{definition}

\begin{ex}
    Below are several examples arising in learning theory.
    \begin{itemize}
        \item \emph{Model merging.} The objects $U_i$ are training runs, clients, tasks, or local model charts. The objects $M_i$ are checkpoints or local models, and the transition maps $g_{ij}$ are alignments, such as permutation, orthogonal, diagonal, or monomial alignments. A nontrivial triple product $g_{ij}g_{jk}g_{ki}\neq 1$ is a cycle-consistency or holonomy defect.
        \item \emph{Same-base task vectors and fixed-chart methods.} Suppose that we have chosen a common chart $V\to X$ of the global learning space. A base model $M_0$ is a reference section over this chart. If every local model $M_i$ is represented in the same chart, then one can form the task vector $\Delta_i=M_i-M_0$. Thus Task Arithmetic, TIES, and DARE work after a choice of global trivialization: the transition maps have effectively been fixed in advance. SLERP is similar, except that it uses the chosen chart as a weight geometry in which interpolation is defined. Without a common chart $V\to X$, the expressions $\Delta_i$ and the interpolation paths are not intrinsic.
        \item \emph{Projective latent representations.} If local latent coordinates are comparable only projectively, then the transition maps live in $\PGL_r$. Lifting them to $\GL_r$ may fail by a $\GG_m$-valued $2$-cocycle. This is the obstruction measured by the boundary map arising from
\[
1\to \GG_m\to \GL_r\to \PGL_r\to 1.
\]
        Thus a projective system of local latent representations may define a projective descent datum whose obstruction lies in $H^2(\sC_{\mathrm{learn}};\GG_m)$. The cohomological Brauer group is the torsion subgroup $\Br'(\sC_{\mathrm{learn}}):=H^2(\sC_{\mathrm{learn}};\GG_m)_{\mathrm{tors}}$. The Azumaya Brauer group $\Br(\sC_{\mathrm{learn}})$ maps naturally to $\Br'(\sC_{\mathrm{learn}})$; we identify them only in settings where this map is known to be an isomorphism.
    \end{itemize}
\end{ex}

\begin{remark}[Finite central classes and Brauer classes]
Let $X$ be a scheme, or an algebraic stack for which the Kummer sequence is exact on the chosen \'{e}tale site, and suppose that $n$ is invertible on $X$. Then the Kummer sequence gives an exact sequence
\[
0\longrightarrow \Pic(X)/n\longrightarrow H^2_{\et}(X;\mu_n)\longrightarrow \Br'(X)[n]\longrightarrow 0.
\]
Consequently, a nonzero class in $H^2_{\et}(X;\mu_n)$ need not have nonzero image in $H^2_{\et}(X;\GG_m)$. It may lie in the image of $\Pic(X)/n$. In the finite computational setting, a $\mu_n$-valued certificate should therefore first be called a finite central obstruction on the chosen comparison complex. We use the term Brauer class only after a specified site-level comparison and a nonzero torsion image in $H^2_{\et}(X;\GG_m)$, equivalently a nonzero class in $\Br'(X)$, have been established.
\end{remark}

\begin{theorem}[Descent obstruction]\label{thm:descent-obstruction}
Let $\sC_{\mathrm{learn}}$ be a learning site, and let $\sM$ be a stack of local models with effective strict descent for a cover $\mathcal U=\{U_i\to X\}$. Let $G$ be the sheaf of gauge groups acting on the local objects, and let $A$ be an abelian coefficient sheaf lying in the center of $G$. Suppose that we are given local objects $M_i\in \sM(U_i)$ and transition isomorphisms $g_{ij}:M_i|_{U_{ij}}\longrightarrow M_j|_{U_{ij}}$ such that, on triple overlaps, $g_{ij}g_{jk}g_{ki}=c_{ijk}$ for some $c_{ijk}\in A(U_{ijk})$. Then $c=\{c_{ijk}\}$ is a \v{C}ech $2$-cocycle. Its cohomology class $[c]\in \check H^2(\mathcal U;A)$ is invariant under objectwise gauge changes and under changes of the edge representatives by $A$-valued $1$-cochains.

Moreover, if $[c]=0$, then the transition maps can be corrected by an $A$-valued edge correction so that they satisfy strict descent; hence, by effective descent, the local objects glue to an ordinary global object. If $[c]\neq 0$, then no such edge correction can make the datum into strict descent. Thus the data remain nontrivial as twisted descent data on the chosen cover. If the induced class in the cohomology of the learning site is nonzero, the same conclusion holds at the site level. When the ambient theory admits effective descent for $[c]$-twisted objects, these data define a nontrivial twisted object rather than an ordinary global object.
\end{theorem}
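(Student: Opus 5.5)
The plan is to follow the standard nonabelian-\v{C}ech argument and split it into four steps: the cocycle identity, invariance of the class, the triviality direction, and the nontriviality direction. Throughout, products are path-ordered as in Definition~\ref{def:finite-descent-instance}, with $g_{ji}=g_{ij}^{-1}$, and $A$ is central in $G$. Centrality lets us move the scalars $c_{ijk}$ freely inside any product of transitions.

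\textbf{Cocycle identity.} Fix a quadruple overlap $U_{ijkl}$ and restrict everything to it. Insert $g_{ki}g_{ik}=1$ into the triangle residuals to get $c_{ijk}=g_{ij}g_{jk}g_{ki}$ and $c_{ikl}=g_{ik}g_{kl}g_{li}$. Multiply and use centrality to regroup:
\[
c_{ijk}c_{ikl}=g_{ij}g_{jk}g_{kl}g_{li}.
\]
Similarly, $c_{jkl}=g_{jk}g_{kl}g_{lj}$ is central, hence invariant under conjugation by $g_{ij}$. This gives
\[
c_{ijl}c_{jkl}=g_{ij}\,c_{jkl}\,g_{jl}g_{li}=g_{ij}g_{jk}g_{kl}g_{lj}g_{jl}g_{li}=g_{ij}g_{jk}g_{kl}g_{li}.
\]
Comparing the two displays yields $c_{jkl}c_{ikl}^{-1}c_{ijl}c_{ijk}^{-1}=1$, which is $\delta c=1$ written multiplicatively in the abelian group $A$. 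Before this, I will check that the residual has the alternating symmetries. For instance, $c_{jki}=g_{jk}g_{ki}g_{ij}$ is a conjugate of $c_{ijk}$, so it equals $c_{ijk}$ by centrality, and $c_{ikj}=c_{ijk}^{-1}$. This ensures $c$ is a genuine alternating \v{C}ech $2$-cochain.

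\textbf{Invariance of the class.} First take an objectwise gauge change $g_{ij}\mapsto \phi_i^{-1}g_{ij}\phi_j$ with $\phi_i\in G(U_i)$. The triangle product conjugates by $\phi_i$, so by centrality $c$ is unchanged exactly, not merely up to coboundary. Next take an edge change $g_{ij}\mapsto a_{ij}g_{ij}$ with $a\in C^1(\mathcal U;A)$ and $a_{ji}=a_{ij}^{-1}$. Centrality gives $c\mapsto (\delta a)\,c$, so the class $[c]$ is preserved.

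\textbf{Triviality direction.} If $[c]=0$, write $c=\delta a$ and replace $g_{ij}$ by $a_{ij}^{-1}g_{ij}$. The new residual is $(\delta a)^{-1}c=1$, so the corrected datum satisfies strict descent. Effective strict descent for $\sM$ over $\mathcal U$ then produces an ordinary global object.

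\textbf{Nontriviality direction.} This is the contrapositive of the previous computation: any $A$-valued edge correction changes $c$ only by a coboundary, so strict descent is reachable only if $[c]=0$. The site-level statement follows by the same reasoning. A site-level class is the image of $[c]$ under the comparison map, so if it is nonzero then $[c]_{\mathcal U}\neq0$, and the cover-level conclusion applies. The final twisted-object sentence is definitional, given the assumed effective twisted descent.

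The main obstacle is bookkeeping rather than ideas. One must keep the path-order convention and the inverse convention consistent so that the multiplicative cocycle identity comes out with the correct signs. Each use of centrality must also be justified after restricting to the correct overlap. In particular, the $c$'s act as scalars on $M_i|_{U_{ijkl}}$, and this requires the identification of $A$ with the centre of the automorphism groups at each object.
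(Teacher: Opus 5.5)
Your proposal is correct and follows essentially the same route as the paper: the quadruple-overlap associativity computation that uses centrality to move $c_{jkl}$ past $g_{ij}$, invariance via conjugation together with the formula $c\mapsto(\delta a)c$ for edge changes, and the coboundary correction with its contrapositive for the converse. Two refinements go beyond what the paper writes out explicitly: your check of the alternating symmetries $c_{jki}=c_{ijk}$ and $c_{ikj}=c_{ijk}^{-1}$, and your remark that centrality presupposes compatible identifications of $A$ across the objects $M_i$.
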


\begin{proof}
We omit restriction symbols from the notation. Since $A$ lies in the center of the gauge group, the defect terms $c_{ijk}$ commute with the transition maps. The relation $g_{ij}g_{jk}g_{ki}=c_{ijk}$ is equivalently $g_{ij}g_{jk}=c_{ijk}g_{ik}$. On a quadruple overlap $U_{ijkl}$, compare the product $g_{ij}g_{jk}g_{kl}$ in two ways. First,
\[
(g_{ij}g_{jk})g_{kl}=c_{ijk}g_{ik}g_{kl}=c_{ijk}c_{ikl}g_{il}.
\]
Second,
\[
g_{ij}(g_{jk}g_{kl})=g_{ij}c_{jkl}g_{jl}=c_{jkl}g_{ij}g_{jl}=c_{jkl}c_{ijl}g_{il}.
\]
So $c_{ijk}c_{ikl}=c_{jkl}c_{ijl}$. This is exactly the \v{C}ech $2$-cocycle condition for the multiplicative cochain $c=\{c_{ijk}\}$, up to the standard convention for ordering the indices. Thus $c$ defines a class $[c]\in \check H^2(\mathcal U;A)$.

We next check gauge invariance. A change of local gauges replaces the transition maps by conjugate transition maps. Since $A$ is central, the corresponding defect terms are unchanged. More generally, if we modify the transition maps by an $A$-valued $1$-cochain $a=\{a_{ij}\}$, setting $g'_{ij}=a_{ij}g_{ij}$, then the new defect is
\[
c'_{ijk}=a_{ij}a_{jk}a_{ki}c_{ijk}=(\delta a)_{ijk}c_{ijk}.
\]
Thus the cocycle changes only by a \v{C}ech coboundary, so the class $[c]$ is well-defined.

If $[c]=0$, then there exists an $A$-valued $1$-cochain $a=\{a_{ij}\}$ such that $(\delta a)_{ijk}=c_{ijk}^{-1}$. For the corrected transition maps $g'_{ij}=a_{ij}g_{ij}$, the new defect is $c'_{ijk}=(\delta a)_{ijk}c_{ijk}=1$. Hence the corrected transition maps satisfy strict descent. Since strict descent is effective for $\sM$ over the cover, the local objects glue to an ordinary global object.

Conversely, if some $A$-valued edge correction made the transition maps satisfy strict descent, then the new defect would be trivial. The formula above would then imply that $c$ differs from $1$ by a \v{C}ech coboundary, so $[c]=0$. Therefore, if $[c]\neq 0$, no such edge correction is possible. The datum remains nontrivial as twisted descent data on the chosen cover. If the induced class in $H^2(\sC_{\mathrm{learn}};A)$ is nonzero, the same obstruction persists at the site level. Existence of a corresponding twisted object is a separate effectivity statement in the ambient theory.
\end{proof}

\begin{remark}
Under the centrality assumption in Theorem~\ref{thm:descent-obstruction}, the defect terms take values in an abelian coefficient sheaf $A$, or more generally in a central subsheaf of the gauge group. They form an ordinary \v{C}ech $2$-cocycle on the chosen cover and define a class $[c]_{\mathcal U}\in\check H^2(\mathcal U;A)$. When the corresponding comparison map is available, this fixed-cover class determines a class $[c]\in H^2(\sC_{\mathrm{learn}};A)$. This is the situation modeled by the controlled central and projective examples, including the $\mu_n$- and $\GG_m$-valued cases.

For a genuinely nonabelian gauge group $G$, the same formula should not be read as an ordinary abelian cohomology class. If $g_{ij}g_{jk}=c_{ijk}g_{ik}$, then on quadruple overlaps the terms $c_{ijk}$ satisfy a twisted nonabelian compatibility condition, involving conjugation by the transition maps:
\[
c_{ijk}c_{ikl}=g_{ij}c_{jkl}g_{ij}^{-1}c_{ijl}.
\]
Thus the obstruction is a nonabelian descent or holonomy object rather than an element of ordinary $H^2(\sC_{\mathrm{learn}};G)$.

Equivalently, in the usual language of nonabelian cohomology and gerbes, one does not only remember a $G$-valued $2$-cocycle. One also has a \emph{band}, or outer automorphism datum, recording how the local copies of $G$ are identified up to inner automorphism. This is why outer automorphism groups naturally appear in nonabelian descent. Once the band is fixed, the remaining central ambiguity is controlled by the center $Z(G)$, and central obstruction classes live in ordinary abelian cohomology with coefficients in this center or in the corresponding central coefficient sheaf. For this theory, see \cite{Giraud_71}.

In our terminology, the central/projective experiments are modeled by the ordinary $H^2$ theory above. By contrast, the executed two-loop $S_3$ and $D_4$ experiment should be read as a nonabelian holonomy experiment. The branch construction does not kill an abelian $H^2$ class; rather, it represents the nontrivial holonomy on branches and then applies invariant pooling. This is the condition $P\rho(h)=P$, which makes the holonomy invisible after pooling even though it remains nontrivial before pooling. In the executed construction, this is a structural certificate and not an accuracy-advantage result.
\end{remark}

\subsection{The computational aspect of the theory}

In this subsection, we discuss the computational implementation of the descent-theoretic framework. The goal is to explain how the abstract objects above become concrete procedures for model merging. In particular, we describe greedy soup as fixed-chart search over a finite family of candidate predictors, analogous to empirical global-section search; task-vector methods as fixed-chart descent; ReLU gauge alignment as a monomial-gauge synchronization problem; the $H^2(\mu_2)$ certificate as a central obstruction test; and the period-index mechanism as an arithmetic gate for admissible projective ranks in the controlled systems.

Let $K$ be either the nerve of a specified finite cover or a finite comparison complex in the sense of Definition~\ref{def:finite-comparison-complex}.

\begin{definition}
\label{def:triangle-residual}
Let
\[
\mathfrak D=\left(K,\{M_i\}_{i\in K_0},\{D_\sigma\}_{\sigma\in K},\{g_{ij}\}_{(ij)\in K_1}, A \right)
\]
be a finite model-merging descent instance, and let $G$ be the structure group containing the transition maps. We assume that every unoriented edge is given one orientation and that $g_{ji}=g_{ij}^{-1}$ in the exact setting.

For an oriented face $(ijk)\in K_2$, define the \emph{triangle residual}
\[
h_{ijk}=g_{ij}g_{jk}g_{ki}=g_{ij}g_{jk}g_{ik}^{-1}.
\]
The element $h_{ijk}\in G$ measures the failure of the three pairwise transition maps to satisfy strict descent on the face $(ijk)$. The collection $h=\{h_{ijk}\}_{(ijk)\in K_2}$ is initially a collection of $G$-valued triangle holonomies.
\end{definition}

\begin{definition}
\label{def:central-residual-cochain}
Let $A\subseteq Z(G)$ be an abelian central subgroup. We say that the triangle residual is \emph{exactly $A$-central} if $h_{ijk}\in A$ for every oriented face $(ijk)\in K_2$.

In this case, the assignment $c_{ijk}=h_{ijk}$ defines a multiplicative 2-cochain $c\in C^2(K;A)$. We call $c$ the \emph{central residual 2-cochain}. If the residuals are only approximately central, and if a projection or rounding map $\pi_A:G\longrightarrow A$ has been fixed, then one may define a projected 2-cochain $\widehat c_{ijk}=\pi_A(h_{ijk})$. The projected cochain $\widehat c$ is a numerical diagnostic.
\end{definition}

\begin{definition}
\label{def:cocycle-closure}
Assume that $A$ is a constant abelian coefficient group. For a multiplicative 2-cochain $c\in C^2(K;A)$, its coboundary is the 3-cochain $\delta_2 c\in C^3(K;A)$ given on an oriented 3-simplex $(ijkl)$ by
\[
(\delta_2c)_{ijkl}=c_{jkl}c_{ikl}^{-1}c_{ijl}c_{ijk}^{-1}.
\]
The cochain $c$ is a 2-cocycle if and only if $(\delta_2c)_{ijkl}=1_A$ for every oriented 3-simplex $(ijkl)\in K_3$.

Thus
\[
Z^2(K;A)=\ker\left(\delta_2:C^2(K;A)\longrightarrow C^3(K;A) \right).
\]
Only when $c\in Z^2(K;A)$ does it determine a cohomology class
\[
[c]\in H^2(K;A)=Z^2(K;A)/B^2(K;A).
\]
\end{definition}

\begin{remark}
\label{rem:coefficient-systems-closure}
Definition~\ref{def:cocycle-closure} is written for a constant abelian coefficient group $A$. If $A$ is a nonconstant coefficient system, local system, or sheaf of central groups, the formula for $\delta_2$ must include the appropriate restriction or transport maps.
\end{remark}

\begin{proposition}
\label{prop:central-residual-closed}
Let $(ijkl)\in K_3$ be an oriented 3-simplex. Assume that:
\begin{enumerate}
\item the maps $g_{ab}$ are all defined on a common quadruple comparison
context $D_{ijkl}$;
\item $g_{ba}=g_{ab}^{-1}$;
\item for every oriented face $(abc)$ of $(ijkl)$, the triangle residual $c_{abc}=g_{ab}g_{bc}g_{ac}^{-1}$ lies in the central subgroup $A\subseteq Z(G)$.
\end{enumerate}
Then
\[
(\delta_2c)_{ijkl}=c_{jkl}c_{ikl}^{-1}c_{ijl}c_{ijk}^{-1}=1_A.
\]
Consequently, exact central triangle residuals obtained from a coherent
collection of transition maps on common higher overlaps define a
2-cocycle.
\end{proposition}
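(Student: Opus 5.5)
The plan is to reproduce, on the single simplex $(ijkl)$, the associativity computation from the proof of Theorem~\ref{thm:descent-obstruction}, now written in the path-order convention of Definition~\ref{def:finite-descent-instance}. After that, the only work is to match the resulting identity with the sign pattern of $\delta_2$ in Definition~\ref{def:cocycle-closure}.

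Hypothesis (1) guarantees that every product below is taken on the common context $D_{ijkl}$, so the expressions are comparable. Hypothesis (3) says $c_{abc}=g_{ab}g_{bc}g_{ac}^{-1}$. Multiplying on the right by $g_{ac}$ gives the rewriting rule
\[
g_{ab}g_{bc}=c_{abc}\,g_{ac}.
\]
This rule holds for each of the four faces $(jkl)$, $(ikl)$, $(ijl)$, $(ijk)$. Hypothesis (2) is used implicitly: it makes $g_{ac}^{-1}$ the same map as $g_{ca}$, so this $c_{abc}$ agrees with $h_{abc}$ from Definition~\ref{def:triangle-residual}.

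Next I evaluate the path product $g_{ij}g_{jk}g_{kl}$ in two ways.

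\emph{Grouping the first two factors.} This gives $(g_{ij}g_{jk})g_{kl}=c_{ijk}g_{ik}g_{kl}$. Applying the rule to $(ikl)$ turns this into $c_{ijk}c_{ikl}g_{il}$.

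\emph{Grouping the last two factors.} This gives $g_{ij}(g_{jk}g_{kl})=g_{ij}c_{jkl}g_{jl}$. Because $c_{jkl}\in A\subseteq Z(G)$, it can be moved to the front, giving $c_{jkl}g_{ij}g_{jl}$. Applying the rule to $(ijl)$ then gives $c_{jkl}c_{ijl}g_{il}$.

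Centrality is the essential step: without it, one only gets the twisted condition of the remark following Theorem~\ref{thm:descent-obstruction}. Cancelling the invertible $g_{il}$ on the right of the two expressions yields
\[
c_{ijk}c_{ikl}=c_{jkl}c_{ijl}.
\]

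Since $A$ is abelian, this identity rearranges to
\[
c_{jkl}c_{ikl}^{-1}c_{ijl}c_{ijk}^{-1}=1_A,
\]
which is $(\delta_2 c)_{ijkl}=1_A$. The final claim then follows by applying the argument to every $3$-simplex of $K$ and invoking Definition~\ref{def:cocycle-closure}.

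The main obstacle is bookkeeping rather than mathematics. Two points need care. First, the path-order convention must be applied consistently, including the order in which central scalars appear; here this is harmless because $A$ is central. Second, the orientation of each face of $(ijkl)$ must match the signs in $\delta_2$. In particular, I would check that $c_{abc}$ transforms as an alternating cochain under reordering of its vertices, using hypothesis (2), so that the face orientations used above are exactly the ones appearing in the formula for $\delta_2$.
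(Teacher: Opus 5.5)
Your proposal is correct and follows the paper's proof essentially line for line. You expand $g_{ij}g_{jk}g_{kl}$ in both groupings, use centrality of $c_{jkl}$ to move it to the front, cancel $g_{il}$ to get $c_{ijk}c_{ikl}=c_{jkl}c_{ijl}$, and rearrange in the abelian group $A$; the alternating-cochain check you flag at the end is unnecessary, because the four faces you used already carry exactly the orientations $(jkl),(ikl),(ijl),(ijk)$ that appear in the formula for $\delta_2$.
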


\begin{proof}
Using $g_{ij}g_{jk}=c_{ijk}g_{ik}$ and $g_{ik}g_{kl}=c_{ikl}g_{il}$, we obtain
\[
(g_{ij}g_{jk})g_{kl}=c_{ijk}c_{ikl}g_{il}.
\]
On the other hand, $g_{jk}g_{kl}=c_{jkl}g_{jl}$, and hence
\[
g_{ij}(g_{jk}g_{kl})= g_{ij}c_{jkl}g_{jl}.
\]
Since $c_{jkl}$ is central, this equals
\[
c_{jkl}g_{ij}g_{jl}=c_{jkl}c_{ijl}g_{il}.
\]
Associativity gives $c_{ijk}c_{ikl}=c_{jkl}c_{ijl}$. Since all four terms lie in the abelian group $A$, this is equivalent to $c_{jkl}c_{ikl}^{-1}c_{ijl} c_{ijk}^{-1}=1_A$.
\end{proof}

\begin{proposition}\label{prop:constant-edge-tetrahedron}
Let $K=\partial\Delta^3$, let $G$ be a group, and let $A\subseteq Z(G)$ be an abelian central subgroup. Assign one context-independent element $g_{ij}\in G$ to every oriented edge, with $g_{ji}=g_{ij}^{-1}$. Suppose that
\[
h_{ijk}=g_{ij}g_{jk}g_{ki}\in A
\]
for all four faces. Then
\[
h_{123}h_{023}^{-1}h_{013}h_{012}^{-1}=1.
\]
Consequently, for constant coefficients $A$, the resulting face cochain represents the zero class in $H^2(\partial\Delta^3;A)$.
\end{proposition}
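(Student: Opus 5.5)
The plan is to reduce the statement to Proposition~\ref{prop:central-residual-closed}. The only thing that proposition needs beyond centrality of the residuals is that all six transition maps on the four vertices live in a common context, so that products like $g_{ij}g_{jk}g_{kl}$ make sense and associativity can be used. With context-independent edge elements $g_{ij}\in G$ this holds automatically. All products are taken in the single group $G$, so the three hypotheses of Proposition~\ref{prop:central-residual-closed} hold for the oriented $3$-simplex $(0123)$, even though that simplex is absent from $K=\partial\Delta^3$. The point is that closure of the cocycle is a property of the group elements, not of the incidence data.

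Concretely, I would first rewrite each face relation as $g_{ab}g_{bc}=h_{abc}g_{ac}$, using $g_{ca}=g_{ac}^{-1}$. Then I would expand the triple product $g_{01}g_{12}g_{23}$ in two ways. Bracketing it as $(g_{01}g_{12})g_{23}$ gives $h_{012}h_{023}g_{03}$. Bracketing it as $g_{01}(g_{12}g_{23})$ gives $g_{01}h_{123}g_{13}$. Since $h_{123}$ is central, this equals $h_{123}g_{01}g_{13}=h_{123}h_{013}g_{03}$. Cancelling $g_{03}$ and using that $A$ is abelian gives $h_{012}h_{023}=h_{123}h_{013}$, which rearranges to $h_{123}h_{023}^{-1}h_{013}h_{012}^{-1}=1$. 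This is exactly $(\delta_2 h)_{0123}=1$ in the notation of Definition~\ref{def:cocycle-closure}.

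For the cohomological conclusion, I would use the standard computation $H^2(\partial\Delta^3;A)\cong A$ for constant $A$. The isomorphism is given by evaluating a $2$-cochain on the fundamental cycle $[123]-[023]+[013]-[012]$, which in multiplicative notation is the alternating product $h_{123}h_{023}^{-1}h_{013}h_{012}^{-1}$. To justify this, I would check two things. First, every coboundary $\delta a$ of a $1$-cochain has trivial alternating product, because each edge appears in two faces with opposite signs. Second, the evaluation map $C^2(\partial\Delta^3;A)\to A$ is surjective with kernel exactly $B^2$. For the second point, a cochain with trivial alternating product can be written as $\delta a$ by solving directly: set $a$ trivial on a spanning tree, say the edges $01,02,03$, and solve for $a_{12},a_{13},a_{23}$ from three of the faces. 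The fourth face is then forced by the trivial product. Given this, the identity from the previous paragraph shows that the face cochain lies in $B^2$, so its class is zero.

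I expect the main obstacle to be keeping orientations and signs consistent between the path-order convention $h_{ijk}=g_{ij}g_{jk}g_{ki}$ and the cochain formula for $\delta_2$. In particular, I need to confirm that the alternating product in the statement is the evaluation on the fundamental class and not its inverse; the conclusion is the same either way. The rest is routine.
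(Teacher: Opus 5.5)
Your proposal is correct and follows the paper's proof essentially verbatim. The paper also expands $g_{01}g_{12}g_{23}$ in both bracketings, uses centrality of $h_{123}$ to get $h_{012}h_{023}=h_{123}h_{013}$, and reads the alternating product as evaluation on the fundamental class of $\partial\Delta^3$; you differ only in justifying that last step explicitly (coboundaries evaluate trivially, and a spanning-tree solve on edges $01,02,03$ shows the kernel of evaluation is exactly $B^2$), which the paper simply asserts.
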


\begin{proof}
From the definitions,
\[
g_{01}g_{12}=h_{012}g_{02},\qquad g_{02}g_{23}=h_{023}g_{03},
\]
so
\[
(g_{01}g_{12})g_{23}=h_{012}h_{023}g_{03}.
\]
On the other hand,
\[
g_{12}g_{23}=h_{123}g_{13},\qquad g_{01}g_{13}=h_{013}g_{03},
\]
and centrality of $h_{123}$ gives
\[
g_{01}(g_{12}g_{23})=h_{123}h_{013}g_{03}.
\]
Associativity therefore implies $h_{012}h_{023}=h_{123}h_{013}$, which is equivalent to the stated alternating product identity. This identity is the evaluation of the face cochain on the fundamental class of $\partial\Delta^3$, so the class is zero.
\end{proof}

\begin{remark}
The one-negative-face $\mu_2$ cocycle used below is a cocycle-level oracle witness. It cannot be produced by six globally composable, context-independent edge matrices satisfying the hypotheses of Proposition~\ref{prop:constant-edge-tetrahedron}. A genuine \v{C}ech realization requires overlap-dependent transition sections $g_{ij}:D_{ij}\to G$, whose restrictions to different triple contexts need not be represented by one common matrix, or a projective bundle construction in which the missing fourfold comparison context is part of the geometry.
\end{remark}

\begin{remark}\label{rem:closure-not-automatic}
Proposition~\ref{prop:central-residual-closed} applies to exact transition maps defined coherently on common higher overlaps. Its hypotheses need not hold in a numerical model-merging problem.

Closure may fail, or may be uncertified, when: \begin{enumerate}
\item the opposite edge maps are fitted independently and $g_{ji}\neq g_{ij}^{-1}$;
\item the triangle residuals are not central;
\item a nearest-central projection $\pi_A$ is applied to noncentral residuals;
\item different faces are evaluated on different comparison data and do not restrict to a common quadruple context;
\item the transition maps are known only approximately;
\item the coefficient identifications vary from face to face.
\end{enumerate}

For these reasons, the output of a numerical central projection should first be called a projected 2-cochain. It should be called a cocycle, obstruction class, or Brauer-type class only after a cocycle-closure test has been passed.
\end{remark}

\begin{definition}[Distance to coboundaries and obstruction norm]
\label{def:closed-cochain-obstruction}
Let $V$ be a finite-dimensional normed vector space in which the central coefficient data have been encoded additively. Consider the cochain complex
\[
C^1(K;V)\xrightarrow{\delta_1}C^2(K;V) \xrightarrow{\delta_2}C^3(K;V),
\]
and write
\[
Z^2(K;V)=\ker\delta_2, \qquad B^2(K;V)=\operatorname{im}\delta_1.
\]

For an arbitrary 2-cochain $u\in C^2(K;V)$, define its distance to
coboundaries by
\[
\Def(u):=\operatorname{dist}\left(u,B^2(K;V)\right)=\inf_{a\in C^1(K;V)}\left\lVert u-\delta_1a\right\rVert.
\]
Thus, for every $a\in C^1(K;V)$,
\[
\left\lVert u-\delta_1a\right\rVert\geq\Def(u).
\]

If $u$ is closed, so that $u\in Z^2(K;V)$, then $\Def(u)$ depends only on the cohomology class
\[
[u]\in H^2(K;V)=Z^2(K;V)/B^2(K;V).
\]
In this case, we may also write $\Obs([u]):= \Def(u)$ and call this quantity the obstruction norm of the class $[u]$. Since the cochain spaces are finite-dimensional,
\[
\Obs([u])=0\quad\Longleftrightarrow\quad[u]=0.
\]

If $u$ is not known to be closed, then $\Def(u)$ is only a distance-to-coboundaries diagnostic. It is not the norm of a cohomology class, because $[u]$ is not defined unless $\delta_2u=0$.
\end{definition}

We record three elementary facts that will be used below.

\begin{proposition}[Defect distance and stability]\label{prop:defect-distance}
Let $K$ be finite.
\begin{enumerate}
    \item Suppose that $V$ is a finite-dimensional inner-product space and that $C^\bullet(K;V)$ is equipped with the induced finite cochain inner product. Let
\[
\mathcal H^2(K;V)=\ker\delta_2\cap\ker\delta_1^*
\]
    be the space of harmonic 2-cochains. If $c\in Z^2(K;V)$, then
\[
\Def(c)=\Obs([c])=\left\lVert\Pi_{\mathcal H^2(K;V)}(c)\right\rVert,
\]
    where $\Pi_{\mathcal H^2(K;V)}$ denotes orthogonal projection onto $\mathcal H^2(K;V)$.

    \item If $c,\widehat c\in C^2(K;V)$ and $\|\widehat c-c\|\leq \varepsilon$, then
\[
|\Def(\widehat c)-\Def(c)|\leq \varepsilon.
\]
    In particular, if $\Def(c)>2\varepsilon$, then $\Def(\widehat c)>\varepsilon$. Hence the observed residual cannot be explained by an $\varepsilon$-sized perturbation of a trivial coboundary.

    \item Let $g_{ij},g_{jk},g_{ki}$ and $\widehat g_{ij},\widehat g_{jk},\widehat g_{ki}$ be matrices satisfying
\[
\|g_{ab}\|\leq M,\qquad \|\widehat g_{ab}-g_{ab}\|\leq \sigma
\]
    for the three edges $(ab)=(ij),(jk),(ki)$. Then
\[
\|\widehat g_{ij}\widehat g_{jk}\widehat g_{ki}-g_{ij}g_{jk}g_{ki}\|\leq 3M^2\sigma+3M\sigma^2+\sigma^3.
\]
\end{enumerate}
\end{proposition}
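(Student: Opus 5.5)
The three parts are independent, and I would prove them in order. Only the first needs any real linear algebra.

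For part 1, I would use the orthogonal Hodge decomposition of the finite-dimensional cochain space. Since $\delta_2\delta_1=0$, the image $B^2=\operatorname{im}\delta_1$ is a subspace of $Z^2=\ker\delta_2$. Inside $Z^2$, the orthogonal complement of $\operatorname{im}\delta_1$ is $Z^2\cap(\operatorname{im}\delta_1)^\perp=\ker\delta_2\cap\ker\delta_1^*=\mathcal H^2(K;V)$. This gives $Z^2=B^2\oplus\mathcal H^2$ orthogonally.

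For $c\in Z^2$, write $c=\delta_1 a_0+\Pi_{\mathcal H^2}c$. The distance from a point to a subspace of a finite-dimensional inner-product space is attained at the orthogonal projection, and $c-\Pi_{B^2}c=\Pi_{\mathcal H^2}c$ because $c$ already lies in $Z^2$. Therefore
\[
\Def(c)=\|\Pi_{\mathcal H^2}c\|,
\]
and this equals $\Obs([c])$ by Definition~\ref{def:closed-cochain-obstruction}.

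The one point to check is that the projection of $c$ onto $B^2$ taken in all of $C^2$ agrees with the one taken inside $Z^2$. It does, since $B^2\subseteq Z^2$ and $c\in Z^2$, so the $B^2$-component of $c$ is the same whichever ambient space we use. I expect this bookkeeping to be the main place where care is needed, though it is not deep.

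For part 2, I would prove that the distance to any nonempty set is $1$-Lipschitz. For every $a\in C^1(K;V)$, the triangle inequality gives
\[
\Def(\widehat c)\le\|\widehat c-\delta_1a\|\le\|\widehat c-c\|+\|c-\delta_1a\|.
\]
Taking the infimum over $a$ yields $\Def(\widehat c)\le\varepsilon+\Def(c)$. Exchanging the roles of $c$ and $\widehat c$ gives the reverse inequality, so $|\Def(\widehat c)-\Def(c)|\le\varepsilon$. The consequence is then immediate: if $\Def(c)>2\varepsilon$, then $\Def(\widehat c)\ge\Def(c)-\varepsilon>\varepsilon$.

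For part 3, I would write $\widehat g_{ab}=g_{ab}+E_{ab}$ with $\|E_{ab}\|\le\sigma$. The norm is assumed submultiplicative, i.e. an operator norm. Expanding the product $(g_{ij}+E_{ij})(g_{jk}+E_{jk})(g_{ki}+E_{ki})$ gives eight terms. Subtracting the unperturbed term $g_{ij}g_{jk}g_{ki}$ leaves:
\begin{itemize}
\item three terms containing exactly one $E$, each bounded by $M^2\sigma$;
\item three terms containing exactly two $E$'s, each bounded by $M\sigma^2$;
\item one term containing three $E$'s, bounded by $\sigma^3$.
\end{itemize}
The triangle inequality then gives the bound $3M^2\sigma+3M\sigma^2+\sigma^3$, which equals $(M+\sigma)^3-M^3$.
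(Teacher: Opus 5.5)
Your proposal is correct and is essentially the paper's proof. Parts (2) and (3) use the same $1$-Lipschitz distance argument and the same seven-term expansion with submultiplicativity. In part (1) you derive the orthogonal splitting $Z^2=B^2\oplus\mathcal H^2$ directly from $(\operatorname{im}\delta_1)^\perp=\ker\delta_1^*$, whereas the paper quotes $C^2=\operatorname{im}\delta_1\oplus\mathcal H^2\oplus\operatorname{im}\delta_2^*$ and discards the last summand; this is the same argument in self-contained form.

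One small wording fix: part (3) needs only a submultiplicative norm, which need not be an operator norm (the Frobenius norm is an example), so your ``i.e.'' should read ``e.g.''.
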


\begin{proof}
For part (1), we use the standard finite cochain Hodge decomposition for
simplicial complexes; see, for example, \cite{Dodziuk1976Finite,HorakJost2013Spectra}. Since $K$ is
finite and $V$ is finite-dimensional, all cochain spaces are
finite-dimensional inner-product spaces, and one has the orthogonal
decomposition
\[
C^2(K;V)=\operatorname{im}\delta_1\oplus\mathcal H^2(K;V)\oplus\operatorname{im}\delta_2^*.
\]
Because $c$ is closed, one has
\[
c\in\ker\delta_2=\left(\operatorname{im}\delta_2^*\right)^\perp.
\]
Therefore the $\operatorname{im}\delta_2^*$ component of $c$ vanishes, and there exist $a\in C^1(K;V)$ and $h\in\mathcal H^2(K;V)$ such that $c=\delta_1a+h$. The two terms are orthogonal. Hence the closest element of $\operatorname{im}\delta_1$ to $c$ is $\delta_1a$, and therefore
\[
\Def(c)=\operatorname{dist}\left(c,\operatorname{im}\delta_1\right)=\lVert h\rVert.
\]
Since $h=\Pi_{\mathcal H^2(K;V)}(c)$, we obtain
\[
\Def(c)=\Obs([c])=\left\lVert\Pi_{\mathcal H^2(K;V)}(c)\right\rVert.
\]

For (2), the distance to any fixed subset of a normed vector space is
$1$-Lipschitz \cite{BauschkeCombettes2017}. Indeed, for any $a\in C^1(K;V)$,
\[
\|\widehat c-\delta_1 a\|\leq\|\widehat c-c\|+\|c-\delta_1 a\|.
\]
Taking the infimum over $a$ gives $\Def(\widehat c)\leq \varepsilon+\Def(c)$. Interchanging $c$ and $\widehat c$ gives $\Def(c)\leq \varepsilon+\Def(\widehat c)$. Hence
\[
|\Def(\widehat c)-\Def(c)|\leq \varepsilon.
\]
If $\Def(c)>2\varepsilon$, then $\Def(\widehat c)\geq \Def(c)-\varepsilon>\varepsilon$.

For (3), write
\[
\widehat g_{ab}=g_{ab}+e_{ab}, \qquad \|e_{ab}\|\leq \sigma.
\]
Then
\begin{align*}
\widehat g_{ij}\widehat g_{jk}\widehat g_{ki} - g_{ij}g_{jk}g_{ki} ={}&
e_{ij}g_{jk}g_{ki} + g_{ij}e_{jk}g_{ki} \\
&+ g_{ij}g_{jk}e_{ki} + e_{ij}e_{jk}g_{ki} \\
&+ e_{ij}g_{jk}e_{ki} + g_{ij}e_{jk}e_{ki} \\
&+ e_{ij}e_{jk}e_{ki}.
\end{align*}
Using submultiplicativity of the matrix norm \cite{HornJohnson2012Matrix}, the first three terms are each bounded by $M^2\sigma$, the next three terms are each bounded by $M\sigma^2$, and the last term is bounded by $\sigma^3$. So
\[
\|\widehat g_{ij}\widehat g_{jk}\widehat g_{ki} - g_{ij}g_{jk}g_{ki}\| \leq 3M^2\sigma+3M\sigma^2+\sigma^3.
\]
This proves the proposition.
\end{proof}

\begin{proposition}[Margin-consistent distance decision]\label{prop:margin-obstruction-decision}
Let $V$ be a finite-dimensional normed coefficient space, let $c\in C^2(K;V)$, and let $\widehat c$ satisfy
\[
\Pr\bigl(\|\widehat c-c\|\leq\varepsilon\bigr)\geq 1-\delta.
\]
Write $\Delta=\dist(c,B^2(K;V))$. Consider the decision rule that declares a positive distance from coboundaries when $\Def(\widehat c)>\tau$. If $c\in B^2(K;V)$ and $\tau>\varepsilon$, then the rule makes no false positive-distance declaration on the stated event. If $\Delta>0$ and
\[
\varepsilon<\tau<\Delta-\varepsilon,
\]
then the rule correctly detects positive distance from coboundaries on the same event. If, in addition, both the coefficient identification and cocycle closure of $\widehat c$ are certified, this positive-distance conclusion may be promoted to a nontrivial class in $H^2(K;V)$.
\end{proposition}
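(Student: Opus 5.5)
The plan is to work entirely on the event $E=\{\|\widehat c-c\|\leq\varepsilon\}$, which has probability at least $1-\delta$. On $E$ I will invoke Proposition~\ref{prop:defect-distance}(2), which gives $|\Def(\widehat c)-\Def(c)|\leq\varepsilon$. Note that $\Def(c)=\Delta$ by definition, since $\Def$ is exactly the distance to $B^2(K;V)$. This inequality is valid for arbitrary cochains, not only closed ones, so the first two claims require no closure hypothesis. This matches the wording ``positive distance from coboundaries'' rather than ``nontrivial class''.

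For the no-false-positive claim, suppose $c\in B^2(K;V)$. Then $\Delta=0$, so on $E$ we have $\Def(\widehat c)\leq\varepsilon<\tau$, and the rule does not fire. For the detection claim, suppose $\Delta>0$ and $\varepsilon<\tau<\Delta-\varepsilon$. Then on $E$ we get $\Def(\widehat c)\geq\Delta-\varepsilon>\tau$, so the rule fires. The lower bound $\tau>\varepsilon$ in this condition is not needed for detection; it is included only so that the same threshold also satisfies the first claim. The interval is nonempty exactly when $\Delta>2\varepsilon$, which matches the remark after Proposition~\ref{prop:defect-distance}(2).

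The promotion step is the only one requiring care, and it is the main obstacle, because it is a matter of correctly stating what is being transferred rather than of computation. Suppose closure of $\widehat c$ is certified, so that $\widehat c\in Z^2(K;V)$, and coefficient identification is certified, so that $V$ and the projection encode the intended central group. Then $[\widehat c]\in H^2(K;V)$ is defined, and Definition~\ref{def:closed-cochain-obstruction} gives $\Obs([\widehat c])=\Def(\widehat c)>\tau>0$. By the equivalence $\Obs=0\Leftrightarrow[\cdot]=0$, which holds because $B^2$ is a closed subspace in finite dimensions, the class is nonzero. Alternatively, if closure is certified for $c$, meaning $c$ is the closed target, the same argument applied to $c$ uses $\Delta>0$ directly. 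I would state the result for whichever cochain the certified closure test applies to, and note that in either case no further probabilistic input beyond $E$ is needed.
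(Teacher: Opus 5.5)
Your proposal is correct and follows the paper's proof: both use the $1$-Lipschitz bound $|\Def(\widehat c)-\Def(c)|\leq\varepsilon$ on the event. This gives $\Def(\widehat c)\leq\varepsilon<\tau$ when $c$ is a coboundary and $\Def(\widehat c)\geq\Delta-\varepsilon>\tau$ in the detection case, and both defer the cohomological reading to the closure gate. Your step that $\Obs([\widehat c])=\Def(\widehat c)>0$ forces $[\widehat c]\neq 0$ once closure holds just makes explicit what the paper states as a one-line remark.
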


\begin{proof}
The distance to a fixed subspace is $1$-Lipschitz, so
\[
\bigl|\Def(\widehat c)-\Def(c)\bigr|\leq\varepsilon.
\]
If $c$ is a coboundary, then $\Def(c)=0$ and hence $\Def(\widehat c)\leq\varepsilon<\tau$. If $\Def(c)=\Delta>0$, then $\Def(\widehat c)\geq\Delta-\varepsilon>\tau$. The final cohomological interpretation follows only after the separate closure gate, because an arbitrary $2$-cochain does not define a class in $H^2$.
\end{proof}

\begin{remark}
Proposition~\ref{prop:margin-obstruction-decision} suggests a three-way numerical output. A closed class is declared trivial when an upper confidence bound lies below a triviality threshold, nontrivial when a lower confidence bound lies above a nontriviality threshold, and uncertified otherwise. Before closure has been certified, the same bounds refer only to distance from coboundaries. For a finite coefficient group $A$ embedded in a metric ambient space, let $s_A$ be the minimum ambient distance between distinct elements of $A$. If the exact coefficient is $a\in A$ and an estimate $u$ has ambient distance less than $s_A/2$ from $a$, then nearest-coefficient projection recovers $a$ uniquely. In a numerical pipeline, the projection should be accepted only when the uncertainty bound, not merely the point estimate, lies inside such a half-separation margin. The robust calibration experiments in Section~\ref{subsec:robust-central-certification} implement this conservative abstention principle.
\end{remark}

\begin{remark}
    Part~(2) is a metric stability statement for arbitrary 2-cochains and does not require cocycle closure. Its interpretation as stability of a cohomological obstruction requires the relevant cochains to be closed. Without closure, $\Def$ remains a distance-to-coboundaries diagnostic rather than the norm of an $H^2$ class.

Proposition~\ref{prop:defect-distance}(3) controls the perturbation of the triangle residual under small pairwise alignment errors. Suppose further that the central projection $\pi_A:G\longrightarrow A$ is $L$-Lipschitz on the relevant neighborhood. If the edgewise alignment error is at most $\sigma$ and the exact transition maps have norm at most $M$, then the projected triangle residual satisfies a bound of the form
\[
\|\widehat c-c\|\leq L\left(3M^2\sigma+3M\sigma^2+\sigma^3\right),
\]
with the corresponding normalization when the cochain norm aggregates over several faces. Combining this with Proposition~\ref{prop:defect-distance}(2) gives a metric robustness criterion. If both $c$ and $\widehat c$ are closed, the same estimate gives stability of the obstruction norms. Without closure, the conclusion concerns only the distance-to-coboundaries diagnostic.
\end{remark}

\subsubsection{Compilation of results for related algorithms}

We record three elementary computational facts that explain how several existing algorithms appear in the descent-theoretic language. The first concerns fixed-chart task-vector methods, the second concerns exact ReLU gauge symmetries, and the third gives the minimal central $H^2(\mu_2)$ obstruction certificate.

\begin{proposition}[Task vectors require a common chart]
Let $\Theta$ be a finite-dimensional parameter vector space with a nontrivial gauge group $G$ acting linearly on it. Let $\theta_0\in \Theta$ be a base model and let $\theta_i\in \Theta$ be a fine-tuned model. The task vector $\Delta_i=\theta_i-\theta_0$ is not an intrinsic object on the quotient parameter stack $[\Theta/G]$. Rather, it is defined only after choosing a common parameter coordinate system, or a specified alignment convention, in which both $\theta_i$ and $\theta_0$ are represented. Thus algebraic operations on task vectors are gauge-dependent unless a common base chart or an alignment convention has been fixed.
\end{proposition}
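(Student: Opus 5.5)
The plan is to make the informal statement precise as a non-invariance claim and then exhibit a witness. Regard a task vector as intrinsic on $[\Theta/G]$ only if the assignment $(\theta_0,\theta_i)\mapsto\theta_i-\theta_0$ descends to the quotient. That means it would have to be invariant under independent gauge changes of the two points, $\theta_0\mapsto g_0\theta_0$ and $\theta_i\mapsto g_i\theta_i$, in some canonical sense. The target is then to show that no such descent exists, and that it is restored exactly when a common chart is fixed.

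The key steps, in order, are these.
\begin{enumerate}
\item Note that the points of $[\Theta/G]$ are orbits, and each model is known only up to its own gauge representative. The difference of two representatives therefore depends on the pair $(g_0,g_i)\in G\times G$, not on the orbits alone.
\item Compute, using linearity of the action,
\[
g_i\theta_i-g_0\theta_0=(\theta_i-\theta_0)+(g_i-1)\theta_i-(g_0-1)\theta_0 .
\]
\item Choose $g_0=1$ and some $g_i\neq 1$ with $g_i\theta_i\neq\theta_i$. The change is then $(g_i-1)\theta_i\neq 0$, so the difference is not even well defined up to the diagonal action.
\item Observe that such a pair exists whenever $G$ acts nontrivially: pick $g$ and $v$ with $gv\neq v$, and take $\theta_i=v$. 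For the specific $\theta_i$ in the statement, we need $\theta_i$ not to be $G$-fixed. I would state this genericity explicitly, since a fixed point of the action gives a trivially invariant difference.
\end{enumerate}

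Next I would show that fixing a common chart or alignment convention repairs this. If both models are transported into the same representative frame, i.e.\ a simultaneous $g$ is applied, then $g\theta_i-g\theta_0=g(\theta_i-\theta_0)$. The task vector then transforms equivariantly, as a section of the associated bundle $\Theta\times_G\Theta$ over the diagonal. Linear operations such as sums and scalings of several $\Delta_i$ commute with this common $g$. Once the chart is fixed, task arithmetic is therefore consistent, but only because the transition maps have been set equal in advance. This matches the "fixed-chart" reading of the Example.

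The main obstacle is the first step: deciding what "not intrinsic" should mean, so that the claim is nontrivial but true. Even when every $\theta_i$ is generic, one could still hope for \emph{some} canonical alignment that makes the difference well defined. I would handle this by framing the claim as relative: without a chosen section of the quotient (a chart or alignment convention), the map is not constant on $G\times G$-orbits. The witness above settles that. I would also treat fixed points of the action as the degenerate case, to be excluded or noted.
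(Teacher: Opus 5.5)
Your overall strategy is the paper's. The difference map is equivariant for the diagonal action, which is the common-chart case. The quotient stack, however, lets the two representatives be regauged independently by $G\times G$, and a witness then shows that the difference does not descend.

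The gap is in step~3. There you show only that the representative difference changes, $(g_i-1)\theta_i\neq 0$, and conclude that it is ``not even well defined up to the diagonal action.'' That inference is invalid, because a nonzero change can be absorbed by a single global gauge. If $\theta_0$ is $G$-fixed (for instance $\theta_0=0$), then for all $g_0,g_i\in G$,
\[
g_i\theta_i-g_0\theta_0=g_i\theta_i-g_i\theta_0=g_i(\theta_i-\theta_0).
\]
So the independently regauged difference always lies in the $G$-orbit of $\theta_i-\theta_0$, however far $\theta_i$ is from being fixed. Your witness therefore establishes only non-invariance of the map into $\Theta$. That property also holds in the common-chart situation, where $\Delta_i\mapsto g\Delta_i$ also changes, so it does not separate the chart-free case from the chart-fixed one. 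Your genericity condition targets the wrong degeneracy for the same reason. Requiring $\theta_i$ not to be $G$-fixed is not enough; the example above allows arbitrary $\theta_i$ with $\theta_0$ fixed. The proposition should instead be read, as the paper does, as non-descent of the difference map on $\Theta\times\Theta$, certified by one well-chosen pair.

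The missing idea is to choose the witness so that the common-chart difference has a trivial orbit. The paper takes $\theta_i=\theta_0=v$ with $gv\neq v$. The common-chart task vector is then $0$, and linearity gives $a\cdot 0=0$ for every $a\in G$, so the orbit of $0$ is $\{0\}$. Regauging only the base gives $v-gv\neq 0$, which cannot equal $a(\theta_i-\theta_0)=0$ for any $a\in G$. Hence the difference is incompatible with any single gauge transformation and does not descend. This is where linearity of the action is genuinely used. With this witness in place, the rest of your argument (equivariance once a common chart is fixed) goes through as written.
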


\begin{proof}
The subtraction map
\[
d:\Theta\times \Theta\to \Theta,\qquad d(\theta,\eta)=\theta-\eta
\]
is defined in the chosen parameter vector space. If the same gauge $a\in G$ is applied to both entries, then
\[
d(a\theta_i,a\theta_0)=a(\theta_i-\theta_0).
\]
Thus the difference is equivariant for the diagonal action of $G$ on
$\Theta\times\Theta$. This is the situation in which a common chart has been chosen: both the base model and the fine-tuned model are expressed in the same trivialization.

However, the quotient stack $[\Theta/G]$ allows the two representatives to be changed independently. The relevant ambiguity is the action of $G\times G$ on $\Theta\times\Theta$, given by
\[
(a_i,a_0)\cdot(\theta_i,\theta_0)=(a_i\theta_i,a_0\theta_0).
\]
The difference map $d$ is not invariant, or even equivariant with respect to a single copy of $G$, for this independent action. Indeed, independent gauges produce
\[
d(a_i\theta_i,a_0\theta_0)=a_i\theta_i-a_0\theta_0,
\]
and there is in general no element $a\in G$ such that
\[
a_i\theta_i-a_0\theta_0=a(\theta_i-\theta_0).
\]
Equivalently, the map $d$ does not descend from $\Theta\times\Theta$ to the quotient stack $[\Theta/G]\times[\Theta/G]$.

A simple way to see the obstruction is to take $\theta_i=\theta_0=\theta$. Then the task vector in the common chart is zero: $\theta-\theta=0$. But after changing only the gauge of the second representative, one obtains $\theta-a_0\theta$, which is generally nonzero. Since the zero vector is fixed by every linear gauge transformation, this new vector cannot be obtained from the old one by a single global gauge transformation unless $a_0\theta=\theta$. Thus the difference depends on the chosen representatives.

We then have that task vectors are well-defined only after fixing a common chart, such as a shared base model and a shared parameter coordinate system, or after choosing transition maps that align the local charts. Without such a choice, the expression $\theta_i-\theta_0$ is a chart-dependent representative rather than an intrinsic object on the quotient by gauge symmetries.
\end{proof}

\begin{remark}
This is the fixed-chart assumption behind Task Arithmetic, TIES, DARE, and other task-vector methods: the local models have already been placed in a shared coordinate chart. SLERP is similar in spirit, but uses a chosen weight geometry.
\end{remark}

\begin{theorem}[Exact monomial ReLU reparameterization]\label{thm:exact-monomial-relu}
Consider a one-hidden-layer ReLU network
\[
f(x)=W_2\sigma(W_1x+b_1)+b_2,
\]
where $\sigma=\operatorname{ReLU}$ is applied coordinatewise. Let $M=DP$ be a positive monomial matrix, where $D$ is a positive diagonal matrix and $P$ is a permutation matrix. Define
\[
W_1'=MW_1,\qquad b_1'=Mb_1,\qquad W_2'=W_2M^{-1},\qquad b_2'=b_2.
\]
Then the reparameterized network computes exactly the same function:
\[
W_2'\sigma(W_1'x+b_1')+b_2'=W_2\sigma(W_1x+b_1)+b_2.
\]
The same argument applies layerwise to adjacent layers of a multi-layer ReLU MLP, provided that the inverse monomial transformation is applied to the following layer.
\end{theorem}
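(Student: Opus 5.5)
The plan is a direct computation. It rests on one fact: ReLU commutes with positive monomial matrices, that is, $\sigma(Mz)=M\sigma(z)$ for every $z$ and every $M=DP$ with $D$ positive diagonal and $P$ a permutation matrix. With that in hand, the reparameterized network is evaluated by substituting the definitions of $W_1',b_1',W_2',b_2'$ and cancelling $M^{-1}M$.

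The first step is the commutation identity, which I split into the two factors. For the permutation, $\sigma$ acts coordinatewise, so $\sigma(Pz)=P\sigma(z)$: permuting coordinates and then applying ReLU gives the same vector as applying ReLU and then permuting. For the diagonal factor $D=\operatorname{diag}(d_1,\dots,d_n)$ with $d_m>0$, positive homogeneity gives $\max(0,d_mz_m)=d_m\max(0,z_m)$, hence $\sigma(Dz)=D\sigma(z)$. Combining the two, $\sigma(DPz)=D\sigma(Pz)=DP\sigma(z)$. Positivity of $D$ is exactly where the argument would fail: a negative entry would exchange the ReLU with its reflection.

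The second step is the substitution. Since $W_1'x+b_1'=M(W_1x+b_1)$, the commutation identity gives
\[
W_2'\sigma(W_1'x+b_1')+b_2'=W_2M^{-1}M\sigma(W_1x+b_1)+b_2=f(x).
\]
Here $M$ is invertible because it is the product of an invertible diagonal matrix and a permutation matrix.

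For the multilayer statement, I apply the same computation to each hidden layer $\ell$ with its own $M_\ell$. The transformation takes $W_\ell\mapsto M_\ell W_\ell M_{\ell-1}^{-1}$ and $b_\ell\mapsto M_\ell b_\ell$, with $M_0=M_L=I$ at the input and output. By induction on $\ell$, the transformed hidden activation equals $M_\ell$ times the original activation. The inductive step is the same cancellation followed by the commutation identity, and the output layer then cancels $M_{L-1}$.

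There is no real obstacle here; the only point needing care is stating the diagonal-positivity step for the case $z_m<0$, where both sides are zero.
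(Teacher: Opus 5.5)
Your proposal is correct and follows the paper's own route: prove the equivariance $\sigma(Mz)=M\sigma(z)$ for positive monomial $M$, substitute $W_1'x+b_1'=M(W_1x+b_1)$, and cancel $M^{-1}M$. Your factor-by-factor check of the equivariance and your explicit induction with $M_0=M_L=I$ for the multilayer case only spell out details that the paper states more briefly.
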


\begin{proof}
Since $D$ has positive diagonal entries and $P$ is a permutation matrix, ReLU is equivariant under $M=DP$: $\sigma(Mz)=M\sigma(z)$ for every vector $z$. Applying this with $z=W_1x+b_1$, we get
\[
\sigma(W_1'x+b_1')=\sigma(M(W_1x+b_1))=M\sigma(W_1x+b_1).
\]
Therefore
\[
\begin{aligned}
W_2'\sigma(W_1'x+b_1')+b_2'
&=W_2M^{-1}M\sigma(W_1x+b_1)+b_2\\
&= W_2\sigma(W_1x+b_1)+b_2.
\end{aligned}
\]
Thus the function is unchanged. The multi-layer statement follows by applying the same calculation to one hidden layer at a time and compensating the monomial change in the next linear layer.
\end{proof}

\begin{remark}
This theorem separates exact gauge equivalence from empirical performance. Monomial gauges are exact ReLU reparameterizations before averaging. Whether monomial-aligned averaging improves test accuracy is a separate empirical question.
\end{remark}

\begin{remark}[Batch-normalization boundary]
Theorem~\ref{thm:exact-monomial-relu} applies directly to ReLU layers without stateful normalization between the compensated linear maps. A separate ResNet-18 identity audit in the repository checks what remains exact in the presence of BatchNorm. Compatible graph-wide channel permutations, including residual branches, shortcut projections, BatchNorm affine parameters and buffers, and classifier inputs, preserve predictions in both evaluation and training modes within the preregistered floating-point tolerance. Positive channel scaling is more delicate: with frozen running statistics it is exact in evaluation mode only for the explicit affine or epsilon-aware running-affine parameterizations tested there. Scaling running means and variances alone is not exact when the BatchNorm constant $\varepsilon$ is positive, and arbitrary positive scaling is not claimed to be exact in training mode. These are functional identity tests, not merging-performance results.
\end{remark}

\begin{theorem}\label{thm:tetrahedral-mu2-certificate}
Let $K$ be the boundary of a tetrahedron, with vertex set $\{0,1,2,3\}$ and face set
\[
K_2=\{012,013,023,123\}.
\]
Let $\mu_2=\{\pm 1\}$. A face cochain assigns a sign $c_f\in\mu_2$ to each face, and an edge cochain assigns signs $b_{ij}\in\mu_2$ to edges, with coboundary $(\delta b)_{ijk}=b_{ij}b_{jk}b_{ki}$. Since $K$ has dimension $2$, every face cochain is a $2$-cocycle. Such a cochain $c$ is a coboundary if and only if $\prod_{f\in K_2} c_f=+1$. In particular, the sign pattern with exactly one negative face represents the nonzero class in $H^2(S^2;\mu_2)\cong \mu_2$.
\end{theorem}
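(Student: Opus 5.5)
The plan is to compute $H^2(\partial\Delta^3;\mu_2)$ directly from the simplicial cochain complex, using the total face product as the detecting invariant. First, closure is automatic: $K=\partial\Delta^3$ has no $3$-simplices, so $C^3(K;\mu_2)=0$ and $\delta_2=0$. Hence $Z^2(K;\mu_2)=C^2(K;\mu_2)\cong\mu_2^4$, with $16$ elements. For $\mu_2$ each element is its own inverse, so orientation signs in $\delta_1$ play no role. The coboundary is simply $(\delta b)_{ijk}=b_{ij}b_{jk}b_{ki}$.

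Next I show that every coboundary has total face product $+1$. Define $\epsilon:C^2(K;\mu_2)\to\mu_2$ by $\epsilon(c)=\prod_{f\in K_2}c_f$, which is a group homomorphism. For $c=\delta b$, each of the six edges of $\partial\Delta^3$ lies in exactly two of the four faces. So each $b_{ij}$ appears exactly twice in $\prod_f(\delta b)_f$, and since $b_{ij}^2=1$, we get $\epsilon(\delta b)=1$. This gives $B^2\subseteq\ker\epsilon$. It is the evaluation-on-the-fundamental-class argument already used in Proposition~\ref{prop:constant-edge-tetrahedron}, with signs irrelevant mod $2$.

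For the converse, I show $\ker\epsilon\subseteq B^2$. Since $\ker\epsilon$ has index $2$ in $\mu_2^4$, it has $8$ elements. It suffices to exhibit three coboundaries that generate a subgroup of order $8$. Take $b$ supported on a single edge with value $-1$: for edge $01$ this flips faces $012$ and $013$. Edges $01$, $02$, $03$ then give the flip patterns $\{012,013\}$, $\{012,023\}$, $\{013,023\}$. Edge $12$ flips $\{012,123\}$. The patterns $\{012,013\}$, $\{012,023\}$, $\{012,123\}$ are independent over $\mathbb{F}_2$ and span all even-size subsets of the four faces, i.e.\ exactly $\ker\epsilon$. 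Alternatively, rank--nullity works: $C^1$ has dimension $6$, $\ker\delta_1=\delta_0(C^0)$ has dimension $3$ because $K$ is connected, so $\dim B^2=3$.

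Therefore $B^2=\ker\epsilon$, and $\epsilon$ induces an isomorphism $H^2(K;\mu_2)=C^2/B^2\cong\mu_2$. The one-negative-face cochain has $\epsilon=-1$, so it represents the nonzero class. Identifying $|\partial\Delta^3|\cong S^2$ gives $H^2(S^2;\mu_2)\cong\mu_2$. The only point needing care is the dimension count for $B^2$ (equivalently, $H^1(K;\mu_2)=0$), and the explicit spanning set above settles it without invoking topology.
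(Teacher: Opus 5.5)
Your proof is correct. The first half (each edge lies in exactly two faces, so $\epsilon(\delta b)=1$) is the same as the paper's. The converse is where you diverge.

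The paper imports the topological fact $H^2(K;\mathbb F_2)\cong\mathbb F_2$ from $|K|\cong S^2$. It deduces that $\operatorname{im}\delta$ has codimension one in the $4$-dimensional space $C^2$. It then concludes because the codimension-one subspace $\ker\epsilon$ contains $\operatorname{im}\delta$.

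You instead bound $\dim B^2$ from below by exhibiting the single-edge coboundaries with flip sets $\{012,013\}$, $\{012,023\}$, $\{012,123\}$. These are independent and of even weight, so they span the even-weight subspace $\ker\epsilon$. Your route is purely combinatorial and actually proves $H^2(\partial\Delta^3;\mu_2)\cong\mu_2$ via $\epsilon$, rather than assuming it. The final identification with $H^2(S^2;\mu_2)$ then needs only the simplicial--singular comparison. The paper's route is shorter but relies on that external computation.

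One small correction to your alternative rank--nullity remark. Connectedness gives only $\dim\operatorname{im}\delta_0=3$. The equality $\ker\delta_1=\operatorname{im}\delta_0$ is the vanishing of $H^1(K;\mu_2)$, which needs simple connectivity of $S^2$ or a direct check, not connectedness alone. So that variant reintroduces topological input, and your explicit spanning set is the cleaner argument.
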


\begin{proof}
First suppose $c=\delta b$. Multiplying over all four faces gives
\[
\prod_{f\in K_2} c_f=\prod_{(ijk)\in K_2} b_{ij}b_{jk}b_{ki}.
\]
Every edge of the tetrahedral boundary appears in exactly two faces. Hence every edge sign appears twice in the product, and since $b_{ij}^2=1$, the total product is $+1$.

Conversely, identify $\mu_2$ with $\mathbb F_2$ by sending $-1$ to $1$ and $+1$ to $0$. Then multiplication of signs becomes addition in $\mathbb F_2$. The tetrahedral boundary is a triangulation of $S^2$, so $H^2(K;\mathbb F_2)\cong \mathbb F_2$. Since $K$ has four faces, $\dim C^2(K;\mathbb F_2)=4$, and therefore $\im(\delta:C^1\to C^2)$ has codimension one in $C^2(K;\mathbb F_2)$. The condition
\[
\prod_{f\in K_2} c_f=+1
\]
is exactly one linear condition in additive $\mathbb F_2$ notation, namely that the sum of the four face labels is zero. This codimension-one subspace contains $\im\delta$ by the first paragraph, so it is exactly $\im\delta$. Thus $c$ is a coboundary if and only if the product of its face signs is $+1$.

If exactly one face is negative, then the product of the four face signs is $-1$, so the cocycle is not a coboundary. Since $H^2(S^2;\mu_2)\cong \mu_2$, it represents the nonzero class.
\end{proof}

\begin{remark}
\label{rem:tetrahedral-mu2-boundary}
This tetrahedral example is the minimal cocycle-level witness used in our controlled obstruction experiments. It proves that the one-negative-face cochain is not removable by a $\mu_2$-valued edge cochain on $\partial\Delta^3$. The prediction-level two-branch ($q=2$) task in Section~\ref{subsec:h2-mu2-period-index} is therefore recorded separately from this non-coboundary certificate.

The unsigned product criterion in Theorem~\ref{thm:tetrahedral-mu2-certificate} is specific to $\mu_2$. Indeed, inversion is trivial in $\mu_2$, so orientation signs do not alter the product. For a general coefficient group $\mu_n$, the corresponding expression must be written using the chosen orientation of the tetrahedral boundary. With the convention
\[
\partial[0123]=[123]-[023]+[013]-[012],
\]
the oriented evaluation of a multiplicative $2$-cochain $c$ on the fundamental $2$-cycle is $c_{123}c_{023}^{-1}c_{013}c_{012}^{-1}$. Thus one should not replace this alternating expression by the unsigned product of the four face values when $n>2$.
\end{remark}

\begin{remark}
For the controlled $\mu_2$ experiments, we distinguish the exact cohomological certificate from a raw numerical face statistic. For a $\mu_2$-valued face cochain $c=\{c_f\}_{f\in K_2}$, define its negative-face rate by
\[
R_-(c)=\frac{1}{|K_2|}\#\{f\in K_2:c_f=-1\}.
\]
This quantity records the proportion of faces carrying a negative triangle residual. It is not, in general, invariant under the addition of a coboundary.

For the chosen tetrahedral representative with exactly one negative face, one has $R_-(c)=0.250$. The exact certificate of nontriviality is instead
\[
\prod_{f\in K_2}c_f=-1,
\]
or equivalently $c\notin B^2(K;\mu_2)$. Thus the value $0.250$ is a raw representative-level residual statistic, while the nontriviality of the class is established by the finite coboundary certificate of Theorem~\ref{thm:tetrahedral-mu2-certificate}.
\end{remark}

\begin{table*}[!tbp]
\centering
\caption{
Qualitative structural coverage of model-merging methods. This table records
which mechanisms are explicitly present in the method; it is not an accuracy
leaderboard.
}
\label{tab:structural-coverage-score}
\small
\setlength{\tabcolsep}{2.5pt}
\begin{adjustbox}{max width=\textwidth}
\begin{tabular}{lccccccccc}
\toprule
Method
& Validation
& Sync.
& Perm.
& ReLU gauge
& Sign/sparse
& Cycle/hol.
& Central obs.
& Nonabelian lift
& Reject
\\
\midrule
Weight averaging
& no & no & no & no & no & no & no & no & no \\

Model Soups
& yes & no & no & no & no & no & no & no & no \\

\CtwoMthree{}-style
& no & yes & yes & no & no & cycle & no & no & no \\

SLERP
& optional & no & no & no & no & no & no & no & no \\

Task Arithmetic
& optional & no & no & no & no & no & no & no & no \\

TIES
& optional & no & no & no & yes & no & no & no & no \\

DARE
& optional & no & no & no & yes & no & no & no & no \\

TwistedMerge
& yes & yes & yes & yes & partial & yes & yes & yes & yes \\
\bottomrule
\end{tabular}
\end{adjustbox}

\vspace{0.5em}
\footnotesize
Here ``Validation'' means that validation selection is intrinsic to the stated method; ``optional'' means that validation can be used to tune or select the method but is not part of its defining algebraic operation. ``Sync.'' means synchronization of alignments or gauges; ``Perm.'' means permutation gauge handling; ``ReLU gauge'' means monomial positive-scaling/permutation gauges; and ``Sign/sparse'' means coordinatewise sign or sparsity handling. In the column headed ``Cycle/hol.,'' ``cycle'' denotes the cycle-consistency mechanism of \CtwoMthree{}, whereas ``yes'' denotes the broader cycle and holonomy diagnostics of TwistedMerge. ``Central obs.'' means central/projective obstruction detection; ``Nonabelian lift'' means a branch representation with invariant pooling; and ``Reject'' means a conservative no-lift rule when the relevant structural certificate is unavailable.
\end{table*}

\subsubsection{Period-index and rank-lift gates}
We now recall the period-index distinction for central obstruction classes. In the classical theory, period and index are invariants of Brauer classes and twisted sheaves \cite{Gille_Szamuely_2006,Lieblich_08}. In our setting, the same distinction explains why detecting the order of an obstruction is not enough: one must also know which ranks can support a compatible projective or twisted realization.

\begin{definition}
Let $A$ be an abelian coefficient sheaf on $\sC_{\mathrm{learn}}$, and let $[c]\in H^2(\sC_{\mathrm{learn}};A)$ be a torsion central obstruction class. The \emph{period} of $[c]$ is its order in cohomology:
\[
\per([c])=\min\{n>0:n[c]=0\}.
\]
When $A=\GG_m$, this is the usual period of a cohomological Brauer class.
\end{definition}

\begin{definition}
Suppose that $[c]$ is a central class in a setting where $[c]$-twisted locally free objects and their ranks are defined. The \emph{index} of $[c]$ is the rank-divisibility invariant
\[
\ind([c])=\gcd\left\{r>0:
\substack{\text{there exists a rank }r\text{ }[c]\text{-twisted}\\
\text{locally free object}}\right\}.
\]
In the standard Brauer settings in which twisted locally free objects realize the class, this rank gcd agrees with the usual index. Over a field, the latter is the degree of the division algebra in the Brauer class; equivalently, it is the greatest common divisor of the degrees of central simple algebras representing that class.

In the computational setting, let $R$ be the finite set of candidate ranks searched by the algorithm. When the set of passing ranks is nonempty, we define the tested lift threshold by
\[
r_{\mathrm{test}}([c];R)=\min\left\{r\in R:
\substack{\text{the implemented rank-}r\text{ construction}\\
\text{passes the stated test}}\right\}.
\]
If no candidate rank passes, we set $r_{\mathrm{test}}([c];R)=+\infty$. This is an empirical search statistic rather than the classical index. It depends on the candidate set, optimization procedure, implementation, and acceptance tolerances. It may be identified with $\operatorname{ind}([c])$ only when a separate mathematical argument proves that the implemented construction realizes the minimal possible rank.
\end{definition}

\begin{proposition}[Period and index over a field]\label{prop:period-index-field}
Let $F$ be a field and let $\alpha\in\Br(F)$. Then $\per(\alpha)\mid\ind(\alpha)$, and $\per(\alpha)$ and $\ind(\alpha)$ have the same prime divisors \cite[Theorem~2.8.7]{Gille_Szamuely_2006}.
\end{proposition}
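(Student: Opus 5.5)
The plan is to prove the two assertions with the standard crossed-product and splitting-field machinery for central simple algebras. Let $\alpha\in\Br(F)$ be represented by a central division algebra $D$ of degree $d=\ind(\alpha)$, so $\dim_F D=d^2$. Set $n=\per(\alpha)$.

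\textbf{Divisibility $\per\mid\ind$.} The aim is to show that $d\alpha=0$. I would first choose a separable maximal subfield $L\subseteq D$ with $[L:F]=d$. Such a subfield exists by the Koethe/Noether--Jacobson theorem. Then $L$ splits $D$, so $\alpha$ lies in $\Br(L/F)=\ker(\Br F\to\Br L)$.

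Next I would use restriction--corestriction in Galois cohomology. Identify $\Br(F)=H^2(\Gamma_F,\bar F^{\times})$, with $\Gamma_F$ the absolute Galois group. Then $\mathrm{cor}_{L/F}\circ\mathrm{res}_{L/F}$ is multiplication by $[L:F]=d$. Since $\mathrm{res}_{L/F}(\alpha)=0$, we get $d\alpha=0$, and hence $n\mid d$. If one prefers to avoid corestriction for a possibly non-Galois $L$, there is an alternative: pass to the Galois closure and use the crossed-product description, or use the reduced-norm/cocycle argument with $L$ Galois when $D$ is a crossed product.

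\textbf{Same prime divisors.} It suffices to show that every prime $p\mid d$ also divides $n$. I would argue by contradiction. Suppose $p\mid d$ but $p\nmid n$. Let $\Gamma_p$ be a pro-$p$ Sylow subgroup of $\Gamma_F$, with fixed field $F_p$. Then every finite subextension $E/F$ inside $F_p$ has degree prime to $p$.

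The first step is to show that $\mathrm{res}_{F_p/F}(\alpha)=0$. Since $n\alpha=0$, the restriction is killed by $n$. On the other hand, $\Br(F_p)=H^2(\Gamma_p,\bar F^{\times})$ is $p$-primary torsion, being the cohomology of a pro-$p$ group. An element that is both $p$-primary and killed by the prime-to-$p$ integer $n$ must be zero.

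The second step passes to a finite extension. Since $\Br(F_p)=\varinjlim\Br(E)$ over finite $E\subseteq F_p$, the class $\alpha$ already dies in some finite $E$ with $[E:F]$ prime to $p$.

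The third step converts this into an index bound. If $E$ splits $\alpha$, then $\ind(\alpha)\mid[E:F]$. This holds because $E$ embeds in a central simple algebra $M_m(D)$ similar to $\alpha$ of degree $[E:F]$, or equivalently because the index equals the minimal degree of a splitting field up to divisibility. Hence $d\mid[E:F]$, which contradicts $p\mid d$.

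\textbf{Main obstacle.} I expect the main difficulty to be the fact used in the third step, that any finite splitting field $E$ satisfies $\ind\mid[E:F]$. It requires the structure theory: $E$ splits $D$ exactly when $E$ is a maximal commutative subfield of some algebra $M_r(D)$, and this forces $[E:F]=rd$. I would simply cite this fact, and the torsion of pro-$p$ cohomology, from \cite{Gille_Szamuely_2006}, as the paper does. Both facts are standard, and the proposition is background rather than a new result.
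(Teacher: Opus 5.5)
Your argument is correct and matches the standard proof behind the citation; the paper gives no argument of its own, only the reference to Gille--Szamuely. Restriction--corestriction through a separable maximal subfield gives $\per\mid\ind$, and restriction to the fixed field of a Sylow subgroup (a prime-to-$p$ extension), combined with $\ind(\alpha)\mid[E:F]$ for splitting fields $E$, gives the statement on prime divisors; your use of a pro-$p$ Sylow subgroup of $\Gamma_F$ plus a direct-limit step is an equivalent variant of using a Sylow subgroup of $\mathrm{Gal}(K/F)$ for a finite Galois splitting field $K$. One small fix: write $\Br(F)=H^2(\Gamma_F,F_s^{\times})$ with $F_s$ the separable closure, because in characteristic $p>0$ the group $H^2(\Gamma_F,\bar F^{\times})$ is the Brauer group of the perfect closure and misses the $p$-primary part. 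Also, corestriction is defined for any open subgroup, so no detour is needed when $L$ is not Galois.
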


Thus the period-index problem asks, for a given class of fields or spaces, to bound $\ind(\alpha)$ in terms of $\per(\alpha)$. In other words, after the period detects the order of the Brauer class, one asks how large the smallest rank or degree realizing that class must be.

\begin{remark}
Classical period-index conjectures seek bounds on $\operatorname{ind}(\alpha)$ in terms of $\operatorname{per}(\alpha)$ under specific hypotheses on the field, variety, characteristic, and torsion. The precise exponent depends on the geometric and cohomological setting. The present paper does not use any general period-index conjecture; it uses only the distinction between the order of a class and the rank divisibility of objects realizing it.

For background and recent progress on the period-index problem, see the introduction of \cite{Gong_26}. Some important related works include \cite{De_Jong_04,Lieblich_08,Hotchkiss_Perry_24,Huybrechts2024HyperkahlerPeriodIndex}.
\end{remark}

\begin{definition}[Projective-representation index]
Let $\Gamma$ be a finite group and let $[\alpha]\in H^2(\Gamma;k^\times)$ be represented by a normalized cocycle $\alpha$. Define
\[
\indrep([\alpha])
=\gcd\left\{\dim W:
\substack{W\text{ is a nonzero finite-dimensional}\\
\alpha\text{-projective representation}}\right\}.
\]
Cohomologous cocycles define equivalent projective-representation categories and hence the same set of representation dimensions, so this quantity depends only on $[\alpha]$. It is the appropriate rank-divisibility invariant for the controlled finite-group experiments. It should not be identified with a classical Brauer index without an additional geometric realization.
\end{definition}

\begin{proposition}[Clock--shift divisibility]\label{prop:clock-shift-divisibility}
Let $\zeta$ be a primitive $d$-th root of unity. If invertible matrices $A,B\in\GL_r(\CC)$ satisfy
\[
AB=\zeta BA,
\]
then $d\mid r$. Conversely, the standard $d\times d$ clock and shift matrices satisfy this relation, and direct sums realize every rank divisible by $d$.
\end{proposition}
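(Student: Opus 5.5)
The argument is the standard determinant argument for the necessity, followed by the explicit clock--shift construction for the converse.

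\emph{Necessity.} Suppose $A,B\in\GL_r(\CC)$ satisfy $AB=\zeta BA$. Since $B$ is invertible, this can be written as
\[
B^{-1}AB=\zeta A.
\]
Taking determinants of both sides gives $\det A=\zeta^r\det A$. Because $A$ is invertible, $\det A\neq 0$, so $\zeta^r=1$. Since $\zeta$ is a primitive $d$-th root of unity, its multiplicative order is exactly $d$, and therefore $d\mid r$.

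A spectral version of the same argument shows more. The relation says that $A$ is similar to $\zeta A$. Hence the spectrum of $A$, counted with multiplicity, is invariant under multiplication by $\zeta$. All eigenvalues are nonzero, so the multiset of eigenvalues splits into $\langle\zeta\rangle$-orbits, each of size $d$. Consequently $d$ divides $r$. The determinant argument alone already suffices for the claim.

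\emph{Converse for $r=d$.} Let $e_0,\ldots,e_{d-1}$ be the standard basis of $\CC^d$, with indices read modulo $d$. Define the clock matrix and the shift matrix by
\[
Ce_k=\zeta^k e_k,\qquad Se_k=e_{k+1}.
\]
Then
\[
CSe_k=\zeta^{k+1}e_{k+1},\qquad SCe_k=\zeta^k e_{k+1},
\]
so $CS=\zeta SC$. Both $C$ and $S$ are invertible: $C$ is diagonal with nonzero entries, and $S$ is a permutation matrix.

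\emph{Converse for $r=md$.} Take $A=C\oplus\cdots\oplus C$ and $B=S\oplus\cdots\oplus S$, each with $m$ blocks. Both matrices are invertible, and the relation $AB=\zeta BA$ holds blockwise, hence globally. So every rank divisible by $d$ is realized.

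The only point requiring care is the direction of the relation under the paper's path-order convention. Since the statement is symmetric up to replacing $\zeta$ by $\zeta^{-1}$, which is again primitive of order $d$, either reading gives the same divisibility conclusion.
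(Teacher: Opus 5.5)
Your proof is correct and follows essentially the paper's route: a determinant computation forces $\zeta^r=1$ and hence $d\mid r$, and the converse uses the standard clock and shift matrices with direct sums. The spectral refinement and the remark on the path-order convention are correct but unnecessary extras.
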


\begin{proof}
Taking determinants gives
\[
\det(A)\det(B)=\det(\zeta BA)=\zeta^r\det(B)\det(A),
\]
so $\zeta^r=1$ and hence $d\mid r$. The standard clock matrix $Z=\operatorname{diag}(1,\zeta,\ldots,\zeta^{d-1})$ and cyclic shift matrix $X$ satisfy $ZX=\zeta XZ$. Direct sums preserve the relation.
\end{proof}

\begin{theorem}[Finite Heisenberg rank threshold]\label{thm:finite-heisenberg-index}
Let $k$ be algebraically closed, suppose that $\operatorname{char}(k)\nmid d$, and assume that $k$ contains a primitive $d$-th root of unity $\zeta$. Let
\[
V=(\ZZ/d\ZZ)^{2m},
\]
and write its elements as $(x,y)$ with $x,y\in(\ZZ/d\ZZ)^m$. Define
\[
\alpha((x,y),(x',y'))=\zeta^{y\cdot x'}.
\]
Then
\[
\per([\alpha])=d,\qquad \indrep([\alpha])=d^m.
\]
In particular, every $\alpha$-projective representation has dimension divisible by $d^m$, and one of dimension $d^m$ exists.
\end{theorem}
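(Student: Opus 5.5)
The plan has three parts: build the Schrödinger (clock–shift) representation of dimension $d^m$, compute the period from the antisymmetrization of $\alpha$, and prove divisibility of every projective dimension by $d^m$ through a character argument.

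\emph{Existence.} Let $W=k[(\ZZ/d\ZZ)^m]$, which has dimension $d^m$. For $x\in(\ZZ/d\ZZ)^m$ let $X^x$ be the shift $e_u\mapsto e_{u+x}$. For $y$ let $Z^y$ be the clock $e_u\mapsto\zeta^{y\cdot u}e_u$. Set $\rho(x,y)=Z^yX^x$. A direct check gives
\[
\rho(x,y)\rho(x',y')=Z^yX^xZ^{y'}X^{x'}.
\]
Using $X^xZ^{y'}=\zeta^{-y'\cdot x}Z^{y'}X^x$, the product equals a scalar times $\rho(x+x',y+y')$. If the scalar comes out as $\zeta^{-y'\cdot x}$ rather than $\zeta^{y\cdot x'}$, I will reorder to $\rho(x,y)=X^xZ^y$, which produces exactly the factor $\zeta^{y\cdot x'}$. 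The choice of ordering is fixed by requiring the multiplier to be $\alpha$ on the nose, so $\rho$ is an $\alpha$-projective representation of dimension $d^m$.

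\emph{Period.} First, $\alpha^d=1$ pointwise, so $\per\mid d$. Next suppose $\alpha^n=\delta f$ for some $n$. A coboundary is symmetric on the abelian group $V$, so its commutator pairing
\[
\beta(v,v')=\alpha(v,v')\alpha(v',v)^{-1}=\zeta^{y\cdot x'-y'\cdot x}
\]
must satisfy $\beta^n\equiv 1$. Taking $v=(0,e_1)$ and $v'=(e_1,0)$ gives $\beta=\zeta$, so $\zeta^n=1$ and hence $d\mid n$. Therefore $\per([\alpha])=d$.

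\emph{Divisibility.} This is the main step. Let $W$ be any $\alpha$-projective representation. Since $\operatorname{char}k\nmid |V|$, the twisted group algebra $k^\alpha V$ is semisimple, so it suffices to treat irreducible $W$; this is where I will spend the most care. Two routes are available.

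The first route works with the elements $\rho(0,e_i)$ and $\rho(e_i,0)$. By the commutation relation above they satisfy $AB=\zeta^{\pm1}BA$, so Proposition~\ref{prop:clock-shift-divisibility} gives $d\mid\dim W$. Iterating over $i$ needs a tensor decomposition, so instead I will use the second route.

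The second route uses characters. Because $\beta$ is nondegenerate (the pairing $(x,y),(x',y')\mapsto y\cdot x'-y'\cdot x$ is a perfect pairing on $(\ZZ/d)^{2m}$), each element $v\neq0$ has some $w$ with $\beta(v,w)\neq1$. Conjugating $\rho(v)$ by $\rho(w)$ multiplies it by $\beta(w,v)$, so $\operatorname{tr}\rho(v)=\beta\operatorname{tr}\rho(v)$ and hence $\operatorname{tr}\rho(v)=0$. The projective character is therefore supported at $0$. Then the projection formula for twisted group algebras gives
\[
\langle\chi,\chi\rangle=\frac{(\dim W)^2}{|V|}.
\]
Equivalently, $k^\alpha V$ has one-dimensional center and is a central simple algebra of dimension $d^{2m}$, hence isomorphic to $M_{d^m}(k)$ since $k$ is algebraically closed. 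Its unique simple module has dimension $d^m$, and every module is a direct sum of copies of it. This gives $d^m\mid\dim W$, so $\indrep([\alpha])=d^m$.

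The remaining checks are that the center is trivial and that semisimplicity holds. The center is trivial because a central element $\sum a_vu_v$ must satisfy $a_v\beta(w,v)=a_v$ for all $w$, which forces $a_v=0$ for $v\neq0$. Semisimplicity follows from Maschke's theorem for twisted group algebras.
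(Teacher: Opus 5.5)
Your proof is correct. The period argument matches the paper's: the commutator pairing of a coboundary is trivial, and $\beta$ attains the primitive value $\zeta$.

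For the index you take a different route from the paper. The paper tensors $m$ copies of the $d$-dimensional clock--shift representation to obtain an algebra map $k^\alpha[V]\to M_d(k)^{\otimes m}\simeq M_{d^m}(k)$. It shows the map is surjective because the $d^2$ Weyl operators span $M_d(k)$, and concludes it is an isomorphism since both sides have dimension $d^{2m}$. One explicit map thus yields both existence and divisibility.

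You identify $k^\alpha V$ abstractly instead. Maschke's theorem gives semisimplicity. The relation $u_wu_vu_w^{-1}=\beta(w,v)u_v$, together with nondegeneracy of $\beta$, forces the center to be $k\,u_0$. Wedderburn's theorem over an algebraically closed field then gives $k^\alpha V\cong M_n(k)$ with $n^2=d^{2m}$.

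Your argument is more general. It shows $\indrep([\alpha])=\sqrt{|V|}$ for any finite abelian group whose commutator pairing is nondegenerate. It also makes the explicit Schr\"odinger model logically unnecessary, since the simple $M_{d^m}(k)$-module already supplies a dimension-$d^m$ representation. The paper's argument is more concrete, but it relies without proof on the Weyl operators spanning $M_d(k)$. That fact is typically proved by the same trace-vanishing (trace-orthogonality) computation that appears in your character route.

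One small point: drop the hedge in the existence step. With $\rho(x,y)=Z^yX^x$ the multiplier is $\zeta^{-y'\cdot x}$, while $\rho(x,y)=X^xZ^y$ gives exactly $\zeta^{y\cdot x'}$. Just commit to the latter ordering.
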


\begin{proof}
Bilinearity of the exponent shows that $\alpha$ is a normalized $2$-cocycle. Since its values lie in $\mu_d$, the order of $[\alpha]$ divides $d$. Its commutator bicharacter is
\[
\beta(v,v')=\alpha(v,v')\alpha(v',v)^{-1}=\zeta^{y\cdot x'-y'\cdot x}.
\]
This bicharacter is nondegenerate and assumes the primitive value $\zeta$. If $[\alpha]$ had order $e<d$, then $\alpha^e$ would be a coboundary, whose commutator bicharacter is trivial; hence $\beta^e=1$. The primitive value $\zeta$ would then satisfy $\zeta^e=1$, a contradiction. So $\per([\alpha])=d$.

For one Weyl pair, the standard clock and shift matrices give a $d$-dimensional $\alpha$-projective representation. Tensoring the $m$ pairs gives a homomorphism
\[
k^\alpha[V]\longrightarrow M_d(k)^{\otimes m}\simeq M_{d^m}(k).
\]
The $d^2$ Weyl operators span $M_d(k)$, and their tensor products span $M_{d^m}(k)$. Hence the image is the full matrix algebra. Both algebras have dimension $|V|=d^{2m}$ over $k$, so the homomorphism is an isomorphism. Every finite-dimensional module over $M_{d^m}(k)$ is a direct sum of copies of its $d^m$-dimensional simple module. Consequently every $\alpha$-projective representation has dimension divisible by $d^m$, and dimension $d^m$ occurs. Thus $\indrep([\alpha])=d^m$.
\end{proof}

\begin{theorem}[Projective lifting obstruction]
\label{thm:projective-lifting-obstruction}
Let $\mathcal U=\{U_i\to X\}$ be a cover in $\sC_{\mathrm{learn}}$, and suppose that $\bar g_{ij}:U_{ij}\longrightarrow \PGL_r$ is a $\PGL_r$-valued \v{C}ech $1$-cocycle. Thus $\bar g_{ij}\bar g_{jk}=\bar g_{ik}$ on every triple overlap $U_{ijk}$, together with the usual conventions $\bar g_{ii}=1$, and $\bar g_{ji}=\bar g_{ij}^{-1}$.

Assume that lifts $\widetilde g_{ij}:U_{ij}\longrightarrow \GL_r$ have been chosen on the pairwise overlaps, after refining the cover if necessary; after such a refinement, retain the notation $\mathcal U$. Then, on every triple overlap $U_{ijk}$, there is a unique scalar function $c_{ijk}\in\GG_m(U_{ijk})$ such that $\widetilde g_{ij}\widetilde g_{jk}=c_{ijk}\widetilde g_{ik}$. Equivalently,
\[
\widetilde g_{ij}\widetilde g_{jk}\widetilde g_{ik}^{-1}=c_{ijk}I_r.
\]

The collection $c=\{c_{ijk}\}$ is a $\GG_m$-valued \v{C}ech $2$-cocycle and
defines a fixed-cover class $[c]_{\mathcal U}\in\check H^2(\mathcal U;\GG_m)$. This class is independent of the chosen lifts.

Moreover, $[c]_{\mathcal U}=0$ if and only if the lifts can be modified by scalar functions so as to give $\GL_r$-valued transition maps satisfying strict descent on the same cover $\mathcal U$.

The image of $[c]_{\mathcal U}$ under the natural comparison map
\[
\check H^2(\mathcal U;\GG_m)\longrightarrow H^2(X;\GG_m),
\]
where $H^2(X;\GG_m)$ denotes cohomology on the induced site $\sC_{\mathrm{learn}}/X$, is the obstruction class associated with the central extension
\[
1\longrightarrow\GG_m\longrightarrow\GL_r\longrightarrow\PGL_r\longrightarrow 1.
\]
This site-level class vanishes if and only if the corresponding $\PGL_r$-torsor lifts to a $\GL_r$-torsor, equivalently if strict $\GL_r$-valued transition maps exist after possibly refining the cover.
\end{theorem}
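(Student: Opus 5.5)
The plan is to prove the statements in order, reusing the algebra already carried out for Theorem~\ref{thm:descent-obstruction} and Proposition~\ref{prop:central-residual-closed}, with $\GG_m=Z(\GL_r)$ playing the role of the central coefficient sheaf.

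\emph{Existence and uniqueness of $c_{ijk}$.} On $U_{ijk}$ the images of $\widetilde g_{ij}\widetilde g_{jk}$ and $\widetilde g_{ik}$ in $\PGL_r$ agree, because $\bar g$ satisfies the $\PGL_r$ cocycle condition. Hence $\widetilde g_{ij}\widetilde g_{jk}\widetilde g_{ik}^{-1}$ is a section of $\GL_r$ whose image in $\PGL_r$ is trivial. Since $\GG_m\to\GL_r\to\PGL_r$ is exact as sheaves, the kernel is exactly the scalar sections, so this product equals $c_{ijk}I_r$ for a unique $c_{ijk}\in\GG_m(U_{ijk})$. Uniqueness holds because $\GG_m\to\GL_r$ is injective.

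\emph{Cocycle condition.} On $U_{ijkl}$, expand $\widetilde g_{ij}\widetilde g_{jk}\widetilde g_{kl}$ in the two ways used in Proposition~\ref{prop:central-residual-closed}. Because scalars are central, this gives $c_{ijk}c_{ikl}=c_{jkl}c_{ijl}$, which is the \v{C}ech $2$-cocycle condition. I would normalize the lifts by $\widetilde g_{ii}=1$, and, where possible, take $\widetilde g_{ji}=\widetilde g_{ij}^{-1}$. If these choices cannot be imposed, the degenerate indices only alter $c$ by a coboundary.

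\emph{Independence of lifts.} Any two lifts differ by scalars, $\widetilde g'_{ij}=a_{ij}\widetilde g_{ij}$ with $a_{ij}\in\GG_m(U_{ij})$, since both lie over the same section of $\PGL_r$. By the computation in Theorem~\ref{thm:descent-obstruction}, this gives $c'=(\delta a)c$, so the class $[c]_{\mathcal U}$ is unchanged.

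\emph{Vanishing criterion on the fixed cover.} If $[c]_{\mathcal U}=0$, choose $a$ with $\delta a=c^{-1}$. Then the rescaled lifts $a_{ij}\widetilde g_{ij}$ have trivial defect, so they satisfy strict descent. Conversely, if some scalar rescaling is strict, then $(\delta a)c=1$, so $c$ is a coboundary.

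\emph{Site-level statement.} This is the step I expect to be the main obstacle, since the rest is formal algebra. I would use the nonabelian long exact sequence attached to the central extension,
\[
\check H^1(X;\GL_r)\longrightarrow\check H^1(X;\PGL_r)\xrightarrow{\ \partial\ }H^2(X;\GG_m),
\]
which is exact as pointed sets (Giraud). The argument then has three parts.

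First, identify $\partial([\bar g])$ with the \v{C}ech construction above. Pass to the colimit over refinements, and use that for $H^1$ and for torsor-type $H^2$ classes the map from \v{C}ech to derived cohomology is compatible with the connecting map. The class obtained is the image of $[c]_{\mathcal U}$ under the comparison map.

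Second, exactness at $\check H^1(X;\PGL_r)$ shows that $\partial([\bar g])=0$ if and only if the $\PGL_r$-torsor lifts to a $\GL_r$-torsor.

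Third, a $\GL_r$-torsor is trivialized on some refinement, and there it gives strict $\GL_r$ transition maps lifting a refinement of $\bar g$. Conversely, strict lifts on any refinement define such a torsor.

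The delicate point is the \v{C}ech-to-derived comparison in degree $2$. I would avoid any acyclicity hypothesis by citing the gerbe interpretation: $H^2(X;\GG_m)$ classifies $\GG_m$-gerbes, $\partial([\bar g])$ is the gerbe of $\GL_r$-lifts of the torsor, and the \v{C}ech cocycle $c$ is exactly the cocycle describing that gerbe relative to the cover $\mathcal U$. The class vanishes precisely when the gerbe is neutral, that is, when it has a global section, which is a global $\GL_r$-lift.
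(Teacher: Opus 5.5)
Your proposal is correct and follows the paper's proof step for step. It uses the kernel argument for $c_{ijk}$, the associativity computation for the cocycle condition, and the relation $c'=(\delta a)c$ both for independence of lifts and for the fixed-cover vanishing criterion; the site-level claim is read off from the connecting map of $1\to\GG_m\to\GL_r\to\PGL_r\to 1$. The only difference is that you justify the site-level identification through Giraud's exact sequence and the gerbe of $\GL_r$-lifts, whereas the paper simply asserts that the image of $[c]_{\mathcal U}$ is the connecting obstruction.
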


\begin{proof}
Since the projective transition maps satisfy $\bar g_{ij}\bar g_{jk}=\bar g_{ik}$, the element $\widetilde g_{ij}\widetilde g_{jk}\widetilde g_{ik}^{-1}$ maps to the identity in $\PGL_r$. It lies in the kernel of $\GL_r\longrightarrow\PGL_r$, which is the central subgroup $\GG_m$. Consequently, there is a unique $c_{ijk}\in\GG_m(U_{ijk})$ such that $\widetilde g_{ij}\widetilde g_{jk}=c_{ijk}\widetilde g_{ik}$.

We next verify the \v{C}ech $2$-cocycle condition. On a quadruple overlap $U_{ijkl}$, associativity gives two expressions for $\widetilde g_{ij}\widetilde g_{jk}\widetilde g_{kl}$. First
\[
(\widetilde g_{ij}\widetilde g_{jk})\widetilde g_{kl}=c_{ijk}\widetilde g_{ik}\widetilde g_{kl}=c_{ijk}c_{ikl}\widetilde g_{il}.
\]
On the other hand,
\[
\widetilde g_{ij}(\widetilde g_{jk}\widetilde g_{kl})=\widetilde g_{ij}c_{jkl}\widetilde g_{jl}=c_{jkl}\widetilde g_{ij}\widetilde g_{jl}=c_{jkl}c_{ijl}\widetilde g_{il},
\]
where we used the fact that $c_{jkl}$ is scalar and therefore central. It follows that $c_{ijk}c_{ikl}=c_{jkl}c_{ijl}$. Equivalently, $c_{jkl}c_{ikl}^{-1}c_{ijl}c_{ijk}^{-1}=1$. This is the multiplicative \v{C}ech $2$-cocycle condition. Hence $c\in Z^2(\mathcal U;\GG_m)$ and defines a class $[c]_{\mathcal U}\in\check H^2(\mathcal U;\GG_m)$.

Suppose that the lifts are changed by scalar functions:
\[
\widetilde g'_{ij}=a_{ij}\widetilde g_{ij},\qquad a_{ij}\in\GG_m(U_{ij}).
\]
Then
\[
\widetilde g'_{ij}\widetilde g'_{jk}= a_{ij}a_{jk}\widetilde g_{ij}\widetilde g_{jk}=a_{ij}a_{jk}c_{ijk}\widetilde g_{ik}=a_{ij}a_{jk}a_{ik}^{-1}c_{ijk}\widetilde g'_{ik}.
\]
Thus the new scalar cocycle is
\[
c'_{ijk}=a_{ij}a_{jk}a_{ik}^{-1}c_{ijk}=(\delta a)_{ijk}c_{ijk}.
\]
Hence $c'$ and $c$ differ by a \v{C}ech coboundary, so $[c]_{\mathcal U}$ is independent of the chosen lifts.

If $[c]_{\mathcal U}=0$, then there is a $\GG_m$-valued \v{C}ech $1$-cochain $a=\{a_{ij}\}$ such that $(\delta a)_{ijk}=c_{ijk}^{-1}$. Replacing $\widetilde g_{ij}$ by $\widetilde g'_{ij}=a_{ij}\widetilde g_{ij}$ gives $c'_{ijk}= (\delta a)_{ijk}c_{ijk}=1$. Hence $\widetilde g'_{ij}\widetilde g'_{jk}=\widetilde g'_{ik}$, so the modified lifts satisfy strict $\GL_r$-descent on $\mathcal U$.

Conversely, suppose that strict $\GL_r$-valued transition maps $\widehat g_{ij}$ lifting the same projective cocycle exist on $\mathcal U$. Since $\widehat g_{ij}$ and $\widetilde g_{ij}$ have the same image in $\PGL_r$, there are scalar functions $a_{ij}\in\GG_m(U_{ij})$ such that $\widehat g_{ij}=a_{ij}\widetilde g_{ij}$. The strict cocycle condition for the $\widehat g_{ij}$ implies $1=(\delta a)_{ijk}c_{ijk}$. Thus $c$ is a \v{C}ech coboundary and $[c]_{\mathcal U}=0$.

Finally, the exact sequence of sheaves
\[
1\longrightarrow\GG_m\longrightarrow\GL_r\longrightarrow\PGL_r\longrightarrow 1
\]
gives the usual connecting obstruction from $\PGL_r$-torsors to $H^2(X;\GG_m)$. The image of $[c]_{\mathcal U}$ in site cohomology is precisely this obstruction class. Its vanishing is equivalent to the existence of a $\GL_r$-torsor lifting the given $\PGL_r$-torsor, which may require refining the original cover.
\end{proof}

\begin{theorem}[Rank-realization gate]
Let $[c]\in H^2(\sC_{\mathrm{learn}};A)$ be a central obstruction class for which an index $\ind([c])$ is defined. If a rank-$r$ locally free $[c]$-twisted realization exists, then $\ind([c])\mid r$. Consequently, a candidate rank $r$ with $\ind([c])\nmid r$ must be rejected by any conservative realization gate. Conversely, if the category of twisted objects is closed under direct sums and contains a rank-$\ind([c])$ object, then every positive multiple of $\ind([c])$ is an admissible rank for such a realization.
\end{theorem}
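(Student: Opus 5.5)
The statement follows directly from the definition of $\ind([c])$ as a gcd, together with closure under direct sums. I will treat the three claims in order: divisibility, the rejection rule, and realization of multiples.

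\emph{Divisibility.} Suppose a rank-$r$ locally free $[c]$-twisted object exists. Then $r$ belongs to the set
\[
S([c])=\{r'>0:\text{there exists a rank-}r'\ [c]\text{-twisted locally free object}\},
\]
whose greatest common divisor is, by definition, $\ind([c])$. The gcd of a set of positive integers divides each of its elements, so $\ind([c])\mid r$. The hypothesis that $\ind([c])$ is defined means that $S([c])$ is nonempty, so the gcd makes sense.

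\emph{Rejection rule.} This is the contrapositive of divisibility. If $\ind([c])\nmid r$, then no rank-$r$ twisted realization can exist. A gate that accepts only ranks admitting a realization is conservative, so it must reject $r$.

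\emph{Multiples are admissible.} Let $E$ be a rank-$\ind([c])$ twisted object and let $k\geq 1$. Form $E^{\oplus k}$. I will argue that this direct sum is again $[c]$-twisted. In the cocycle description, $E$ is given by local free pieces $E_i$ with gluing maps $\phi_{ij}$ satisfying $\phi_{ij}\phi_{jk}=c_{ijk}\phi_{ik}$. The block-diagonal gluing maps $\phi_{ij}^{\oplus k}$ then satisfy the same relation with the same scalar $c_{ijk}$, because $c_{ijk}$ is central and acts diagonally on each summand. The rank of $E^{\oplus k}$ is additive, namely $k\cdot\ind([c])$. Abstractly, closure under direct sums is assumed in the hypothesis, so this check only confirms that the twisting class is preserved. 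This is the same device as in Proposition~\ref{prop:clock-shift-divisibility}, where direct sums of clock--shift pairs realize every rank divisible by $d$.

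The only point needing care is that the direct sum has twisting class exactly $[c]$, rather than some multiple of it. The centrality computation above settles this. Everything else follows from the definition of the gcd.
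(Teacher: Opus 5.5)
Your proposal is correct and follows essentially the same route as the paper: divisibility comes directly from the gcd definition of $\ind([c])$, the rejection rule is its contrapositive, and multiples of the index are realized by $E^{\oplus m}$. Your block-diagonal cocycle check that $E^{\oplus m}$ is still $[c]$-twisted is a harmless elaboration; the paper simply takes this as part of the direct-sum closure hypothesis.
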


\begin{proof}
By definition, $\ind([c])$ divides the rank of every locally free twisted object representing the class $[c]$. Hence a rank-$r$ twisted realization can exist only if $\ind([c])\mid r$. This proves the conservative rejection rule.

For the converse, suppose that $E$ is a rank $\ind([c])$ twisted object. Then for any $m\geq 1$, the direct sum $E^{\oplus m}$ is again $[c]$-twisted and has rank $m\ind([c])$. Thus every multiple of the index is admissible under the stated closure assumption.
\end{proof}

\begin{corollary}[Algorithmic period-index rule]
In the central/projective module of \emph{TwistedMerge}, the period determines the torsion type of the obstruction, while a certified classical index, projective-representation index, or proved lower-bound gate restricts the eligible ranks. The algorithm should activate a rank-$r$ construction only when the available theorem or certificate permits that rank. If no such rank certificate is available, the arithmetic status remains uncertified; divisibility by the period alone is not sufficient.
\end{corollary}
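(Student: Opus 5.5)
The Corollary (Algorithmic period-index rule) follows by assembling three earlier results: the Rank-realization gate, Proposition~\ref{prop:period-index-field}, and Theorem~\ref{thm:finite-heisenberg-index}. It has three parts, and I would prove each separately.

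First, the period records the torsion type. By definition $\per([c])$ is the order of $[c]$. Hence $[c]$ lies in the $\per([c])$-torsion of $H^2(\sC_{\mathrm{learn}};A)$. When $A=\GG_m$ and the Kummer sequence applies, $[c]$ comes from some $\mu_n$-valued class with $n=\per([c])$. This justifies saying that the period fixes which finite central coefficient group the module works with.

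Second, a certified index or lower bound restricts the ranks. I would apply the Rank-realization gate directly. If a rank-$r$ twisted realization exists, then $\ind([c])\mid r$. Any rank with $\ind([c])\nmid r$ is therefore rejected. For the controlled finite-group systems the same argument applies with $\indrep$ in place of $\ind$: cohomologous cocycles have the same set of representation dimensions, so divisibility by $\indrep([\alpha])$ is necessary. For a proved lower-bound gate $r\ge r_0$, the rule is just the contrapositive of that lower bound. Conversely, if the twisted category is closed under direct sums and contains a rank-$\ind$ object, the gate's converse shows every multiple of the index is admissible. So activating exactly the permitted ranks is sound.

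Third, divisibility by the period alone is not sufficient. This is the step where I expect the main work, because it needs a counterexample rather than a formal deduction. I would use Theorem~\ref{thm:finite-heisenberg-index} with $m\ge 2$. There $\per([\alpha])=d$ but $\indrep([\alpha])=d^m>d$. The rank $r=d$ is divisible by the period, yet it admits no $\alpha$-projective representation, since every such dimension is divisible by $d^m$. A gate that tested only $\per\mid r$ would therefore admit an unrealizable rank. This shows the arithmetic status must stay uncertified unless an index or rank certificate is available. Proposition~\ref{prop:period-index-field} gives only $\per\mid\ind$ and equality of prime divisors, so over a field it also cannot raise period-divisibility to admissibility. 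Citing it shows that the gap is genuine in the classical setting as well.
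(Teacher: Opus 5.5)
Your proposal is correct and follows the paper's own implicit justification. The corollary is read off from the Rank-realization gate, and the remark that follows it in the paper uses exactly your finite-Heisenberg separation $\per([\alpha])=d<d^m=\indrep([\alpha])$ for $m\ge 2$ to show that period-divisibility alone cannot certify a rank.

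The one misstep is your closing claim. Proposition~\ref{prop:period-index-field} gives only $\per\mid\ind$ with the same prime divisors, which is compatible with $\per=\ind$, so citing it does not show that the gap is genuine over fields. That would need an explicit example, such as a biquaternion division algebra with period $2$ and index $4$. The corollary does not need this, since your Heisenberg example already suffices.
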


\begin{remark}
This is the mathematical reason for separating period from index in the experiments. In the controlled clock--shift and finite-Heisenberg systems, the phase order gives the period while the projective-representation index gives the arithmetic rank threshold. More generally, a period records torsion type, whereas an index records rank divisibility for the corresponding geometric or projective realization.
\end{remark}

\begin{remark}[Arithmetic admissibility is not a construction]
The divisibility condition $\ind([c])\mid r$, or $\indrep([\alpha])\mid r$ in a controlled finite system, is necessary for the corresponding rank. The algorithm separates four events:
\[
\begin{gathered}
\text{rank admissible},\qquad \text{lift constructed},\\
\text{readout invariant},\qquad \text{candidate validated}.
\end{gathered}
\]
Only a candidate passing all four stages enters the validation envelope.
\end{remark}

\begin{proposition}[Descent after invariant pooling]\label{prop:invariant-pooling-descent}
Let $K$ be connected, let $G$ act on a lifted feature space $W$ through a representation $\rho:G\to\GL(W)$, and choose a root vertex. Let $H\leq G$ be the subgroup generated by the path-difference holonomies based at that root; equivalently, after a spanning tree is chosen, $H$ is generated by the fundamental-cycle holonomies associated with the non-tree edges. Suppose that $P:W\to Y$ satisfies
\[
P\rho(h)=P\qquad\text{for every }h\in H.
\]
Then the prediction obtained by transporting a local lifted feature to the root and applying $P$ is independent of the chosen transport path. It defines a globally consistent pooled predictor.
\end{proposition}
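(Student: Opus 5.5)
I will set up the transport concretely and then reduce path independence to the invariance hypothesis $P\rho(h)=P$. Fix the root vertex $r$. For an edge path $\gamma=(v_0,v_1,\ldots,v_m)$ in the $1$-skeleton of $K$ from a vertex $v_0=i$ to $v_m=r$, define the path-ordered transport $g_\gamma=g_{v_0v_1}g_{v_1v_2}\cdots g_{v_{m-1}v_m}\in G$, using the inverse convention $g_{ba}=g_{ab}^{-1}$ for edges traversed against their chosen orientation. The pooled predictor at $i$ along $\gamma$ is then $P\rho(g_\gamma)(w)$ for a local lifted feature $w\in W$, where I will treat $\rho$ as acting in the path-ordered convention of Definition~\ref{def:finite-descent-instance}. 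Because $K$ is connected, at least one such path exists for every vertex, so the predictor is defined everywhere; the content of the claim is independence of $\gamma$.

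Given two paths $\gamma,\gamma'$ from $i$ to $r$, I will form the loop at $r$ obtained by traversing $\gamma$ backwards and then $\gamma'$. The key identity is $g_{\gamma'}=g_\gamma\,h$ with $h=g_{\bar\gamma}g_{\gamma'}$, where $\bar\gamma$ is the reversed path, so that $h$ is the holonomy of a closed loop based at $r$. By definition this $h$ is a path-difference holonomy, hence lies in $H$. Then
\[
P\rho(g_{\gamma'})=P\rho(h)\rho(g_\gamma)\quad\text{or}\quad P\rho(g_\gamma)\rho(h),
\]
depending on the order convention. In the first form, invariance gives $P\rho(g_{\gamma'})=P\rho(g_\gamma)$ directly.

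The main obstacle is this ordering issue. The hypothesis $P\rho(h)=P$ only absorbs a holonomy placed \emph{adjacent to} $P$, that is, at the root end. I will therefore write the path difference as a loop based at the root, $h=g_{\bar\gamma'}{}^{-1}\cdots$, arranged so that the holonomy factor sits next to $P$. Concretely, in path order, transporting from $i$ to $r$ along $\gamma'$ equals transporting along $\gamma$ and then around the root-based loop $\bar\gamma\cdot\gamma'$ reversed appropriately. The loop factor is then the last map applied before $P$, and it is cancelled by $P\rho(h)=P$. If the convention places the loop at $i$ instead, I will conjugate it to a root-based loop using $g_\gamma$. That conjugation requires $H$ to be the set of root-based holonomies, which is exactly how $H$ is defined, so nothing beyond the hypothesis is needed.

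Finally, I will check the equivalence with the spanning-tree description. Every root-based loop in the edge-path groupoid is a product of tree-path conjugates of fundamental cycles associated with non-tree edges. Hence the two generating sets give the same subgroup $H$, and invariance under the generators extends to all of $H$ because $h\mapsto P\rho(h)$ respects products: if $P\rho(h_1)=P$ and $P\rho(h_2)=P$, then $P\rho(h_1h_2)=P\rho(h_1)\rho(h_2)=P\rho(h_2)=P$. It follows that the predictor is well defined at every vertex. This gives a single global pooled predictor $x\mapsto P\rho(g_{\gamma_i})w_i$, independent of all choices.
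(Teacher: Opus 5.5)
Your proposal is correct and follows essentially the same argument as the paper: two transports to the root differ by a root-based holonomy $h\in H$ applied last, i.e.\ $T_{\gamma'}=\rho(h)T_\gamma$, and the hypothesis $P\rho(h)=P$ absorbs it. You also make explicit two points the paper leaves implicit: the construction $h=g_{\bar\gamma}g_{\gamma'}$ via the inverse convention, and the equivalence of the path-difference and spanning-tree descriptions of $H$.
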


\begin{proof}
Let $T_p$ and $T_q$ be the transport operators associated with two paths from the same local chart to the root. With the chosen path-order convention, the two operators differ by an element of the residual holonomy subgroup: there is an $h\in H$ such that $T_q=\rho(h)T_p$. Therefore
\[
PT_q=P\rho(h)T_p=PT_p.
\]
Thus the pooled prediction is path-independent.
\end{proof}

\begin{remark}
For the regular representation of a finite group $\Gamma$, the averaging map
\[
P((v_\gamma)_{\gamma\in\Gamma})=\frac{1}{|\Gamma|}\sum_{\gamma\in\Gamma}v_\gamma
\]
satisfies $P\rho(g)=P$ for every $g\in\Gamma$. This is the mechanism checked in the executed two-loop nonabelian experiment. It explains how a nontrivial holonomy can remain present before pooling while becoming harmless after the invariant readout.
\end{remark}

\subsection{Core certification pipeline}\label{subsec:core-pipeline}
The mandatory part of TwistedMerge is a certification and abstention pipeline. Projective rank searches and nonabelian branch representations are optional research branches; they are not required to return a diagnostic decision.

\begin{breakablealgorithm}
\caption{Core TwistedMerge certification pipeline}
\label{alg:twistedmerge-core}
\begin{algorithmic}[1]
\Require Checkpoints $\{M_i\}$; frozen complex-construction data $D_K$; alignment, certification, selection, and test splits; gauge families; candidate central coefficient systems; structural and statistical thresholds.
\Ensure A selected ordinary or synchronized predictor and a diagnostic status for every tested structural branch.
\State Construct $K=\Phi(D_K)$ before fitting transitions or inspecting residuals.
\State Build ordinary fixed-chart candidates and estimate pairwise transitions on $D_{\mathrm{align}}$.
\State Synchronize each admissible gauge family and add every strict candidate that passes the pairwise gate.
\State On $D_{\mathrm{cert}}$, compute inverse-consistency, projection-fidelity, triangle, centrality, and closure diagnostics.
\If{a central coefficient system passes all structural gates}
    \State Compute a confidence interval for distance to coboundaries.
    \State Return \emph{trivial}, \emph{nontrivial on $K$}, or \emph{uncertified} by Theorem~\ref{thm:frozen-complex-certificate}.
    \If{the status is nontrivial and a proved projective-rank construction is available}
        \State Attempt it only at admissible ranks; admit it only after overlap and readout verification.
    \EndIf
\Else
    \State Treat the residual as noncentral holonomy or as uncertified; do not assign an $H^2$ class.
\EndIf
\If{a nonabelian representation and invariant readout are independently certified}
    \State Admit the branch predictor and record its parameter, memory, and inference multipliers.
\EndIf
\State Select among admitted candidates on $D_{\mathrm{select}}$; evaluate the frozen choice once on $D_{\mathrm{test}}$.
\State Return the selected predictor together with the complete diagnostic and abstention record.
\end{algorithmic}
\end{breakablealgorithm}

The detailed candidate-generation algorithm, all diagnostics, and computational-cost accounting are given in Appendix~\ref{app:full-algorithm}. The reduced form above makes the scientific claim explicit: TwistedMerge is a regime classifier with conservative fallback, we do not guarantee that a projective or branch candidate will outperform ordinary merging.

\section{Primary experiments}\label{sec:experiments}
The primary experiments are organized by hypothesis. They ask whether a cycle defect is synchronization-removable, whether a concrete factor-merging rule is gauge-invariant, whether the three-way certificate has controlled decision utility under noise, and whether the resulting diagnostics generalize to natural checkpoint collections. Secondary benchmarks and structural sanity checks are reported in Appendix~\ref{app:secondary-experiments}.

\subsection{Primary hypotheses and statistical protocol}\label{subsec:primary-statistical-protocol}
The inferential unit is an independently trained checkpoint group or base-model seed. Exact algebraic rows are reported as deterministic audits. Confirmatory and exploratory analyses are separated in Table~\ref{tab:statistical-protocol}.

\begin{table*}[!tbp]
\centering
\caption{Statistical protocol for the primary experimental claims.}
\label{tab:statistical-protocol}
\small
\begin{adjustbox}{max width=\textwidth}
\begin{tabular}{L{0.22\textwidth}L{0.16\textwidth}L{0.18\textwidth}L{0.18\textwidth}L{0.20\textwidth}}
\toprule
Hypothesis & Inferential unit & Data roles & Status & Primary endpoint \\
\midrule
Planted alignment defect is removable & trained base-model seed & train; align; evaluate & controlled causal & degradation relative to the uncorrupted aligned merge \\
Adapter-factor merge is gauge-stable & independently trained adapter group & fixed trained group; equivalent scrambles; held-out evaluation & confirmatory gauge audit & relative merged-update change and prediction disagreement \\
Central certificate abstains safely & case/noise/noise-type seed & exact generator; certification only & controlled calibration & certification coverage, uncertainty, rejection, and false-lift rate \\
Natural cycle residual predicts degradation & complete checkpoint collection & train; align; leave-one-setting-out prediction; test & preregistered confirmatory & held-out $R^2$ for weight-average degradation \\
Natural central or holonomy branch activates & independently trained collection or lineage & separate fitting, gating, and evaluation & confirmatory negative audit & preregistered gate pass and incremental predictive or corrective value \\
\bottomrule
\end{tabular}
\end{adjustbox}
\end{table*}

All final model choices use validation or selection data, while test data are evaluation-only. The complex-construction rule is conceptually assigned its own split $D_K$. In the controlled tetrahedral witness, the incidence structure is prescribed by construction; in the natural checkpoint studies, no nontrivial $K$-based central claim is promoted because an application-grounded complex-selection and sensitivity protocol has not yet passed the required gate.

Where several fixed settings or complex thresholds are inspected, exploratory screens are not promoted to a confirmatory claim merely because one row passes. The one-of-sixteen fixed-setting correlation result is reported as exploratory, while the leave-one-setting-out study is the primary natural prediction test. For a predeclared finite complex family, Theorem~\ref{thm:complex-family-certificate} supplies simultaneous error control through the reported $\sum_\lambda\delta_\lambda$ budget.

\subsection{Primary practical audit: trained low-rank adapter gauges}\label{subsec:trained-lora-gauges}
The exact-gauge discussion above concerns hidden-unit coordinates in ReLU networks. A later repository program tests an analogous gauge issue on trained low-rank adapters of LoRA form \cite{Hu2021LoRA}. For a rank-$r$ update
\[
\Delta W=BA,
\qquad
B\in\mathbb R^{d_{\mathrm{out}}\times r},
\quad
A\in\mathbb R^{r\times d_{\mathrm{in}}},
\]
the factorization is unchanged by
\[
(B,A)\longmapsto (BQ,Q^{-1}A),
\qquad Q\in\mathrm{GL}_r.
\]
Thus an averaging rule applied directly to $A$ and $B$ should be tested for invariance under equivalent rank-space representations.

The experiment reuses five independent groups of eight chart-specific rank-$4$ residual adapters over frozen ResNet-18 CIFAR-10 features. The primary audit uses three preregistered well-conditioned gauge families---orthogonal, positive diagonal with condition number at most $8$, and dense with condition number at most $30$---and twenty scrambles per family and group, giving $300$ primary rows. Individual effective updates, logits, predictions, and validation accuracies are preserved to numerical precision. Naive factor averaging is not invariant, whereas whitened global synchronization and deterministic full-delta SVD are stable; Table~\ref{tab:trained-lora-gauge-stability} records the largest observed deviations.

\begin{table*}[!tbp]
\centering
\caption{Gauge stability on trained rank-$4$ residual adapters. The individual-adapter row checks preservation under the reparameterization itself; the remaining rows compare merged outputs across equivalent representations.}
\label{tab:trained-lora-gauge-stability}
\small
\begin{adjustbox}{max width=\textwidth}
\begin{tabular}{lccc}
\toprule
Method or check & maximum relative update change & maximum logit change & maximum prediction disagreement \\
\midrule
Individual-adapter preservation & $1.447298\times10^{-15}$ & $2.131628\times10^{-14}$ & $0$ \\
Naive factor average & $1.034210\times10^{1}$ & $1.156529\times10^{1}$ & $0.579250$ \\
Whitened global synchronization & $2.709900\times10^{-13}$ & $9.865442\times10^{-13}$ & $0$ \\
Deterministic full-delta SVD & $2.711270\times10^{-15}$ & $1.154632\times10^{-14}$ & $0$ \\
\bottomrule
\end{tabular}
\end{adjustbox}
\end{table*}

Across all primary families, the reported group-bootstrap difference in relative merged-update change between global synchronization and naive averaging is $-3.218202$, with $95\%$ confidence interval $[-3.437573,-2.998831]$. This is an invariance result. Full-delta SVD is at least as stable and exceeds global synchronization by $0.051725$ in mean test accuracy across the five trained groups. Moreover, every primary natural cycle-aware row exceeds the preregistered cycle gate and selects the dense full-delta SVD fallback; no factor-only natural holonomy correction is obtained.

The associated process-isolated systems grid spans dimensions $768$--$4096$, ranks $4$--$32$, and $4$--$16$ adapters. All $336$ method/shape cases and $1008$ timed trials finish without failure or timeout. Pairwise and global factor-space methods make no dense effective-update allocation. At dimension $4096$, global synchronization has lower measured incremental peak resident-set size than deterministic dense SVD in all twelve rank/count configurations and is faster in ten of the twelve; so the runtime statement is not uniform. Table~\ref{tab:trained-lora-systems} gives one representative case.

\begin{table*}[!tbp]
\centering
\caption{Representative process-isolated systems measurement at dimension $4096$, rank $8$, and eight adapters. Temporary-memory figures are analytical counts. These numerical fixtures are scaled from the trained factors and do not provide application-accuracy evidence.}
\label{tab:trained-lora-systems}
\small
\begin{adjustbox}{max width=\textwidth}
\begin{tabular}{lcccc}
\toprule
Method & dense effective update & incremental peak RSS & temporary memory & median time \\
\midrule
Whitened global synchronization & no & $11{,}845{,}632$ bytes & $2{,}113{,}536$ bytes & $0.009687$ s \\
Deterministic full-delta SVD & yes & $54{,}214{,}656$ bytes & $68{,}157{,}440$ bytes & $0.061481$ s \\
\bottomrule
\end{tabular}
\end{adjustbox}
\end{table*}

The scope is limited. The trained corpus contains one residual LoRA-form feature layer and a classification head, not a multi-layer transformer LoRA stack. The ill-conditioned $10^8$ boundary is excluded from the primary claim and produces $204$ alignment failures. The experiment does not establish adapter fingerprinting, unrestricted $\mathrm{GL}_r$ robustness, accuracy superiority, uniqueness, or a natural Brauer or holonomy class.

The experiment is deliberately reported as a bounded systems and invariance study. It does not satisfy the stronger modern-scale test of multilayer transformer or vision-transformer adapters proposed in Section~\ref{sec:limitations}; that experiment remains necessary for a performance-oriented claim.

\subsection{Causal parameter-level bridge: planted alignment inconsistency}\label{subsec:planted-alignment-bridge}
The controlled obstruction examples below begin from prescribed cocycles or holonomy representations. We add an intermediate neural-network experiment in which the local functions are held fixed while the observed alignment data are perturbed. For each seed, one MNIST MLP is trained and four copies are produced by exact hidden-unit permutations together with the compensating classifier-column permutations. The copies are functionally equivalent: the mean base accuracy is $0.8632$, the maximum measured copy-accuracy standard deviation is $0$, and the maximum logit disagreement is approximately $2.9\times10^{-6}$.

One observed alignment edge is then corrupted at four defect levels. Table~\ref{tab:planted-alignment-inconsistency} and Figure~\ref{fig:planted-alignment-inconsistency} report the central-involution family. Git Re-Basin-style \cite{Ainsworth2022GitReBasin} degradation increases monotonically from $0$ to $0.0387$, and the Spearman correlation between cycle score and degradation is $0.8741$. \CtwoMthree{}-style \cite{Crisostomi2024C2M3} synchronization removes the one-edge inconsistency at every level. The rank-lift branch agrees with \CtwoMthree{} and gives no additional gain.

\begin{table*}[!tbp]
\centering
\caption{Causal planted-alignment benchmark on functionally identical MNIST MLP copies. The degradation is measured relative to the uncorrupted aligned merge.}
\label{tab:planted-alignment-inconsistency}
\small
\begin{tabular}{lccc}
\toprule
Defect level & Git Re-Basin degradation & \CtwoMthree{} degradation & Rank-lift minus \CtwoMthree{} \\
\midrule
none & $0.0000$ & $0.0000$ & $0.0000$ \\
low & $0.0031$ & $0.0000$ & $0.0000$ \\
medium & $0.0149$ & $0.0000$ & $0.0000$ \\
high & $0.0387$ & $0.0000$ & $0.0000$ \\
\bottomrule
\end{tabular}
\end{table*}

\begin{figure}[!tbp]
\centering
\includegraphics[width=\columnwidth]{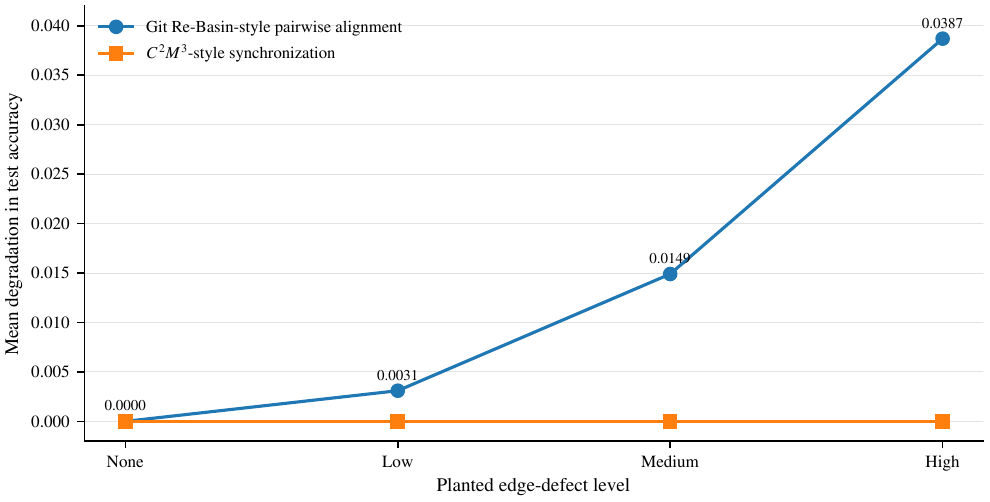}
\caption{Causal parameter-level alignment inconsistency. The vertical axis is the decrease in test accuracy relative to the uncorrupted aligned merge. The pairwise merge degrades with the planted edge defect, while cycle-consistent synchronization removes this particular inconsistency.}
\label{fig:planted-alignment-inconsistency}
\end{figure}

A random noncentral corruption produces a weaker trend, with Spearman correlation $0.6967$ and high-defect degradation $0.0190$. This comparison shows that the cycle statistic is not a universal scalar explanation of every merging failure. More importantly, the central planted family is a synchronization-positive benchmark: the residual is removed by an ordinary edge correction.

\subsection{Controlled central and projective structures}\label{subsec:controlled-central-projective}
We now turn to the controlled central and projective experiments. Three levels must be kept separate. A cocycle-level obstruction witness proves that a specified edge correction is impossible. A projective or Morita construction realizes a central commutation relation at an admissible rank. A charted branch predictor produces an invariant output on a controlled task. None of the last two statements implies that a non-coboundary cocycle has become a coboundary.

\subsubsection{Tetrahedral $H^2(\mu_2)$ witness and prediction-level branch task}\label{subsec:h2-mu2-period-index}
Let $K=\partial\Delta^3$ and let the face signs be
\[
(c_{012},c_{013},c_{023},c_{123})=(-1,+1,+1,+1).
\]
By Theorem~\ref{thm:tetrahedral-mu2-certificate}, this cocycle is not a coboundary. The negative-face rate is $0.250$, but the exact certificate is the product of the four face signs, not the raw rate. By Proposition~\ref{prop:constant-edge-tetrahedron}, this one-negative-face pattern is not produced by six context-independent edge matrices satisfying the exact central hypotheses. We treat it as a cocycle-level oracle witness on the prescribed comparison complex.

\begin{table*}[!tbp]
\centering
\caption{Tetrahedral $H^2(\mu_2)$ cocycle-level witness. No rank-lift success is asserted for the non-coboundary class; the current edge-level algorithm records that no permitted edge correction removes it.}
\label{tab:h2-mu2-obstruction-data}
\small
\begin{tabular}{lcccc}
\toprule
Case & Local loss & Pairwise loss & Negative-face rate & Coboundary? \\
\midrule
trivial face cocycle & $0.000$ & $0.000$ & $0.000$ & yes \\
nontrivial face cocycle & $0.000$ & $0.000$ & $0.250$ & no \\
\bottomrule
\end{tabular}
\end{table*}

A separate controlled finite central task supplies a charted representation with branch count $q=2$ and tests prediction-level recovery. This is the benchmark shown in Table~\ref{tab:controlled-finite-central-all-methods}. Strict pairwise and synchronized methods plateau at $0.7500$, while the correct $q=2$ branch and a learned context router reach $1.0000$. Wrong-context and wrong-twist controls remain at chance. The successful rows show that the supplied charted representation carries task-relevant central structure; they are not a construction of ordinary edge-level descent for the non-coboundary tetrahedral class.

\begin{table*}[!tbp]
\centering
\caption{Controlled finite central charted-prediction benchmark with branch count $q=2$. Branch, router, and ensemble rows are not capacity-matched single-model outputs.}
\label{tab:controlled-finite-central-all-methods}
\small
\begin{tabular}{lcc}
\toprule
Method & Width 32 acc. & Width 64 acc. \\
\midrule
ordinary weight average & $0.4993$ & $0.4998$ \\
Git Re-Basin pairwise & $0.7500$ & $0.7500$ \\
\CtwoMthree{} synchronized & $0.7500$ & $0.7500$ \\
no-twist branch control & $0.7500$ & $0.7500$ \\
validation-selected branch ensemble & $0.7500$ & $0.7500$ \\
distilled charted single model & $0.7500$ & $0.7500$ \\
parameter-matched wide control & $0.7373$ & $0.7393$ \\
random branch ensemble & $0.5500$ & $0.5167$ \\
wrong-context control & $0.5000$ & $0.5000$ \\
wrong-twist control & $0.5000$ & $0.5000$ \\
correct $q=2$ charted branch & $\mathbf{1.0000}$ & $\mathbf{1.0000}$ \\
learned context router & $\mathbf{1.0000}$ & $\mathbf{1.0000}$ \\
\bottomrule
\end{tabular}
\end{table*}

\begin{figure}[!tbp]
\centering
\includegraphics[width=\columnwidth]{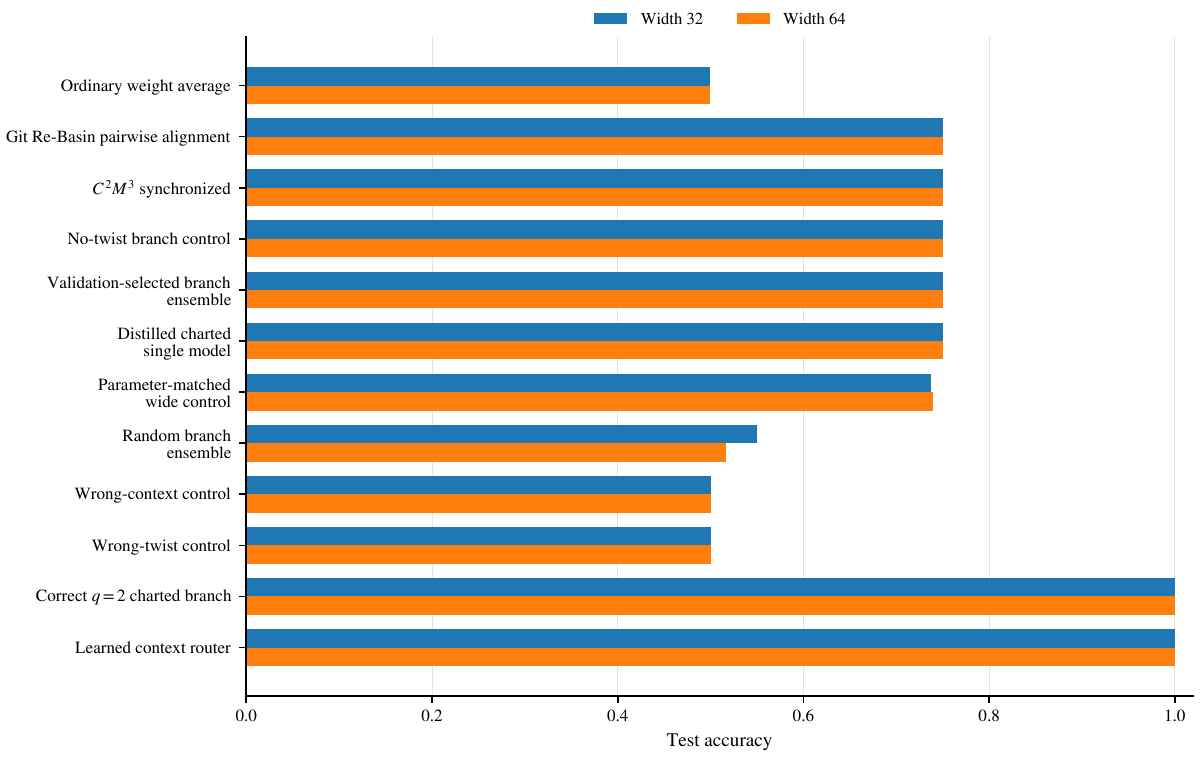}
\caption{Controlled finite central charted-prediction benchmark on the full accuracy scale $[0,1]$. The branch count is $q=2$. The correct branch and router recover the task, but this prediction-level result does not make the tetrahedral non-coboundary class vanish.}
\label{fig:controlled-finite-central-all-methods}
\end{figure}

The repository also contains a complete-graph $\mu_2$ edge-noise sweep. The face cochain in this experiment is $c=\delta b$, so it is a coboundary rather than a nontrivial $H^2$ class. At low triangle-residual rates, ordinary and rank-lifted predictions agree. At residual rates $0.3992$, $0.4795$, and $0.4947$, ordinary accuracy is $0.9214$, $0.8100$, and $0.7196$, while the lifted branch remains at $0.9338$, $0.9351$, and $0.9356$. This is a robustness experiment for structured edge inconsistency.

\begin{figure}[!tbp]
\centering
\includegraphics[width=\columnwidth]{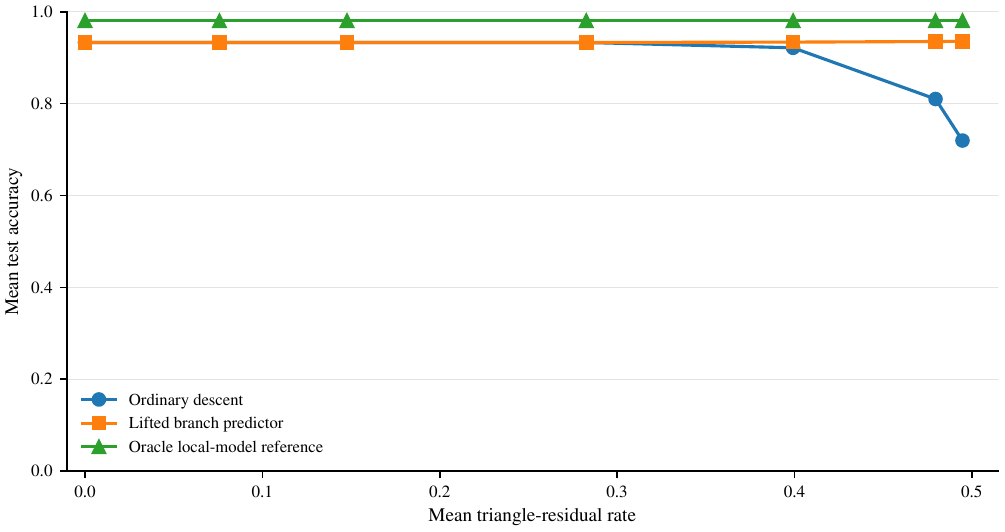}
\caption{Synthetic $\mu_2$ edge-noise sweep on the full accuracy scale $[0,1]$. The horizontal axis is the mean triangle-residual rate before complete edge-level repair. Because the face cochain is induced from an edge cochain, it is a coboundary and not a nontrivial $H^2$ class. The lifted curve is a branch-prediction control rather than a cohomology certificate.}
\label{fig:mu2-accuracy-vs-triangle-residual}
\end{figure}

\subsubsection{Robust certification under noisy transition maps}\label{subsec:robust-central-certification}
The exact rank identities do not determine what to do with noisy estimates. The robust calibration uses the three-way output from Proposition~\ref{prop:margin-obstruction-decision}. Here $\tau_{\mathrm{phase}}$ is the tolerance for the scalar phase-relation residual. Under the selected policy
\[
\tau_{\mathrm{cent}}=\tau_{\mathrm{phase}}=3\times10^{-4},\qquad \text{confidence margin}=0.25,
\]
the small-noise certification rate is $1$, the false-central rate is $0$, the false-lift rate is $0$, the medium-noise uncertainty rate is $0.7233$, and the large-noise rejection rate is $1$. No grouped trivial or noncentral control produces a false lift. Uncertain rows do not activate a lift.

\begin{figure}[!tbp]
\centering
\includegraphics[width=\columnwidth]{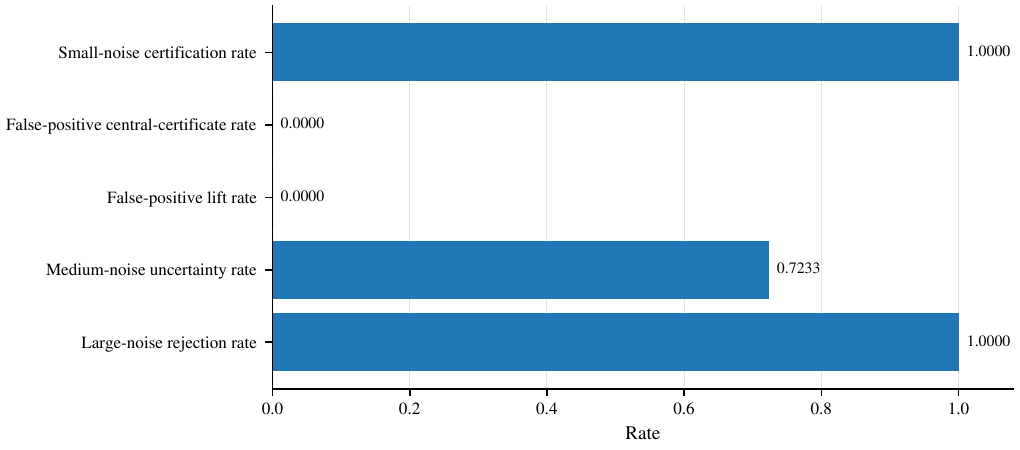}
\caption{Robust period-index certification and abstention. Intermediate noise is assigned to an uncertain state rather than being forced into a binary decision.}
\label{fig:robust-period-index-calibration}
\end{figure}

\subsubsection{Decision utility and threshold sensitivity}\label{subsec:decision-utility}
The calibration can be read as a selective decision system. ``Coverage'' is the fraction of designated small-noise positive cases that receive a central/projective certificate; ``selective error'' is the false-lift rate among tested decisions; intermediate-noise uncertainty records abstention rather than an error. Table~\ref{tab:certificate-threshold-sensitivity} reports the precomputed policy sweep at fixed confidence margin $0.25$.

\begin{table*}[!tbp]
\centering
\caption{Controlled coverage--abstention trade-off for the centrality and phase tolerances. All tested policies have zero false-central and false-lift rate on the grouped negative controls; the table does not claim calibration under natural distribution shift.}
\label{tab:certificate-threshold-sensitivity}
\small
\begin{tabular}{ccccc}
\toprule
Tolerance & small-noise coverage & medium-noise uncertainty & false-lift rate & large-noise rejection \\
\midrule
$10^{-5}$ & $0.6417$ & $0.8892$ & $0$ & $1.0000$ \\
$3\times10^{-5}$ & $0.8017$ & $0.8892$ & $0$ & $1.0000$ \\
$10^{-4}$ & $0.9506$ & $0.8892$ & $0$ & $1.0000$ \\
$3\times10^{-4}$ & $1.0000$ & $0.7233$ & $0$ & $1.0000$ \\
\bottomrule
\end{tabular}
\end{table*}

The selected policy increases positive coverage while retaining zero false lifts in this controlled grid. Its practical utility is rejection of unsupported structure. A complete risk--coverage study across architectures and distribution shifts remains open.

\subsection{Natural-data and architecture boundary}\label{subsec:natural-boundary}
The controlled experiments identify their coefficient group and representation by construction. Natural neural checkpoints do not provide this information. The repository contains several repeated-seed and held-out tests designed to prevent a descriptive residual from being promoted into a general obstruction claim.

\subsubsection{Natural diagnostic prediction}
An earlier quality-gated permutation/activation study using the repository architecture labelled \texttt{mlp2} evaluated sixteen primary fixed settings with thirty observed seeds per setting. One setting passed the predefined positive-correlation gate: Fashion-MNIST, $N=3$, no shift, activation matching, with Pearson correlation $0.2420$, bootstrap lower bound $0.0065$, and Spearman correlation $0.2629$. The other fifteen settings did not pass. A later monomial-gauge verifier, which varied MNIST and Fashion-MNIST, $N\in\{3,4\}$, widths $64,128$, no shift and input noise, and monomial activation or weight matching, produced no passing observed setting.

The preregistered held-out study uses $120$ natural checkpoint collections and leaves out one complete $(N,\text{width})$ setting at a time. Its primary predictor is the cycle residual and its target is weight-average degradation. The cycle residual has Pearson correlation $-0.1828$, Spearman correlation $-0.2626$, and held-out $R^2=-0.2055$. Pairwise alignment loss has held-out $R^2=0.3872$, while validation loss and validation delta have held-out $R^2=0.5108$ and $0.9723$. Thus cycle residual is not supported as a general held-out predictor in this dataset.

\begin{figure}[!tbp]
\centering
\includegraphics[width=\columnwidth]{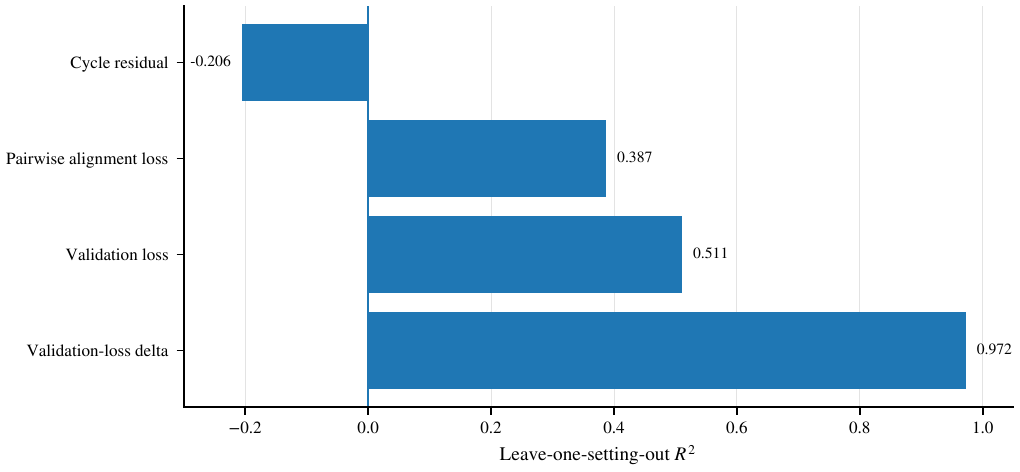}
\caption{Natural-data held-out prediction boundary. The horizontal axis is leave-one-setting-out $R^2$. ``Validation-loss delta'' is the validation-loss change relative to the synchronized baseline used in the repository audit. The cycle residual has negative held-out $R^2$, while ordinary validation information is substantially more predictive in this run.}
\label{fig:natural-diagnostic-prediction-boundary}
\end{figure}

\subsubsection{Natural holonomy applications and model lineages}\label{subsec:natural-holonomy-applications}
The most recent repository programs test whether the structural diagnostics yield a positive practical holonomy application. The first program reuses a shared frozen-encoder CIFAR-10 corpus with eight $D_4$ chart adapters and five training seeds. Its held-out-selected weight-derived connections are trivial or coboundary. Among fifteen nontrivial alternative seed--estimator settings, all fifteen are noncentral under the strict, medium, and loose tolerance policies. No natural seed--estimator setting is a central finite-order candidate, so the term Brauer class is not warranted for this corpus.

The predictive and corrective gates are also negative. Orbit invariant pooling is $0.0160$ below a generic mixture-of-experts control and $0.0062$ below a matched random-branch control, with both reported confidence intervals excluding zero in the unfavorable direction. A mixed natural/controlled linter improves descriptively, but on natural rows alone the full holonomy diagnostic changes AUROC by $-0.0273$, with reported $95\%$ cell-bootstrap interval $[-0.0429,-0.0057]$. In controlled finite-Heisenberg carrier layers on the same frozen-image features and saved logits, the $(\operatorname{period},\operatorname{index})$ cases $(2,2)$, $(2,4)$, and $(3,3)$ recover the exact minimum structurally coherent capacity for all five seeds. The coherent lift nevertheless ties a parameter-matched generic unitary carrier in accuracy in every predicted-index comparison. This is a controlled structural validation, not a superior practical capacity planner.

A separate confirmatory model-lineage audit uses one frozen ResNet-18 encoder, rank-$4$ feature adapters, a classifier, three deterministic CIFAR-10 corruptions, and five independent training seeds as the inferential units. Different learning orders produce measurable terminal differences, but no loop/layer row passes the frozen nonidentity-stability gate, no commutator row passes the noncommutation gate, and the relevant admissible loop distances are only $1.933\times10^{-14}$ to $6.229\times10^{-14}$. The four preregistered predictive and corrective hypotheses all fail; in particular, there are zero harmful raw merge rows and no held-out improvement beyond pairwise drift. The largest raw adapter-layer distance is $0.9734$, but it is not stable and is not promoted to a holonomy signal.

\begin{table*}[!tbp]
\centering
\caption{Recent practical holonomy and model-lineage audits. These are negative or controlled-structural results and do not establish a natural Brauer class, a practical invariant-pooling advantage, or a general model-lineage holonomy predictor.}
\label{tab:latest-holonomy-application-boundary}
\small
\begin{adjustbox}{max width=\textwidth}
\begin{tabular}{lll}
\toprule
Audit & Recorded outcome & Interpretation \\
\midrule
Natural central certificate & $0$ central finite-order candidates & Brauer terminology not warranted \\
Invariant pooling & $-0.0160$ vs generic MoE; $-0.0062$ vs random branch & practical gate fails \\
Natural-only linter & AUROC delta $-0.0273$, CI $[-0.0429,-0.0057]$ & no incremental mergeability value \\
Controlled real-feature carrier & exact thresholds $(2,2),(2,4),(3,3)$; accuracy delta $0$ vs generic & structural validation only \\
Model-lineage loops & $0$ stable nonidentity rows; $0$ noncommuting rows & no incremental predictive/corrective value \\
\bottomrule
\end{tabular}
\end{adjustbox}
\end{table*}

These audits strengthen the conservative interpretation. The supported practical use is certification and rejection: a selected trivial connection is kept trivial, noncentral alternatives are not forced into a Brauer interpretation, and unstable loop estimates are not used to activate a lift.

The conclusion of the natural-data section is negative. The repository does not certify a natural central, Brauer, or period-index class; cycle residual is not a general held-out predictor; and exact gauge refinements do not displace greedy soup as the strongest generic boundary baseline in these experiments. These outcomes are consistent with the conservative design of Algorithm~\ref{alg:twistedmerge}: when the structural gates do not pass, no lift is activated.

\subsection{Secondary evidence summary}\label{subsec:secondary-evidence-summary}
The remaining experiments support the taxonomy. The independent-seed MLP ladder places the selector at the greedy-soup boundary; the matched attribution audit does not establish a TwistedMerge-specific selector gain. Same-base task-vector methods can outperform the original checkpoint-only soup because a common chart is supplied. Finite-Heisenberg and learned time-frequency systems verify projective-representation rank thresholds. The executed two-loop $S_3$ and $D_4$ experiment verifies noncommuting holonomy and invariant pooling but has no discriminative accuracy value. Batch-normalization, CNN, bridge-dataset, and branch-lift audits are engineering or boundary results. Full tables, figures, and claim qualifications appear in Appendix~\ref{app:secondary-experiments}.

\section{Limitations and decisive next tests}\label{sec:limitations}
The first limitation is the construction of $K$. Definition~\ref{def:frozen-comparison-complex} gives an ex ante protocol, Theorem~\ref{thm:complex-family-certificate} controls a predeclared threshold family, and Proposition~\ref{prop:complex-refinement} gives an exact topology-sensitivity test, but the present natural checkpoint collections do not yet supply an application-grounded higher-overlap rule that persists across datasets and architectures. The controlled tetrahedral boundary is prescribed; it should not be mistaken for a topology discovered from four ordinary checkpoints.

The second limitation is a realization gap. The one-negative-face $\mu_2$ class is a cocycle-level oracle witness. The separate $q=2$ charted task realizes a twisted prediction rule, but it does not make the original class a coboundary and it is not an ordinary edge-level descent realization of the same neural system. The decisive next controlled experiment should estimate overlap-dependent transition sections and certify the nontrivial class and the successful capacity-accounted projective predictor in one data-generating mechanism.

The third limitation is scale. The low-rank-adapter audit uses one residual feature layer over frozen ResNet-18 features. It is not a multilayer LoRA stack on a transformer or vision transformer. A journal-level practical follow-up should compare naive factor averaging, pairwise and global factor synchronization, dense delta averaging or SVD, and applicable task-vector baselines across several backbones, task families, adapter ranks, and independent training groups. It should report accuracy, invariance error, peak accelerator memory, runtime, and inference cost under a predeclared primary endpoint.

The fourth limitation is decision utility. The controlled calibration reports a useful coverage--abstention trade-off and zero false lifts on the tested negative controls, but it does not yet provide natural distribution-shift calibration, full gate ablations, or regret relative to an oracle regime selector. The current natural evidence instead shows that pairwise and validation diagnostics are more predictive than cycle residual. Accordingly, the supported use of the certificate is conservative rejection rather than natural accuracy forecasting.

Finally, several mathematical statements in the appendices are standard or adapted background. The new theorem-level content is deliberately isolated in Table~\ref{tab:theory-status}. The neuro-algebraic-geometry discussion is retained only as additional context.

\section{Conclusion}\label{sec:conclusion}
Cycle inconsistency in model merging is not automatically cohomology. A higher-obstruction claim depends on a frozen comparison complex, coherent transition data, a fixed coefficient system, centrality, closure, and a stated repair model. TwistedMerge organizes these requirements into a three-way certification pipeline with ordinary or synchronized fallback.

The causal planted benchmark shows why the distinction matters: a nonzero cycle score can be produced by an inconsistent observed edge and removed completely by synchronization. The controlled central experiments show the opposite possibility relative to a prescribed complex: a closed central cocycle may be non-coboundary, and noisy estimates should then be certified, rejected, or left uncertain. The low-rank-adapter audit supplies a practical gauge example in which naive factor averaging depends on an equivalent representation, although dense SVD remains the accuracy boundary. Natural checkpoint and lineage studies do not support a general cycle predictor or a natural central class.

The established contribution is diagnostic. The framework states what has been ruled out, what has merely been represented in a larger predictor, and when no higher-order conclusion is justified. The next decisive evidence must come from an ex ante natural comparison complex, an end-to-end obstruction realization, or a modern multilayer adapter study with measurable decision and systems utility.

\section*{Code and data availability}
The repository state audited for this revision is commit \texttt{b0e1ac4}. The complete source package contains the clean manuscript, a referee-change version, the bibliography, every referenced figure, the source CSV files for generated figures, deterministic plotting code, the comparison-complex and claim-evidence audits, and the report snapshots used for the numerical statements. No invalidated nonabelian accuracy-evidence artifact is used.

\appendix
\section{Extended descent dictionary}\label{app:extended-dictionary}
The compact terminology used in the main text is expanded here. The table is interpretive and does not identify a computational comparison complex with a site-level descent problem without the hypotheses stated in Section~\ref{sec:theory}.

\subsection{Full dictionary}\label{subsec:dictionary}
Below we give a dictionary translating between descent-theoretic terms and their learning-theoretic interpretations. This dictionary is meant as a guide for the reader: it indicates which practical learning objects should be kept in mind when abstract terms such as covers, restriction maps, cocycles, and global sections appear. The goal is to make the following technical introduction readable both to mathematicians and to machine-learning theorists.

\begin{table*}[!tbp]
\centering
\caption{Dictionary between descent-theoretic language and practical learning language.}
\label{tab:learning-descent-dictionary}
\small
\begin{tabular}{L{0.43\textwidth}L{0.49\textwidth}}
\toprule
\textbf{Mathematical/descent term} & \textbf{Practical learning interpretation} \\
\midrule
Global object $X$, site $\sC_{\mathrm{learn}}$, cover $\{U_i\to X\}$ &
The whole merging problem, together with its decomposition into local pieces such as clients, tasks, domains, or checkpoints. \\

Overlaps $U_{ij}$ and $U_{ijk}$ &
Places where local pieces are compared. Pairwise overlaps compare two pieces; triple overlaps check whether the pairwise comparisons fit together.  \\

Local object $M_i$, restriction $M_i|_{U_{ij}}$, and restriction maps &
A local model and the part of it visible on an overlap. The restriction map is the operation that sends a local model to the part that can be compared with another one. \\

Transition maps $g_{ij}$, gauges, gauge group $G$, synchronization, and nonabelian $H^1$ &
The data used to align local models. Gauge changes are allowed changes of coordinates, and synchronization means choosing them so that the alignments become mutually consistent. When the relevant torsor problem is defined, nonabelian $H^1$ classifies gauge-descent data; it is not a numerical mismatch score. \\

Descent datum, strict descent, gluing, and ordinary object &
A descent datum is the collection of local objects and their transition maps. Strict descent means that these data satisfy the cocycle condition. When descent is effective in the ambient category, they glue to an ordinary global object or section; in the computational analogy, this corresponds to a consistent global merge. \\

Global sections, space of global sections, and $H^0$ &
The possible globally consistent choices. In a literal descent problem, these are global sections. In the computational analogy, greedy soup searches only a prescribed finite family of fixed-chart candidate predictors; it should not be identified with the full space $H^0$ unless the relevant descent conditions have been established. \\

Triple-overlap defect $c_{ijk}$, coefficient sheaf $A$, $2$-cocycle,
obstruction class $[c]\in H^2(\sC_{\mathrm{learn}};A)$, and coboundary &
If $g_{ij}g_{jk}g_{ki}=c_{ijk}$, then $c_{ijk}$ first records a triangle residual. When the residuals lie in a fixed central coefficient system and satisfy cocycle closure, they define a class on the chosen comparison complex or cover. A site-level class is obtained only when the relevant comparison map applies. If the certified class is trivial, it can be removed by an edge correction; if it is nontrivial, strict gluing fails on the corresponding complex or cover. \\

Twisted descent datum, failed gluing, and invariant readout &
In a genuine descent problem, a nonzero certified class prevents the permitted edge correction from producing strict descent on the stated complex or cover. The resulting datum is twisted; it determines a twisted sheaf only when the ambient category supports such objects. A projective or branch representation with invariant readout may still yield a globally consistent predictor, but it does not make the original class vanish. \\

Nerve of the cover and \v{C}ech complex &
The combinatorial and algebraic devices used to organize the calculation. The nerve records local pieces, pairwise overlaps, and triple overlaps; the \v{C}ech complex records the corresponding alignment and obstruction data. \\

Obstruction score &
For an arbitrary projected $2$-cochain, this is a numerical distance from the space of coboundaries. It is interpreted as the norm of an $H^2$ class only after cocycle closure has been certified. \\

Period $\per([c])$ and index $\ind([c])$ of an obstruction class $[c]\in H^2(\sC_{\mathrm{learn}};A)$, where $\per([c])$ is the order of $[c]$ and $\ind([c])$ is the rank-divisibility obstruction for twisted objects realizing $[c]$ &
The period tells us the torsion type. The classical index, or the projective-representation index in a controlled finite system, gives an arithmetic rank-divisibility condition. Passing this condition does not by itself construct a lifted predictor; the implemented overlap relations and invariant readout must also pass. \\
\bottomrule
\end{tabular}
\end{table*}

\begin{remark}
    The dictionary above includes both a literal descent interpretation and a finite computational interpretation. In the literal interpretation, $\mathcal U=\{U_i\to X\}$ is a cover in a specified learning site and $N(\mathcal U)$ is its nerve. In the computational interpretation, the algorithm may instead be given a finite comparison complex $K$ whose simplices record the available pairwise and higher-order comparison data. These distinctions are made precise in Definitions~\ref{def:learning-site-cover}--\ref{def:finite-descent-instance}.
\end{remark}

\section{Additional theory and proofs}\label{app:additional-theory}
This appendix retains the fixed-chart empirical-descent proof and the extended background used to relate the certification pipeline to existing merging algorithms. Their placement here should not be read as a claim of theorem-level novelty.

\subsubsection{Greedy soup as a fixed-chart empirical descent analogy}

We now reinterpret greedy model soup \cite{Wortsman2022ModelSoups} through a finite descent analogy. Fix a common parameter chart in which checkpoint averaging is defined. Let $\mathcal S=\{\sigma_1,\ldots,\sigma_N\}$ be a finite set of trained checkpoints. For every nonempty subset $S\subseteq\{1,\ldots,N\}$, let $\sigma_S$ denote the averaged predictor obtained from $\{\sigma_i\}_{i\in S}$. Model souping therefore produces the finite family
\[
\mathcal P_{\mathrm{soup}}:=\{\sigma_S:S\subseteq\{1,\ldots,N\},\ S\neq\varnothing\}.
\]
The validation set defines an empirical risk functional $\widehat R:\mathcal P_{\mathrm{soup}}\to\mathbb R$. These averaged checkpoints are candidate predictors in a fixed chart. They become literal sheaf-theoretic global sections only when the relevant compatibility and effectivity conditions have also been established. Greedy soup is a greedy search over $\mathcal P_{\mathrm{soup}}$, analogous to empirical descent on a finite family of global-section candidates.

\begin{definition}[Empirical compatibility of soup candidates]
Let $S\subseteq\{1,\ldots,N\}$ be the current soup, and let $j\notin S$ be a new checkpoint. We say that $\sigma_j$ is \emph{empirically compatible} with $\sigma_S$ if $\widehat R(\sigma_{S\cup\{j\}})\leq\widehat R(\sigma_S)$. Equivalently, adding $\sigma_j$ does not decrease validation accuracy. Greedy soup accepts exactly the empirically compatible candidates in its prescribed validation order.
\end{definition}

\begin{figure*}[!tbp]
\centering
\begin{tikzpicture}[
    box/.style={
        draw,
        rounded corners,
        align=center,
        minimum width=3.0cm,
        minimum height=0.9cm,
        font=\scriptsize,
        inner sep=4pt
    },
    arrow/.style={-{Latex[length=2mm]}, thick},
    node distance=1.1cm and 1.15cm
]

\node[box] (current) {
    Current predictor\\
    $\sigma_S\in\mathcal P_{\mathrm{soup}}$
};

\node[box, right=of current] (candidate) {
    Add checkpoint $j$\\
    candidate $\sigma_{S\cup\{j\}}$
};

\node[box, below right=0.9cm and -0.15cm of candidate] (test) {
    Validation-risk check\\
    $\widehat R(\sigma_{S\cup\{j\}})
    \leq
    \widehat R(\sigma_S)$?
};

\node[box, below left=1.0cm and -0.1cm of test] (accept) {
    Accept\\
    $S\leftarrow S\cup\{j\}$
};

\node[box, below right=1.0cm and -0.1cm of test] (reject) {
    Reject\\
    $S$ unchanged
};

\draw[arrow] (current) -- (candidate);
\draw[arrow] (candidate) -- (test);
\draw[arrow] (test) -- node[left,font=\scriptsize] {yes} (accept);
\draw[arrow] (test) -- node[right,font=\scriptsize] {no} (reject);

\end{tikzpicture}
\caption{Greedy soup as a fixed-chart empirical descent analogy. A checkpoint is accepted only when the validation risk of the resulting averaged predictor does not increase. The diagram describes a finite candidate-predictor search; it is not a computation of the full space of sheaf-theoretic global sections.}
\label{fig:greedy-soup-empirical-descent}
\end{figure*}

The next proposition records the elementary consequences of this validation rule. The finite-class comparison event used below is standard in empirical risk minimization and uniform-convergence arguments \cite{Vapnik1998,ShalevShwartzBenDavid2014}.

\begin{proposition}[Greedy soup as empirical descent]\label{prop:greedy-soup-descent}
Let $S_0\subseteq S_1\subseteq\cdots\subseteq S_T$ be the sequence of accepted soups produced by greedy soup, and let $\mathcal P_{\mathrm{eval}}$ be the finite family of predictors evaluated by the procedure.

\begin{enumerate}
    \item The empirical risk is nonincreasing along accepted steps:
\[
\widehat R(\sigma_{S_T})\leq\widehat R(\sigma_{S_{T-1}})\leq\cdots\leq\widehat R(\sigma_{S_0}).
\]
    Equivalently, validation accuracy is nondecreasing along accepted steps.

    \item Suppose that, with probability at least $1-\delta$, one has
\[
\sup_{\sigma\in\mathcal P_{\mathrm{eval}}}\left|R(\sigma)-\widehat R(\sigma)\right|\leq\varepsilon,
\]
    where $R$ denotes population risk. Then, on this event,
\[
R(\sigma_{S_T})\leq\min_i R(\sigma_{\{i\}})+2\varepsilon.
\]

    \item Under the same uniform comparison event, suppose that $\sigma_j$ is rejected at the current soup $\sigma_S$, so that $\widehat R(\sigma_{S\cup\{j\}})>\widehat R(\sigma_S)$. Then adding $\sigma_j$ cannot improve population risk by more than $2\varepsilon$:
\[
R(\sigma_{S\cup\{j\}})\geq R(\sigma_S)-2\varepsilon.
\]
If the algorithm uses the strict margin rule $\widehat R(\sigma_{S\cup\{j\}})>\widehat R(\sigma_S)+\tau$, then
\[
R(\sigma_{S\cup\{j\}})\geq R(\sigma_S)+\tau-2\varepsilon.
\]
\end{enumerate}
\end{proposition}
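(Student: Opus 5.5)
The three parts will be handled separately. Each one uses only the acceptance rule of greedy soup and the uniform comparison event.

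For part (1), I will use induction along the accepted steps. Every transition $S_{t-1}\to S_t$ adds exactly one checkpoint $j$, and this happens only if $j$ is empirically compatible with $\sigma_{S_{t-1}}$. By definition, that means $\widehat R(\sigma_{S_t})\le \widehat R(\sigma_{S_{t-1}})$. Rejected candidates leave $S$ unchanged and contribute nothing, so chaining these inequalities gives the displayed monotone chain. The accuracy formulation follows because validation accuracy is the negated (or complementary) empirical risk under the chosen loss.

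For part (2), the key point is that greedy soup begins from a checkpoint that minimizes validation risk among the singletons. I will state this explicitly as part of the procedure: checkpoints are sorted by validation accuracy, and $S_0=\{i^\ast\}$ with $\widehat R(\sigma_{\{i^\ast\}})=\min_i\widehat R(\sigma_{\{i\}})$. I will also note that all singletons belong to $\mathcal P_{\mathrm{eval}}$, since the procedure evaluates them in order to sort them. On the uniform event the chain is
\[
R(\sigma_{S_T})\le \widehat R(\sigma_{S_T})+\varepsilon\le \widehat R(\sigma_{S_0})+\varepsilon\le \widehat R(\sigma_{\{i\}})+\varepsilon\le R(\sigma_{\{i\}})+2\varepsilon
\]
for every $i$. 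Here the first and last inequalities are uniform convergence, the second is part (1), and the third is the choice of $S_0$. Taking the minimum over $i$ gives the claim. I expect this step to be the main obstacle. It depends on the singletons being evaluated and on the initialization at the empirical best singleton. If either assumption is absent, the bound can only be stated relative to $\sigma_{S_0}$, and I would say so explicitly.

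For part (3), I will use the fact that both $\sigma_S$ and $\sigma_{S\cup\{j\}}$ are evaluated, so both lie in $\mathcal P_{\mathrm{eval}}$. Then
\[
R(\sigma_{S\cup\{j\}})\ge \widehat R(\sigma_{S\cup\{j\}})-\varepsilon>\widehat R(\sigma_S)-\varepsilon\ge R(\sigma_S)-2\varepsilon.
\]
For the margin rule, I will insert $\widehat R(\sigma_{S\cup\{j\}})>\widehat R(\sigma_S)+\tau$ in the middle step, which gives $R(\sigma_{S\cup\{j\}})\ge R(\sigma_S)+\tau-2\varepsilon$.
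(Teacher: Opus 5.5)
Your proposal is correct and follows the paper's proof essentially step for step. It uses induction over accepted steps for (1), the chain through the empirically best initial singleton for (2), and the same two-sided use of the uniform comparison event for (3); the only difference in (2) is immaterial, since the paper fixes $i^*\in\arg\min_i R(\sigma_{\{i\}})$ directly instead of bounding for every $i$ and then minimizing, and your explicit requirement that the singletons and both $\sigma_S,\sigma_{S\cup\{j\}}$ lie in $\mathcal P_{\mathrm{eval}}$ makes precise an assumption the paper's proof uses only implicitly.
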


\begin{proof}
For (1), at step $t$, the algorithm accepts $j\notin S_t$ only if $\widehat R(\sigma_{S_t\cup\{j\}})\leq\widehat R(\sigma_{S_t})$. If the candidate is accepted, then $S_{t+1}=S_t\cup\{j\}$, and hence $\widehat R(\sigma_{S_{t+1}})\leq\widehat R(\sigma_{S_t})$. Rejected candidates leave the soup unchanged. Induction gives the monotone chain.

For (2), let $i^*\in\arg\min_i R(\sigma_{\{i\}})$. The initial predictor $\sigma_{S_0}$ is chosen to be the best single checkpoint by validation, so $\widehat R(\sigma_{S_0})\leq\widehat R(\sigma_{\{i^*\}})$. By (1), $\widehat R(\sigma_{S_T})\leq\widehat R(\sigma_{S_0})$. Using the uniform comparison event twice gives
\[
R(\sigma_{S_T})\leq\widehat R(\sigma_{S_T})+\varepsilon
\leq\widehat R(\sigma_{S_0})+\varepsilon
\leq\widehat R(\sigma_{\{i^*\}})+\varepsilon
\leq R(\sigma_{\{i^*\}})+2\varepsilon.
\]

For (3), the uniform comparison event gives $R(\sigma_{S\cup\{j\}})\geq\widehat R(\sigma_{S\cup\{j\}})-\varepsilon$. Since $j$ is rejected, $\widehat R(\sigma_{S\cup\{j\}})>\widehat R(\sigma_S)$, and the same event gives $\widehat R(\sigma_S)\geq R(\sigma_S)-\varepsilon$. Combining these inequalities yields $R(\sigma_{S\cup\{j\}})>R(\sigma_S)-2\varepsilon$. The margin version is identical.
\end{proof}

\begin{corollary}[Validation envelope]\label{cor:validation-envelope}
Let $\mathcal F$ be a finite family of candidate-predictor generators, such as weight averaging, greedy soup, Task Arithmetic, DARE, TIES, and SLERP\@. For each $f\in\mathcal F$, let $\sigma_f$ be the candidate predictor produced by $f$. Let $\widehat f\in\arg\min_{f\in\mathcal F}\widehat R(\sigma_f)$ be selected by validation, and let $f^*\in\arg\min_{f\in\mathcal F}R(\sigma_f)$ minimize population risk within the same finite family. If
\[
\sup_{f\in\mathcal F}\left|R(\sigma_f)-\widehat R(\sigma_f)\right|\leq\varepsilon,
\]
then $R(\sigma_{\widehat f})\leq R(\sigma_{f^*})+2\varepsilon$.
\end{corollary}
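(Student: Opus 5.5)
The plan is to reduce the statement to the standard finite-class comparison argument. The validation-selection step is then used only once.

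First, fix the uniform comparison event
\[
\sup_{f\in\mathcal F}\bigl|R(\sigma_f)-\widehat R(\sigma_f)\bigr|\leq\varepsilon,
\]
and argue deterministically on it. Since $\mathcal F$ is finite, both $\widehat f$ and $f^*$ exist. The only property of $\widehat f$ that I will use is the validation-optimality inequality $\widehat R(\sigma_{\widehat f})\leq\widehat R(\sigma_{f^*})$, which holds because $f^*\in\mathcal F$.

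Next, chain three inequalities, exactly as in part~(2) of Proposition~\ref{prop:greedy-soup-descent}:
\[
R(\sigma_{\widehat f})\leq\widehat R(\sigma_{\widehat f})+\varepsilon
\leq\widehat R(\sigma_{f^*})+\varepsilon
\leq R(\sigma_{f^*})+2\varepsilon .
\]
The first and third inequalities apply the uniform event at the two members $\widehat f$ and $f^*$ of $\mathcal F$. The middle inequality is validation optimality.

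The only point needing care is that $\widehat f$ is data-dependent. A pointwise concentration bound at a fixed $f$ would therefore not suffice. This is why the hypothesis is stated as a supremum over the whole family: the event then holds simultaneously for every member, including the selected one. If the hypothesis is only available with probability $1-\delta$, the conclusion holds on that same event. Ties in either $\arg\min$ cause no difficulty, because any choice satisfies the inequalities used.
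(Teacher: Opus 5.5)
Your proof is correct and is the paper's argument: it uses the same chain $R(\sigma_{\widehat f})\leq\widehat R(\sigma_{\widehat f})+\varepsilon\leq\widehat R(\sigma_{f^*})+\varepsilon\leq R(\sigma_{f^*})+2\varepsilon$, with validation optimality as the middle step and the uniform event applied to $\widehat f$ and $f^*$. Your remarks on the data-dependence of $\widehat f$ and on ties are accurate, but they only spell out what the supremum hypothesis already provides.
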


\begin{proof}
Since $\widehat f$ is chosen by validation, $\widehat R(\sigma_{\widehat f})\leq\widehat R(\sigma_{f^*})$. Therefore
\[
R(\sigma_{\widehat f})\leq\widehat R(\sigma_{\widehat f})+\varepsilon
\leq\widehat R(\sigma_{f^*})+\varepsilon
\leq R(\sigma_{f^*})+2\varepsilon.
\]
\end{proof}

\begin{remark}
Greedy soup is a greedy algorithm on a finite family of candidate predictors in a fixed chart. It does not compute the full sheaf of models or test twisted cocycle data. Its validation rule is analogous to empirical descent only within the prescribed finite family. This explains why greedy soup is a strong baseline in low-obstruction settings and why it can still miss structured compatibility information that is not represented in its candidate pool.
\end{remark}

\section{Full algorithm and diagnostics}\label{app:full-algorithm}
The following version exposes the optional rank, projective, and nonabelian branches that were suppressed in Algorithm~\ref{alg:twistedmerge-core}.

\subsection{Detailed candidate-generation pipeline}
In this subsection, we record the algorithmic form of TwistedMerge after the distinctions above. The guiding principle is that a model merge is a search over candidate global predictors together with a diagnosis of the comparison data. It also records the gauge family, output type, capacity multiplier, synchronization status, centrality and closure margins, projective rank gate, invariant-readout certificate, and the reason for every abstention.

\begin{table*}[!tbp]
\centering
\caption{Main modules of TwistedMerge. A module either produces a candidate predictor or records a diagnostic explaining why the corresponding branch is unavailable.}
\label{tab:twistedmerge-modules}
\small
\setlength{\tabcolsep}{3pt}
\begin{adjustbox}{max width=\textwidth}
\begin{tabular}{llll}
\toprule
Module & Mathematical object & Computation & Output \\
\midrule
Candidate predictors & $\mathcal P_{\mathrm{cand}}$ & averages, soups, task-vector, and path candidates & ordinary candidates \\
Pairwise alignment & transitions $g_{ij}$ & permutation, monomial, block, or projective fitting & aligned models and edge maps \\
Strict synchronization & candidate untwisted datum & synchronize gauges over $K$ & strict candidate \\
Triangle diagnostics & $g_{ij}g_{jk}g_{ki}$ & cycle or holonomy residuals & face-level defect data \\
Central certification & $\widehat c\in C^2(K;A)$ & inverse, projection, centrality, closure, margin & trivial/nontrivial/uncertified \\
Projective rank gate & $\per([\alpha])$ and $\indrep([\alpha])$ & arithmetic and representation checks & admissible or rejected rank \\
Invariant readout & $P\rho(h)=P$ & path-independence certificate & charted predictor \\
Validation envelope & empirical risk $\widehat R$ & evaluate certified candidates only & selected output \\
Conservative rejection & abstention state & retain ordinary candidates & safe fallback \\
\bottomrule
\end{tabular}
\end{adjustbox}
\end{table*}

\begin{definition}\label{def:residual-certification-diagnostics}
Fix metrics $d_G$ on $G$ and $d_A$ on $A$, write $d_G(x,A)=\inf_{a\in A}d_G(x,a)$, and let $K_1^+$ be a choice of one orientation for each unoriented edge. The diagnostics below are used only when the corresponding edge, face, or tetrahedron set is nonempty; otherwise that branch is skipped rather than evaluated with a zero denominator. When a structured edge family $\mathcal S\subseteq G^{K_1^+}$ is used, let
\[
\Pi_{\mathcal S}(g)=\widetilde g=\{\widetilde g_{ij}\}_{(ij)\in K_1^+}
\]
be the fitted or projected edge connection, extended to reverse orientations by inversion. If no structural projection is used, set $\widetilde g=g$. Define
\[
\widetilde h_{ijk}=\widetilde g_{ij}\widetilde g_{jk}\widetilde g_{ki},
\qquad \widehat c_{ijk}=\pi_A(\widetilde h_{ijk})
\]
whenever the coefficient projection is defined. We use the diagnostics
\begin{align*}
E_{\mathrm{inv}}&=\left(\frac{1}{|K_1^+|}\sum_{(ij)\in K_1^+}d_G(g_{ji}g_{ij},1_G)^2\right)^{1/2},\\
E_{\mathrm{proj}}&=\left(\frac{1}{|K_1^+|}\sum_{(ij)\in K_1^+}d_G(g_{ij},\widetilde g_{ij})^2\right)^{1/2},\\
E_{\mathrm{cent}}&=\left(\frac{1}{|K_2|}\sum_{(ijk)\in K_2}d_G(\widetilde h_{ijk},A)^2\right)^{1/2},\\
E_{\mathrm{cob}}&=\inf_{b\in C^1(K;A)}\left(\frac{1}{|K_2|}\sum_{(ijk)\in K_2}d_A(\widehat c_{ijk},(\delta_1b)_{ijk})^2\right)^{1/2}.
\end{align*}
When $K_3\neq\varnothing$, define
\[
E_{\mathrm{closed}}=\left(\frac{1}{|K_3|}\sum_{(ijkl)\in K_3}d_A((\delta_2\widehat c)_{ijkl},1_A)^2\right)^{1/2};
\]
when $K_3=\varnothing$, set $E_{\mathrm{closed}}=0$ by convention. The projection residual is an edge-level fidelity test: a small post-projection cycle score is not accepted when the learned connection is far from the structural family used to produce it. If $\dim K\leq2$, closure is vacuous for dimensional reasons, but inverse consistency, structural-projection fidelity, centrality, and distance to coboundaries remain nonvacuous.

Here $g_{ji}$ in $E_{\mathrm{inv}}$ denotes a reverse transition fitted independently when such a fit is part of the protocol. If reverse transitions are defined by $g_{ji}=g_{ij}^{-1}$ rather than estimated separately, then $E_{\mathrm{inv}}=0$ identically and serves only as a bookkeeping check.
\end{definition}

\begin{definition}[Three-way certification]
Let $L_{\mathrm{cob}}$ and $U_{\mathrm{cob}}$ be lower and upper confidence bounds for the distance to coboundaries after the structural gates have passed. For thresholds $\tau_0<\tau_1$, the central branch returns
\[
\begin{cases}
\text{trivial},&U_{\mathrm{cob}}\leq\tau_0,\\
\text{nontrivial on }K,&L_{\mathrm{cob}}\geq\tau_1,\\
\text{uncertified},&\text{otherwise}.
\end{cases}
\]
The same uncertified state is returned whenever inverse consistency, centrality, coefficient projection, closure, or the confidence margin fails.
\end{definition}

\begin{breakablealgorithm}
\caption{TwistedMerge}
\label{alg:twistedmerge}
\begin{algorithmic}[1]
\Require Checkpoints $\{M_i\}_{i=1}^n$; alignment data $D_{\mathrm{align}}$; certification data $D_{\mathrm{cert}}$; selection data $D_{\mathrm{select}}$; optional held-out test data $D_{\mathrm{test}}$; a specified finite comparison complex $K$ and, when applicable, a cover $\mathcal U$ with $K=N(\mathcal U)$; gauge families $\mathcal G$; candidate central coefficient systems and projections $\pi_A$; candidate ranks $R$; structural and statistical tolerances.
\Ensure A selected predictor $M^*$ and a diagnostic record $\mathfrak R$.

\State Initialize candidate set $\mathcal C\gets\emptyset$ and diagnostic record $\mathfrak R\gets\emptyset$.
\State Construct ordinary candidates: weight average, greedy soup, and any available same-base task-vector or path candidates. Add them to $\mathcal C$ and record output type and cost for each candidate.

\For{each gauge family $G\in\mathcal G$}
    \State Estimate the edge transitions $g_{ij}\in G$ on $D_{\mathrm{align}}$ and compute pairwise alignment losses.
    \If{the pairwise gate passes}
        \State Synchronize the edge gauges over $K$, construct the strict candidate $\sigma_{\mathrm{sync}}$, and add it to $\mathcal C$.
    \EndIf
    \State Fit or project the structured edge connection $\widetilde g=\Pi_{\mathcal S}(g)$ when applicable, and compute $E_{\mathrm{inv}}$ and $E_{\mathrm{proj}}$ on $D_{\mathrm{cert}}$.
    \State Compute the triangle residuals $\widetilde h_{ijk}=\widetilde g_{ij}\widetilde g_{jk}\widetilde g_{ki}$.

    \For{each candidate central coefficient system $A$ and projection $\pi_A$}
        \State Compute $E_{\mathrm{cent}}$ relative to $A$.
        \If{the inverse-consistency, centrality, and projection-fidelity gates pass}
            \State Form $\widehat c_{ijk}=\pi_A(\widetilde h_{ijk})$.
            \If{$K_3\neq\varnothing$}
                \State Compute $E_{\mathrm{closed}}=\|\delta_2\widehat c\|$.
            \Else
                \State Set $E_{\mathrm{closed}}=0$ for dimensional reasons.
            \EndIf
            \If{the closure gate passes}
                \State Compute a confidence interval for $E_{\mathrm{cob}}=\dist(\widehat c,B^2(K;A))$.
                \If{the status is trivial}
                    \State Solve for an $A$-valued edge correction and add the corrected strict candidate when the overlap equations pass.
                \ElsIf{the status is nontrivial on $K$}
                    \If{a certified classical index, projective-representation index, or proved rank lower-bound gate is available}
                        \For{each candidate rank $r\in R$}
                            \If{$r$ passes the arithmetic rank gate}
                                \State Attempt the projective or charted rank-$r$ construction.
                                \If{the lifted overlap relations and invariant-readout certificate pass}
                                    \State Add the lifted candidate and record its output type and cost.
                                \Else
                                    \State Record construction or readout rejection.
                                \EndIf
                            \Else
                                \State Record arithmetic no-lift rejection for rank $r$.
                            \EndIf
                        \EndFor
                    \Else
                        \State Keep the central class status \emph{nontrivial on $K$}; record the rank gate as uncertified and skip rank constructions.
                    \EndIf
                \Else
                    \State Record central uncertified status and do not activate a lift.
                \EndIf
            \Else
                \State Record a projected cochain that failed closure; do not assign an $H^2$ class.
            \EndIf
        \Else
            \State Record a noncentral, inverse-inconsistent, or projection-unstable result for this coefficient system.
        \EndIf
    \EndFor

    \If{the residual is treated as nonabelian holonomy}
        \State Estimate generators $\gamma_1,\ldots,\gamma_s$ of the holonomy subgroup.
        \State Search for a representation $\rho$ and pooling map $P$.
        \If{$P\rho(\gamma_\ell)=P$ within tolerance for every generator}
            \State Construct the branch predictor, record its branch and inference multipliers, and add it to $\mathcal C$.
        \Else
            \State Record nonabelian no-lift rejection.
        \EndIf
    \EndIf
\EndFor

\State Evaluate $\widehat R(\sigma)$ on $D_{\mathrm{select}}$ for every $\sigma\in\mathcal C$.
\State Select $\sigma^*\in\arg\min_{\sigma\in\mathcal C}\widehat R(\sigma)$.
\If{$D_{\mathrm{test}}$ is supplied}
    \State Evaluate the frozen selection once on $D_{\mathrm{test}}$.
\EndIf
\State Return $M^*=\sigma^*$ and $\mathfrak R$.
\end{algorithmic}
\end{breakablealgorithm}

The separation of $D_{\mathrm{align}}$, $D_{\mathrm{cert}}$, $D_{\mathrm{select}}$, and $D_{\mathrm{test}}$ is conceptually important. Alignment data estimate the transition maps, certification data decide whether a structural branch is admissible, selection data compare the resulting candidates, and test data are evaluation-only. When data are limited, these roles can be cross-fitted rather than literally assigned disjoint samples.

\begin{table*}[!tbp]
\centering
\caption{Diagnostics returned by TwistedMerge. The diagnostic record is part of the output rather than an auxiliary log.}
\label{tab:twistedmerge-diagnostics}
\small
\setlength{\tabcolsep}{3pt}
\begin{adjustbox}{max width=\textwidth}
\begin{tabular}{lll}
\toprule
Diagnostic & Meaning & Use \\
\midrule
Pairwise alignment loss & discrepancy after an edge alignment & edge-level compatibility \\
Inverse residual $E_{\mathrm{inv}}$ & discrepancy between $g_{ji}$ and $g_{ij}^{-1}$ & coherent orientation data \\
Triangle residual & value of $g_{ij}g_{jk}g_{ki}$ & strict-descent failure \\
Centrality residual $E_{\mathrm{cent}}$ & distance to $A\subseteq Z(G)$ & applicability of abelian branch \\
Projection residual $E_{\mathrm{proj}}$ & learned-to-structural distance & rejection of projection traps \\
Closure residual $E_{\mathrm{closed}}$ & size of $\delta_2\widehat c$ & existence of a cohomology class \\
Coboundary interval & uncertainty for $\dist(\widehat c,B^2)$ & trivial/nontrivial/uncertified status \\
Period and index gate & torsion type and rank divisibility & arithmetic lift eligibility \\
Lifted-overlap residual & error in implemented projective relations & construction certificate \\
Pooling certificate & verification of $P\rho(h)=P$ & path-independent readout \\
Output type and cost & single, soup, ensemble, branch, diagnostic & capacity-aware comparison \\
Validation risk & empirical risk of each admitted candidate & final selection \\
\bottomrule
\end{tabular}
\end{adjustbox}
\end{table*}

Let
\[
\mathcal C_{\mathrm{TM}}=\mathcal C_{\mathrm{base}}\cup\mathcal C_{\mathrm{sync}}\cup\mathcal C_{\mathrm{proj}}\cup\mathcal C_{\mathrm{br}}
\]
be the admitted candidate family. The final predictor is
\[
\sigma^*\in\arg\min_{\sigma\in\mathcal C_{\mathrm{TM}}}\widehat R(\sigma).
\]
If no projective or nonabelian branch is certified, then $\mathcal C_{\mathrm{proj}}\cup\mathcal C_{\mathrm{br}}=\varnothing$, and TwistedMerge reduces to validation selection among ordinary and synchronized candidates. This is the conservative fallback used in the natural-data experiments.

\subsubsection{Computational cost}
Let $|E(K)|$, $|F(K)|$, and $|T(K)|$ denote the numbers of edges, faces, and tetrahedra. Let $C_{\mathrm{align}}$ be the cost of fitting one edge alignment, $C_{\mathrm{tri}}$ the cost of one triangle diagnostic, and $C_{\mathrm{val}}$ the cost of validating one candidate. The base connection and triangle diagnostics cost
\[
O\bigl(|E(K)|C_{\mathrm{align}}+|F(K)|C_{\mathrm{tri}}\bigr).
\]
For each candidate coefficient system, one must additionally pay for coefficient projection, closure on $T(K)$, and the distance-to-coboundaries computation; these costs depend on the representation of the coefficient group and on the chosen solver. Rank and branch searches add their own construction costs. The final validation cost is
\[
O\bigl(|\mathcal C_{\mathrm{TM}}|C_{\mathrm{val}}\bigr).
\]
A branch or rank lift may also have an inference multiplier, which is recorded explicitly. The conservative design avoids the construction and inference costs of structural branches whenever their certificates are absent or uncertain.

\section{Geometric discussion}\label{app:neuro-ag}
The following discussion is conceptual background and is not used in the experimental conclusions.

\subsection{Neural quotient stacks and coarse spaces}
We end the theoretical part with a remark on neuro-algebraic geometry. The point is only to explain how the descent-theoretic language above can be viewed as a finite computational shadow of a more geometric object: the \'{e}tale site of a neural variety, or more naturally, of its quotient stack by gauge symmetries.

Let $\Theta$ denote the parameter space for a fixed neural architecture, and let $\operatorname{ev}:\Theta\to \sY$ be an evaluation map to a space of functions, outputs, or finite evaluations. For a polynomial architecture $A$, one can regard this as an algebraic map $\Theta_A\to \cF_A$, where $\Theta_A$ is the parameter space of the architecture and $\cF_A$ is the finite-dimensional affine space of functions, or finite evaluation vectors, represented by the chosen algebraic output coordinates. Then define the associated neurovariety
\[
\sN_A:=\overline{\operatorname{im}(\Theta_A\to \cF_A)}.
\]
The coarse neurovariety records expressivity and algebraic constraints on the realizable functions. For ReLU networks, the same picture should be understood stratumwise or piecewise algebraically, since ReLU is piecewise linear rather than globally algebraic. This viewpoint is compatible with recent work on neuro-algebraic geometry and with earlier algebraic and tropical approaches to neural networks \cite{MarchettiEtAl2025Invitation,KileelTragerBruna2019Polynomial,ZhangNaitzatLim2018Tropical,BrandenburgLohoMontufar2024RealTropical}.

However, the coarse neurovariety does not retain the automorphism data of a parameter presentation. Neural parameter spaces usually carry nontrivial gauge symmetries: hidden-unit permutations, positive ReLU scalings, monomial gauges, and changes of basis can change the parameters without changing the represented function. Thus a natural object for descent is the quotient stack $\mathfrak N_A=[\Theta_A/G_A]$, where $G_A$ is the relevant gauge group. This quotient stack records the parameter symmetries that the image in the coarse space alone does not retain. The quotient stack remembers automorphisms of local models, while the coarse quotient may forget them. This is important for us because descent obstructions are sensitive precisely to such automorphism data.

\begin{definition}
Let $\mathfrak N=[\Theta/G]$ be a quotient stack associated to a neural
architecture. Its \emph{\'{e}tale learning site} is the site
$\mathfrak N_{\et}$ whose objects are \'{e}tale charts $U\to \mathfrak N$ and whose covers are \'{e}tale covers. A local model over such a chart is a section or object of the model stack over $U$, and transition maps on overlaps record how two local parameter charts or latent charts are identified.
\end{definition}

In practice, TwistedMerge works with a finite comparison complex $K$. When the local model charts arise from a genuine cover $\mathcal U$ of a neural quotient stack, one may take $K=N(\mathcal U)$ and interpret the computation as a finite \v{C}ech approximation. In controlled or purely combinatorial experiments, however, $K$ is specified directly, and the resulting cohomology is the cohomology of that finite comparison complex rather than an automatically identified class in the \'{e}tale cohomology of the neural stack.

\begin{definition}
The cohomological Brauer group of the neural stack is $\Br'(\mathfrak N):=H^2(\mathfrak N_{\et},\GG_m)_{\mathrm{tors}}$. The Azumaya Brauer group $\Br(\mathfrak N)$ maps naturally to $\Br'(\mathfrak N)$, and the two are identified only when the comparison map is known to be an isomorphism. Their elements measure torsion central or projective descent obstructions on the quotient stack; see, for example, \cite{GrothendieckBrauer,EdidinHassettKreschVistoli2001}.
\end{definition}

\begin{proposition}[A coarse quotient can forget twisting]
Let $k$ be an algebraically closed field, let $\Theta_A\simeq \mathbb A^N_k$, and let a finite group $G_A$ whose order is invertible in $k$ act on $\Theta_A$. Assume that the coarse quotient $Q_A=\Theta_A/G_A$ exists and is an affine space. Then $\Br'(Q_A)=0$. On the other hand, the quotient stack $\mathfrak N_A=[\Theta_A/G_A]$ can have a nontrivial cohomological Brauer group. More precisely, under the vanishing
\[
H^1((\Theta_A)_{\et};\GG_m)=0, \qquad H^2((\Theta_A)_{\et};\GG_m)=0,
\]
one has a natural identification
\[
\Br'(\mathfrak N_A)\cong H^2(G_A;k^\times),
\]
where $G_A$ acts trivially on $k^\times$. In particular, the quotient stack can carry twisting data that is invisible on its coarse quotient.
\end{proposition}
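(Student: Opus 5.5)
The first assertion, $\Br'(Q_A)=0$, is a standard fact about affine space. By hypothesis $Q_A\simeq\mathbb A^N_k$ with $k$ algebraically closed. Then $\Br'(\mathbb A^N_k)$ vanishes on its prime-to-$p$ part by homotopy invariance of étale cohomology with $\mu_n$ coefficients together with the Kummer sequence, using $\Pic(\mathbb A^N)=0$ and $H^2_{\et}(\mathbb A^N;\mu_n)=0$. On its $p$-part, $p=\operatorname{char}k>0$, it vanishes by the known computation of $\Br(\mathbb A^N_k)$ for $k$ algebraically closed, for which I would cite the standard references (Grothendieck's Brauer papers, or Auslander--Goldman when $N\le 2$). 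If I wanted to avoid the $p$-part, I would simply note that the statement is intended with the torsion at primes invertible in $k$ in view.

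For the stack, I would use the Hochschild--Serre (descent) spectral sequence for the $G_A$-torsor $\Theta_A\to\mathfrak N_A=[\Theta_A/G_A]$:
\[
E_2^{p,q}=H^p\bigl(G_A;H^q((\Theta_A)_{\et};\GG_m)\bigr)\Longrightarrow H^{p+q}((\mathfrak N_A)_{\et};\GG_m).
\]
The vanishing hypotheses give $E_2^{p,1}=E_2^{p,2}=0$, so in total degree $2$ the only possible contribution is $E_2^{2,0}=H^2(G_A;H^0(\Theta_A;\GG_m))$. Here $H^0(\Theta_A,\GG_m)=k[x_1,\dots,x_N]^\times=k^\times$, and the $G_A$-action on it is trivial because $G_A$ acts $k$-linearly on functions and fixes constants. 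No differentials enter or leave $E^{2,0}$: the incoming $d_2$ comes from $E_2^{0,1}=0$, and the incoming $d_3$ comes from $E^{-1,2}=0$. Hence
\[
H^2(\mathfrak N_A;\GG_m)\cong H^2(G_A;k^\times).
\]
This group is finite, being killed by $|G_A|$, so it is entirely torsion and the isomorphism descends to $\Br'(\mathfrak N_A)\cong H^2(G_A;k^\times)$. Naturality holds because the edge map $H^2(G_A;\Gamma(\Theta_A,\GG_m))\to H^2(\mathfrak N_A;\GG_m)$ is functorial; concretely it sends a group $2$-cocycle to the $\GG_m$-gerbe of $\alpha$-twisted $G_A$-equivariant structures.

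To justify "can be nontrivial", I would exhibit an example: $G_A=(\ZZ/d)^2$ acting linearly on $\mathbb A^N$, for instance through the regular representation. Then $H^2(G_A;k^\times)\cong\ZZ/d\neq0$; this is the Heisenberg class of Theorem~\ref{thm:finite-heisenberg-index} with $m=1$. To keep the coarse quotient an affine space I would instead take $G_A=(\ZZ/2)^2$ acting diagonally by sign changes on $\mathbb A^2$ with $\operatorname{char}k\ne2$. This is a reflection group, so by Chevalley--Shephard--Todd $Q_A=\operatorname{Spec}k[x^2,y^2]\simeq\mathbb A^2$, while $H^2(G_A;k^\times)\cong\ZZ/2$.

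The main obstacle is making the spectral sequence rigorous on the étale site of a quotient stack, that is, identifying the descent spectral sequence for the Čech nerve of $\Theta_A\to\mathfrak N_A$ (whose terms are $\Theta_A\times G_A^p$) with group cohomology. I would cite the standard reference (e.g.\ \cite{EdidinHassettKreschVistoli2001}) rather than reprove it. A secondary obstacle is the $p$-primary part of $\Br'(\mathbb A^N)$ in positive characteristic.
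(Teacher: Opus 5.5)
Your argument for the quotient stack is the same as the paper's, and it is fine: the Cartan--Leray (Hochschild--Serre) spectral sequence for $\Theta_A\to[\Theta_A/G_A]$, the vanishing hypotheses killing $E_2^{1,1}$, $E_2^{0,2}$ and the only incoming differential from $E_2^{0,1}$, $H^0(\Theta_A;\GG_m)=k^\times$ with trivial action, and torsionness because $G_A$ is finite. Your explicit example is a useful addition that the paper lacks.

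The genuine gap is the $p$-primary part of $\Br'(Q_A)$ when $\operatorname{char}k=p>0$. Your claim that it vanishes ``by the known computation'' is false as soon as $N\ge 2$. Auslander--Goldman give $\Br(k[x])=\Br(k)$ only for $N=1$ and $k$ perfect. Here is a counterexample for $N=2$:
\begin{itemize}
\item Take $X=\mathbb A^2_k$ with $k$ algebraically closed. Since $\Pic X=0$, the sequence $0\to\GG_m\xrightarrow{p}\GG_m\xrightarrow{d\log}\nu(1)\to0$ gives $\Br(X)[p]\cong H^1(X_{\et},\nu(1))$.
\item The sequence $0\to\nu(1)\to Z\Omega^1\xrightarrow{1-\mathcal C}\Omega^1\to0$, with $\mathcal C$ the Cartier operator, has quasi-coherent last two terms, so they have no higher cohomology on affine $X$. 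Hence $H^1(X_{\et},\nu(1))\cong\operatorname{coker}\bigl(1-\mathcal C\colon Z\Omega^1(X)\to\Omega^1(X)\bigr)$.
\item Grade forms by weight, with $x^ay^b\,dx$ of weight $(a+1,b)$ and $x^ay^b\,dy$ of weight $(a,b+1)$. Then $d$ preserves weight, $\mathcal C$ divides weights by $p$, and $\mathcal C$ kills closed forms of weight outside $p\ZZ^2$, since those are exact.
\item Suppose $(1-\mathcal C)\omega=x\,dy$. Then $\omega_{(p^j,p^j)}=\mathcal C\,\omega_{(p^{j+1},p^{j+1})}$ for all $j\ge1$. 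By downward induction these components vanish, which forces $\omega_{(1,1)}=x\,dy$. But $x\,dy$ is not closed, a contradiction.
\end{itemize}
Hence $\Br'(\mathbb A^2_k)\neq0$; it is in fact infinite. Already the trivial group acting on $\mathbb A^2_k$ is a counterexample to the first assertion in characteristic $p$.

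The paper's proof has the same blind spot: it simply asserts $\Br'(Q_A)=0$ for affine space over an algebraically closed field. So the first assertion is correct only in characteristic $0$, for $N\le1$, or for the prime-to-$p$ part of $\Br'$. Your fallback of restricting to primes invertible in $k$ is the right repair. It must be built into the statement, though, not offered as an optional simplification; once it is, your Kummer and homotopy-invariance argument proves that part completely.

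The same phenomenon also breaks the hypothesis $H^2((\Theta_A)_{\et};\GG_m)=0$ for $\Theta_A\simeq\mathbb A^N$ with $N\ge2$ in characteristic $p$. So your sign-change example of $(\ZZ/2)^2$ on $\mathbb A^2$ verifies the proposition only in characteristic $0$. For an example valid in every characteristic $\neq2$, let $(\ZZ/2)^2$ act on $\mathbb A^1$ through one factor by $x\mapsto -x$. Then:
\begin{itemize}
\item $Q_A\simeq\mathbb A^1$ and $\Pic(\mathbb A^1)=0$;
\item $H^2(\mathbb A^1_k;\GG_m)\hookrightarrow\Br(k(t))=0$ by Tsen;
\item $\Br'(\mathfrak N_A)\cong H^2((\ZZ/2)^2;k^\times)\cong\ZZ/2$.
\end{itemize}
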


\begin{proof}
Since $Q_A$ is an affine space over an algebraically closed field, its cohomological Brauer group is trivial: $\Br'(Q_A)=0$; see \cite[Chapter~1]{Colliot-Thelene_Skorobagatov_21}.

For the action of $G_A$ on $\Theta_A$, we use the equivariant Cartan--Leray spectral sequence; see \cite{Stacks}:
\[
E_2^{p,q} = H^p(G_A,H^q((\Theta_A)_{\et},\GG_m)) \Longrightarrow H^{p+q}(([\Theta_A/G_A])_{\et},\GG_m).
\]
Because $\Theta_A\simeq \mathbb A^N_k$, the units are $H^0((\Theta_A)_{\et};\GG_m)=k^\times$, and the action of $G_A$ on these constant units is trivial. The stated vanishing removes the terms $E_2^{1,1}$ and $E_2^{0,2}$ of total degree two. It also eliminates the only possible incoming differential to $E_2^{2,0}$, namely the differential from $E_2^{0,1}$. No differential can leave $E_r^{2,0}$ because the target would have negative second index. Hence the total degree-two group is naturally identified with $H^2(G_A;k^\times)$. Since $G_A$ is finite, this group is torsion, so
\[
\Br'([\Theta_A/G_A])=H^2(([\Theta_A/G_A])_{\et},\GG_m)_{\mathrm{tors}}\cong H^2(G_A;k^\times).
\]
This group can be nontrivial and is the familiar source of projective representations and central extensions. Thus a coarse quotient may have trivial Brauer group while the quotient stack still carries nontrivial twisting.
\end{proof}

\begin{remark}
This proposition explains why the Brauer group appears in our framework. A central/projective obstruction is the finite-computational analogue of a class in $H^2(-,\GG_m)$ on the stack of local model charts. In the experiments, we compute such a class only on a finite nerve. Thus the finite-complex residual or class is a diagnostic for the chosen comparison data, while $\Br'(\mathfrak N)$ is the conceptual geometric object behind the projective site-level theory.
\end{remark}

\begin{ex}[Permutation and ReLU quotient stacks]
    For an MLP with hidden width $m$, hidden-unit permutations give an action of $S_m$ on a parameter chart. ReLU networks also have positive scaling symmetries, so the relevant gauge group is enlarged to a monomial group. Thus one should think of a local parameter chart as being replaced by a quotient stack $[\Theta/G_{\mathrm{mono}}]$, where $G_{\mathrm{mono}}$ contains permutations and positive diagonal rescalings. This explains why raw parameter averaging is not intrinsic, and why alignment or synchronization is needed before averaging.
\end{ex}

\begin{remark}
    The Brauer group records only the central/projective part of the obstruction. If the residual transition data are genuinely nonabelian, then the correct object is not simply a class in $\Br'(\mathfrak N)$. One must also remember the nonabelian holonomy data and the way local gauge groups are identified up to inner automorphism. This is why the executed $S_3$ and $D_4$ construction in this paper is described as a nonabelian holonomy experiment rather than an ordinary Brauer group computation.
\end{remark}

\section{Secondary experiments}\label{app:secondary-experiments}
These experiments support the taxonomy, audit implementation boundaries, or provide controlled algebraic checks. 

\subsection{Independent-seed MLP merging and exact ReLU gauges}\label{subsec:independent-seed-exact-gauges}
We first consider independently trained MLPs. This is the regime in which a common base chart is absent and hidden-unit alignment is part of the problem.

\subsubsection{Small benchmark}
The initial benchmark uses four fixed settings, obtained from $N\in\{3,4\}$ local models and hidden width in $\{32,64\}$, with five seeds per setting. Table~\ref{tab:real-benchmark-accuracy} and Figure~\ref{fig:real-benchmark-accuracy} reproduce the compact comparison. Uniform weight averaging performs poorly, while pairwise and synchronized gauge methods recover much of the lost accuracy. The TwistedMerge selector reaches $0.8690$, compared with $0.8480$ for internal \CtwoMthree{}-style synchronization and $0.8714$ for greedy soup.

\begin{table*}[!tbp]
\centering
\caption{Small independent-seed MLP benchmark. The Git Re-Basin-style and \CtwoMthree{}-style rows are in-repository baselines.}
\label{tab:real-benchmark-accuracy}
\small
\begin{tabular}{lccc}
\toprule
Method & Mean test acc. & $\Delta$ vs \CtwoMthree{} & $\Delta$ vs greedy \\
\midrule
Weight averaging & $0.7186$ & $-0.1294$ & $-0.1528$ \\
Git Re-Basin-style pairwise alignment & $0.8460$ & $-0.0021$ & $-0.0254$ \\
\CtwoMthree{}-style synchronization & $0.8480$ & $0.0000$ & $-0.0233$ \\
Raw monomial scaling & $0.8521$ & $+0.0040$ & $-0.0193$ \\
TwistedMerge selector & $0.8690$ & $+0.0209$ & $-0.0024$ \\
Model Soups / greedy soup & $0.8714$ & $+0.0233$ & $0.0000$ \\
\bottomrule
\end{tabular}
\end{table*}

\begin{figure}[!tbp]
\centering
\includegraphics[width=\columnwidth]{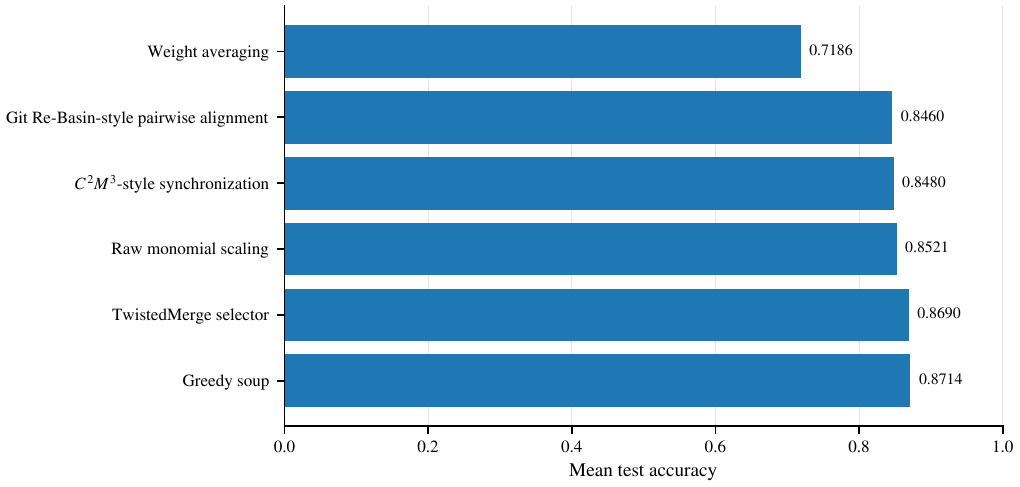}
\caption{Small independent-seed MLP benchmark on the full accuracy scale $[0,1]$. Exact values are printed at the bar ends; TwistedMerge remains close to greedy soup and above the internal \CtwoMthree{}-style baseline.}
\label{fig:real-benchmark-accuracy}
\end{figure}

\begin{table*}[!tbp]
\centering
\caption{Greedy soup versus the TwistedMerge selector on the small benchmark. Each fixed setting uses five seeds; the overall row aggregates twenty runs.}
\label{tab:greedy-vs-twistedmerge-independent-seed}
\small
\begin{tabular}{cccccccc}
\toprule
$N$ & width & runs & greedy & TwistedMerge & $\Delta$ & reported $95\%$ CI & W/T/L \\
\midrule
$3$ & $32$ & $5$ & $0.8622$ & $0.8623$ & $+0.0000$ & $[0.0000,0.0001]$ & $1/4/0$ \\
$3$ & $64$ & $5$ & $0.8803$ & $0.8756$ & $-0.0047$ & $[-0.0076,-0.0012]$ & $1/0/4$ \\
$4$ & $32$ & $5$ & $0.8621$ & $0.8604$ & $-0.0017$ & $[-0.0051,0.0002]$ & $1/2/2$ \\
$4$ & $64$ & $5$ & $0.8809$ & $0.8775$ & $-0.0034$ & $[-0.0080,0.0006]$ & $1/1/3$ \\
\midrule
overall & all & $20$ & $0.8714$ & $0.8690$ & $-0.0024$ & $[-0.0042,-0.0009]$ & $4/7/9$ \\
\bottomrule
\end{tabular}
\end{table*}

\subsubsection{Expanded exact-gauge ladder}
The repository also contains a larger confirmatory exact-gauge ladder. It uses MNIST, one-hidden-layer ReLU MLPs, $N\in\{3,4\}$, widths $16,32,64$, and twenty seeds per fixed setting, giving $120$ paired runs. The candidate family includes raw, shrinkage, global, and optimized positive monomial gauges, gauge-aware soups, and validation selectors.

\begin{table*}[!tbp]
\centering
\caption{Expanded independent-seed exact-gauge ladder over $120$ paired runs. Exact positive scaling variants improve the internal \CtwoMthree{}-style baseline, but the improved selector remains slightly below greedy soup.}
\label{tab:expanded-exact-gauge-ladder}
\small
\begin{tabular}{lc}
\toprule
Method & Mean test accuracy \\
\midrule
\CtwoMthree{}-style synchronization & $0.8119$ \\
Raw monomial scaling & $0.8197$ \\
Shrinkage monomial scaling & $0.8268$ \\
Global positive scaling & $0.8269$ \\
Optimized monomial scaling & $0.8206$ \\
Improved TwistedMerge selector & $0.8557$ \\
Greedy soup & $0.8572$ \\
\bottomrule
\end{tabular}
\end{table*}

The paired mean difference between the improved selector and internal \CtwoMthree{}-style synchronization is $+0.0438$, with reported $95\%$ paired bootstrap confidence interval $[0.0372,0.0508]$ and win/tie/loss count $117/0/3$. Against greedy soup, the difference is $-0.0015$, with reported $95\%$ paired bootstrap confidence interval $[-0.0023,-0.0008]$ and count $20/49/51$. Shrinkage and global scaling improve raw monomial scaling by $+0.0070$ and $+0.0071$, respectively. These results support the exact-gauge mechanism from Theorem~\ref{thm:exact-monomial-relu} only in the sense of function-preserving ReLU reparameterization; the empirical gain comes from the candidate construction and validation procedure, not from a central obstruction certificate.

\subsubsection{Official-core audit and selector attribution}\label{subsec:official-selector-attribution}
A later audit connects adapter-assisted official matching cores to exact checkpoint families. Table~\ref{tab:official-core-audit} reports the successful rows. All entries are same-capacity single models with one inference path. The official \CtwoMthree{} core improves over its matching internal implementation by $+0.0128$, and the official Git Re-Basin core improves over its internal pairwise implementation by $+0.0086$. Official TIES agrees exactly with the internal TIES-style merge on the three audited same-base rows.

\begin{table*}[!tbp]
\centering
\caption{Adapter-assisted official-core audit. The rows use official matching or merge cores through explicit checkpoint adapters; they are not unmodified end-to-end executions of the original repositories.}
\label{tab:official-core-audit}
\small
\begin{tabular}{llccc}
\toprule
Core & Regime & settings & mean test acc. & $\Delta$ vs matching internal \\
\midrule
Official Git Re-Basin & independent initialization & $20$ & $0.8546$ & $+0.0086$ \\
Official \CtwoMthree{} & independent initialization & $20$ & $0.8608$ & $+0.0128$ \\
Official TIES & same base & $3$ & $0.8215$ & $0.0000$ \\
\bottomrule
\end{tabular}
\end{table*}

On the twenty independent-initialization settings, official \CtwoMthree{} exceeds the pure TwistedMerge monomial-gauge candidate by $+0.0087$, with reported $95\%$ paired bootstrap confidence interval $[0.0050,0.0123]$. The existing validation selector exceeds official \CtwoMthree{} by $+0.0081$, with reported $95\%$ paired bootstrap confidence interval $[0.0030,0.0136]$, and official Git Re-Basin by $+0.0143$, with reported $95\%$ paired bootstrap confidence interval $[0.0074,0.0221]$. Because that selector chooses from an enlarged candidate pool, these comparisons do not isolate a TwistedMerge-specific algorithmic contribution.

The budget-matched attribution experiment addresses this issue on forty exact settings grouped into ten independent training seeds. We use the repository's notation: A0 is the ordinary greedy baseline; A1 is the official-synchronization pool; A2 is gauge-only augmentation; A3 is gauge-plus-soup augmentation; A4 is diagnostic-only augmentation; A5 is the full TwistedMerge selector; B0 is an ordinary soup control matched to A5 in candidate count and selector validation evaluations; and A6 is a nondeployable test-oracle upper bound. A5 contains no lift. Candidate-generation kernels and total generation compute are recorded but are not identical between A5 and B0.

The full selector A5 has mean test accuracy $0.8707$, whereas B0 has mean test accuracy $0.8726$. Their group-bootstrap paired difference is $-0.0019$, with reported $95\%$ group-bootstrap confidence interval $[-0.0026,-0.0012]$. The primary attribution gate fails. Figure~\ref{fig:repo-selector-attribution} gives a publication-scale summary regenerated from the exact values in the repository report. The supported conclusion is enriched-pool validation selection, not a TwistedMerge-specific selector gain.

\begin{figure*}[!tbp]
\centering
\includegraphics[width=0.82\textwidth]{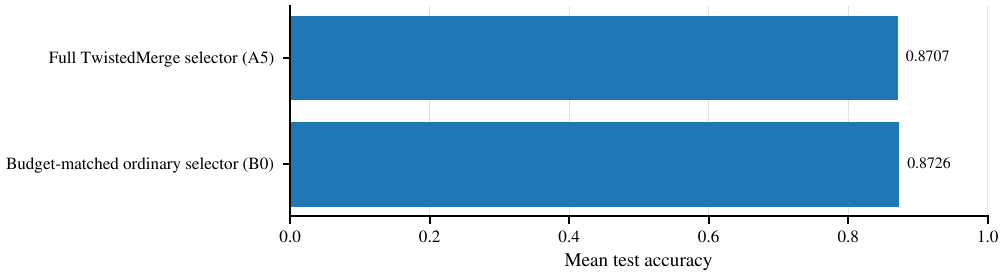}
\par\medskip
\includegraphics[width=0.82\textwidth]{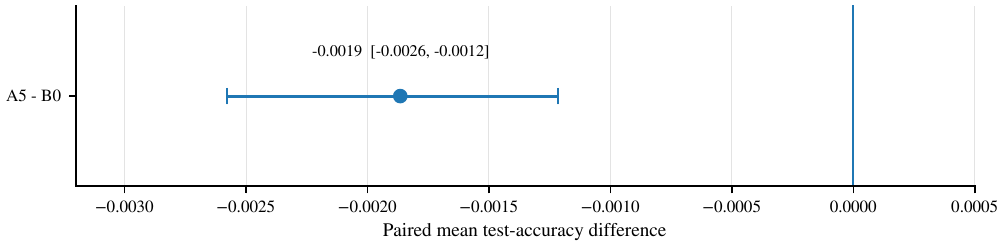}
\caption{Primary selector-attribution comparison regenerated from the repository report. A5 is the full TwistedMerge selector and contains no lift; B0 is the ordinary selector matched to A5 in candidate count and selector validation evaluations. Top: mean test accuracy on the full scale $[0,1]$. Bottom: paired mean difference A5$-$B0 with the reported $95\%$ group-bootstrap confidence interval.}
\label{fig:repo-selector-attribution}
\end{figure*}

\subsection{Same-base fixed-chart merging and empirical descent}\label{subsec:same-base-fixed-chart}
Task Arithmetic, TIES, DARE, and SLERP require a common base chart. The repository therefore evaluates them in a separate same-base benchmark rather than placing them in the independent-seed leaderboard. One base checkpoint using the repository architecture labelled \texttt{mlp2} is trained, task-specific copies are fine-tuned from that base, and all candidate hyperparameters are chosen on validation data. Git Re-Basin and \CtwoMthree{} are recorded as not-run secondary diagnostics because the construction does not intentionally introduce an independent permutation mismatch.

\begin{table*}[!tbp]
\centering
\caption{Representative same-base task-vector results over twenty seeds per setting. The fine-tuned oracle mean is not a single merged model.}
\label{tab:same-base-task-vector}
\small
\begin{adjustbox}{max width=\textwidth}
\begin{tabular}{lccc}
\toprule
Method & MNIST digit subsets, width 64 & MNIST digit subsets, width 128 & Fashion subsets, width 64 \\
\midrule
Greedy soup & $0.8164$ & $0.8345$ & $0.6828$ \\
Sequential SLERP & $0.8166$ & $0.8347$ & $0.6832$ \\
Task Arithmetic & $\mathbf{0.8603}$ & $\mathbf{0.8839}$ & $0.6932$ \\
DARE & $0.8556$ & $0.8770$ & $\mathbf{0.6936}$ \\
TIES & $0.8245$ & $0.8500$ & $0.6462$ \\
Fine-tuned oracle mean & $0.9267$ & $0.9372$ & $0.7779$ \\
\bottomrule
\end{tabular}
\end{adjustbox}
\end{table*}

\begin{figure}[!tbp]
\centering
\includegraphics[width=\columnwidth]{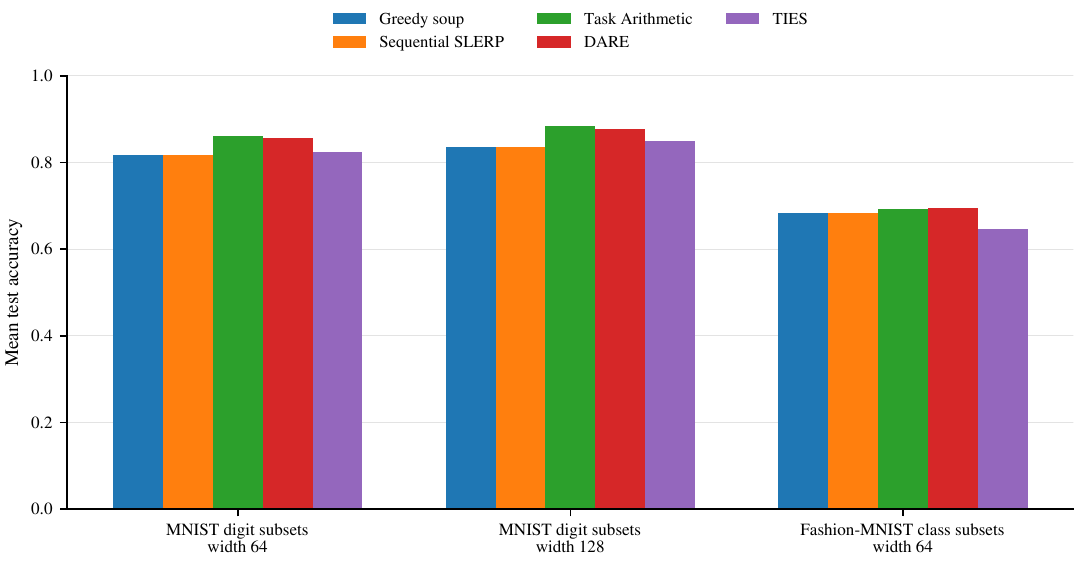}
\caption{Same-base fixed-chart merging on the full accuracy scale $[0,1]$. Task Arithmetic and DARE improve over the original greedy-soup pool in the stated settings because the common base already supplies the chart required by the task-vector operations.}
\label{fig:same-base-fixed-chart}
\end{figure}

On MNIST digit subsets with width $64$, Task Arithmetic improves over the original greedy soup by $+0.0439$, with reported $95\%$ paired bootstrap confidence interval $[0.0407,0.0474]$. At width $128$, the corresponding gain is $+0.0494$, with reported $95\%$ paired bootstrap confidence interval $[0.0469,0.0521]$. On Fashion-MNIST width $64$, DARE improves by $+0.0109$, with reported $95\%$ paired bootstrap confidence interval $[0.0048,0.0161]$. These are exact-setting same-base results, not evidence about independently trained models.

The descent-envelope experiment enlarges the generated candidate family to include weight averages, base and fine-tuned models, SLERP, Task Arithmetic, TIES, DARE, and the original greedy soup. Greedy soup over this enriched generated family improves over the checkpoint-only greedy soup by $+0.0227$, with reported $95\%$ paired bootstrap confidence interval $[0.0189,0.0267]$. This is a direct empirical illustration of Corollary~\ref{cor:validation-envelope}: the improvement comes from enlarging the finite family of candidate predictors while retaining validation-only selection.

\subsubsection{Finite-index and higher period-index thresholds}
Proposition~\ref{prop:clock-shift-divisibility} gives the first rank gate. The repository tests primitive orders $d=2,3,4,5,6$ and several nonprimitive roots. In every case, the smallest successful clock--shift rank equals the order $d$, and, among the tested ranks, success occurs exactly at multiples of $d$. This experiment is described as a projective or Morita realization of the relation $AB=\zeta BA$, not as the vanishing of the original scalar $2$-cocycle.

The higher finite-Heisenberg benchmark realizes Theorem~\ref{thm:finite-heisenberg-index}. Table~\ref{tab:controlled-period-index-thresholds} shows that the period remains $d$ while the projective-representation index grows to $d^k$ for $k$ independent Weyl pairs.

\begin{table*}[!tbp]
\centering
\caption{Controlled finite-Heisenberg period and projective-representation index. The minimal successful tested rank equals $d^k$.}
\label{tab:controlled-period-index-thresholds}
\small
\begin{tabular}{cccc}
\toprule
Case & Period $d$ & Pairs $k$ & $\indrep([\alpha])=d^k$ \\
\midrule
$d=2,k=1$ & $2$ & $1$ & $2$ \\
$d=2,k=2$ & $2$ & $2$ & $4$ \\
$d=2,k=3$ & $2$ & $3$ & $8$ \\
$d=3,k=1$ & $3$ & $1$ & $3$ \\
$d=3,k=2$ & $3$ & $2$ & $9$ \\
$d=4,k=1$ & $4$ & $1$ & $4$ \\
$d=4,k=2$ & $4$ & $2$ & $16$ \\
\bottomrule
\end{tabular}
\end{table*}

\begin{figure}[!tbp]
\centering
\includegraphics[width=\columnwidth]{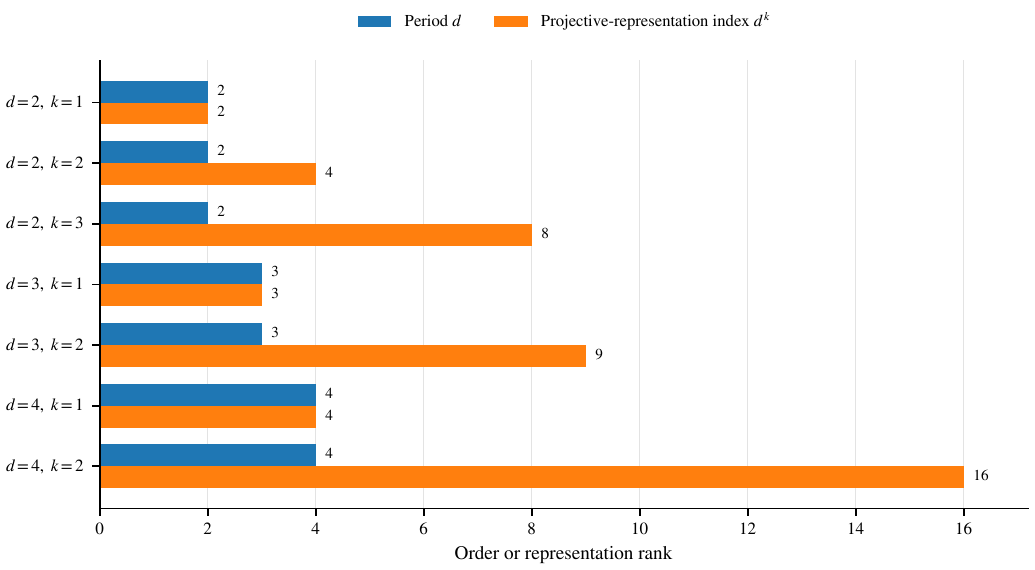}
\caption{Controlled period-index thresholds. Here $d$ is the commutator-phase order and $k$ is the number of independent Weyl pairs. Period divisibility alone is insufficient when several independent pairs are present.}
\label{fig:controlled-period-index-thresholds}
\end{figure}

\subsubsection{Known and learned time-frequency charts}
Finite signals on $\ZZ/d\ZZ$ give a natural controlled source of the Weyl relation. If $T$ is the cyclic time shift and $M$ is modulation, then
\[
MT=\zeta TM.
\]
For $k$ independent pairs, the expected threshold is $d^k$. The known-operator benchmark recovers this threshold: $(d,k)=(2,2),(2,3),(3,2),(4,1)$ have minimal certified ranks $4,8,9,4$, respectively. The corresponding controlled orbit-invariant classifier accuracies at the admissible rank are $0.9155$, $0.9734$, $1.0000$, and $0.9557$.

\begin{figure}[!tbp]
\centering
\includegraphics[width=\columnwidth]{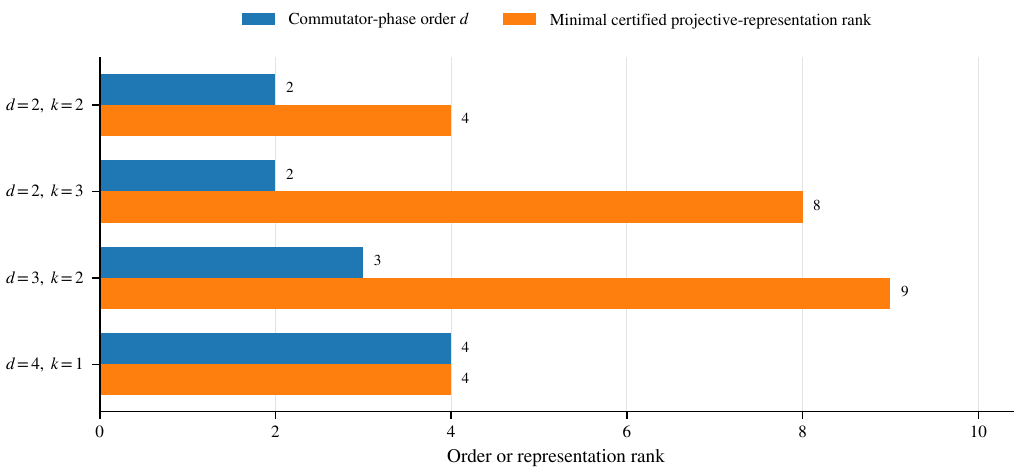}
\caption{Finite time-frequency chart thresholds. Here $d$ is the commutator-phase order and $k$ is the number of independent time-frequency pairs. The phase order and the minimal certified projective-representation rank differ when $k>1$.}
\label{fig:time-frequency-rank-thresholds}
\end{figure}

The learned-chart experiment estimates the transition operators from paired observations. Clean least-squares and full-dimensional linear-autoencoder estimates recover the expected phase order and projective-representation rank threshold with certification rate $1$ and operator errors near numerical precision. Raw estimates at noise levels $0.01$ and $0.05$ are rejected. For the $d=3,k=2$ case, candidate ranks $3$ and $6$ remain rejected and rank $9$ is selected only when the learned structural relations pass. A subsequent denoising experiment shows that nearest-unitary projection can raise the $d=2,k=2$, rank-$4$, noise-$10^{-4}$ certification rate from $0.5$ to $0.9$. Finite-Heisenberg projection extends the range further, but only when the learned-to-projected residual also passes. This projection residual is the empirical reason for including $E_{\mathrm{proj}}$ in Definition~\ref{def:residual-certification-diagnostics}.

\subsection{Executed two-loop noncommuting holonomy}\label{subsec:controlled-holonomy}
The central experiments use abelian coefficient data. We now consider a genuinely nonabelian residual, where the relevant object is a holonomy subgroup rather than an ordinary $H^2$ class. The executed construction uses the groups $S_3$ and
\[
D_4=\langle r,s\mid r^4=s^2=1,\ srs=r^{-1}\rangle.
\]
The comparison complex is a wedge of two length-three cycles, $0$--$1$--$2$--$0$ and $0$--$3$--$4$--$0$. The first loop carries a planted transposition or reflection $s$, and the second carries a planted $3$-cycle or rotation $r$. Five local checkpoints are exact hidden-unit reparameterizations of the same executed one-hidden-layer ReLU MLP\@. The duplicated regular hidden orbit supplies exact automorphisms carrying the two transitions, while the remaining hidden units are generic. The run grid uses widths $32$ and $64$, with seeds $0,\ldots,49$, giving $100$ executed instances per group.

Table~\ref{tab:two-loop-holonomy-structural} records the structural certificates. The pre-lift, commutator, and post-lift entries are the normalized operator residuals defined by the executed benchmark; they are report-specific diagnostics rather than universal group invariants. The positive commutator residual verifies that the two recovered generators do not commute. The local functional-equivalence residual is at numerical precision, the regular-action multiplication residual is zero, and both invariant-pooling residuals vanish. Thus the experiment verifies the hypotheses of Proposition~\ref{prop:invariant-pooling-descent} for two noncommuting generators.

\begin{table*}[!tbp]
\centering
\caption{Executed two-loop noncommuting holonomy certificates. The post-lift and pooling residuals vanish, while the positive commutator residual certifies noncommutation.}
\label{tab:two-loop-holonomy-structural}
\small
\begin{adjustbox}{max width=\textwidth}
\begin{tabular}{lccccccc}
\toprule
Group & pre-lift residual & commutator residual & post-lift residual & pool $\gamma_1$ & pool $\gamma_2$ & action multiplication & local equivalence \\
\midrule
$S_3$ & $0.522693$ & $0.522693$ & $0$ & $0$ & $0$ & $0$ & $5.40\times10^{-16}$ \\
$D_4$ & $0.603553$ & $0.603553$ & $0$ & $0$ & $0$ & $0$ & $5.61\times10^{-16}$ \\
\bottomrule
\end{tabular}
\end{adjustbox}
\end{table*}

\begin{figure}[!tbp]
\centering
\includegraphics[width=\columnwidth]{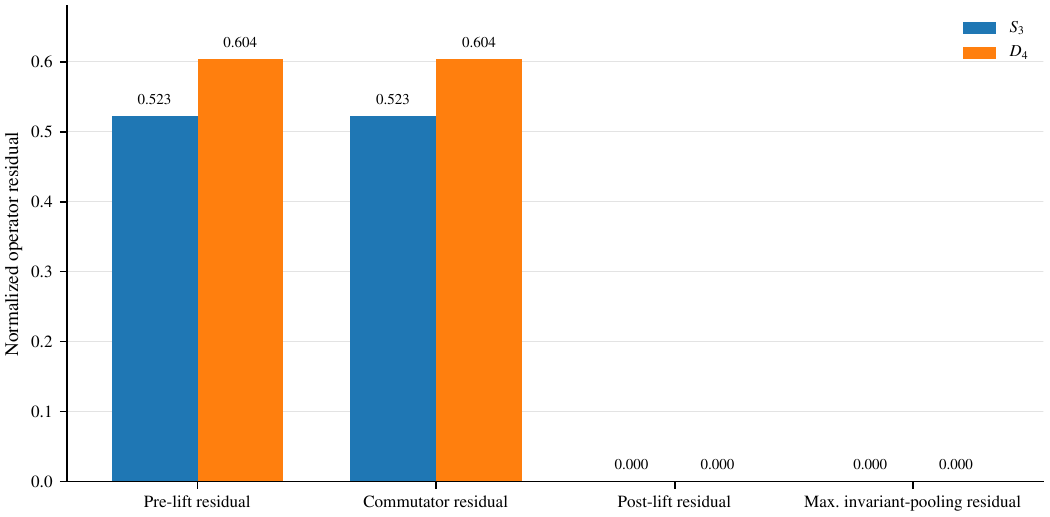}
\caption{Executed two-loop structural residuals. The displayed quantities are the normalized operator residuals used by the benchmark. The maximum invariant-pooling residual is the maximum of the two generator-wise pooling residuals. The pre-lift and commutator residuals are positive for both groups, whereas the post-lift and pooling residuals vanish.}
\label{fig:two-loop-noncommuting-holonomy}
\end{figure}

The executed accuracy results impose an equally important negative boundary. Table~\ref{tab:two-loop-holonomy-accuracy} averages the two widths. Ordinary unaligned weight averaging is substantially worse, but every evaluated aligned, synchronized, soup, and branch method reaches accuracy $1$. The random same-branch-count and wrong-action controls also reach accuracy $1$. Consequently, the paired accuracy difference between the branch regular lift and Git Re-Basin-style alignment, \CtwoMthree{}-style synchronization, greedy soup, or any of the recorded branch controls is exactly zero, with $100$ ties per group.

\begin{table*}[!tbp]
\centering
\caption{Executed two-loop accuracy boundary, averaged over widths $32$ and $64$. The branch model has additional inference paths. The equality of the aligned and lifted rows is retained as a negative empirical result.}
\label{tab:two-loop-holonomy-accuracy}
\small
\begin{adjustbox}{max width=\textwidth}
\begin{tabular}{lcccl}
\toprule
Method & $S_3$ acc. & $D_4$ acc. & inference multiplier & output type \\
\midrule
Ordinary weight average & $0.5667$ & $0.5768$ & $1$ & single model \\
Git Re-Basin-style pairwise alignment & $1.0000$ & $1.0000$ & $1$ & single model \\
\CtwoMthree{}-style synchronization & $1.0000$ & $1.0000$ & $1$ & single model \\
Greedy soup & $1.0000$ & $1.0000$ & $1$ & soup \\
Naive regular representation, no pooling & $1.0000$ & $1.0000$ & $6$ / $8$ & branch model \\
Branch regular lift with invariant pooling & $1.0000$ & $1.0000$ & $6$ / $8$ & branch model \\
Random same-branch-count control & $1.0000$ & $1.0000$ & $6$ / $8$ & branch model \\
Wrong generator, order, and group-action controls & $1.0000$ & $1.0000$ & $6$ / $8$ & branch models \\
\bottomrule
\end{tabular}
\end{adjustbox}
\end{table*}

Accordingly, this experiment supports only a structural nonabelian claim: the executed models carry two noncommuting holonomies, and invariant pooling makes both generators invisible at the readout. It does not support an accuracy advantage for the branch lift. The distinction is essential, because the structural certificate remains meaningful even when ordinary alignment and the negative controls already solve the particular classification task.

\subsubsection{Observed branch and rank lifts}
Representative MNIST and Fashion-MNIST branch rows are shown in Table~\ref{tab:real-branch-lift-boundary}. Greedy soup remains a strong boundary. The small rank and branch candidates sometimes improve over synchronized or averaged baselines, but they do not reliably exceed greedy soup and do not carry a certified natural finite-index class.

\begin{table*}[!tbp]
\centering
\caption{Representative observed MNIST/Fashion-MNIST branch boundary. In the last column, the first value is the rank-lift accuracy and the second is the validation-selected branch accuracy. These outputs may have additional inference capacity.}
\label{tab:real-branch-lift-boundary}
\small
\begin{adjustbox}{max width=\textwidth}
\begin{tabular}{lccccc}
\toprule
Dataset & $N$ & width & setting & greedy soup & rank lift / validation branch \\
\midrule
Fashion-MNIST & $3$ & $64$ & input noise, activation & $0.8154$ & $0.7982\,/\,0.8095$ \\
Fashion-MNIST & $3$ & $64$ & input noise, weight & $0.8154$ & $0.8023\,/\,0.8095$ \\
Fashion-MNIST & $3$ & $64$ & none, activation & $0.8242$ & $0.8180\,/\,0.8247$ \\
Fashion-MNIST & $4$ & $128$ & none, weight & $0.8328$ & $0.8265\,/\,0.8332$ \\
MNIST & $3$ & $64$ & input noise, activation & $0.9016$ & $0.8963\,/\,0.8995$ \\
MNIST & $3$ & $64$ & input noise, weight & $0.9016$ & $0.8955\,/\,0.8995$ \\
\bottomrule
\end{tabular}
\end{adjustbox}
\end{table*}

\subsubsection{Exact channel gauges and vision boundaries}
The exact positive-gauge experiment extends to a small no-BatchNorm Fashion-MNIST convolutional neural network (CNN)\@. Across fifteen settings, internal \CtwoMthree{}-style channel synchronization reaches $0.7716$, optimized exact channel scaling reaches $0.7809$, and greedy soup reaches $0.8442$. The optimized channel scale improves over \CtwoMthree{} by $+0.00933$, with reported $95\%$ paired bootstrap confidence interval $[0.00479,0.01456]$, but remains $0.06323$ below greedy soup. The residuals are noncentral under the tested diagnostics, so no CNN Brauer or period-index claim is made.

A separate ResNet-18 identity audit examines BatchNorm rather than merge accuracy. Compatible graph-wide channel permutations preserve predictions in evaluation and training modes within the preregistered floating-point tolerance. Positive scaling is evaluation-exact only for the explicit frozen-statistic affine or epsilon-aware running-affine parameterizations tested there; scaling the stored running moments alone fails when $\varepsilon>0$, and static positive scaling is not train-mode exact. This audit limits the scope of Theorem~\ref{thm:exact-monomial-relu}; it does not provide an independent-initialization ResNet merging result.

\begin{figure*}[!tbp]
\centering
\includegraphics[width=0.82\textwidth]{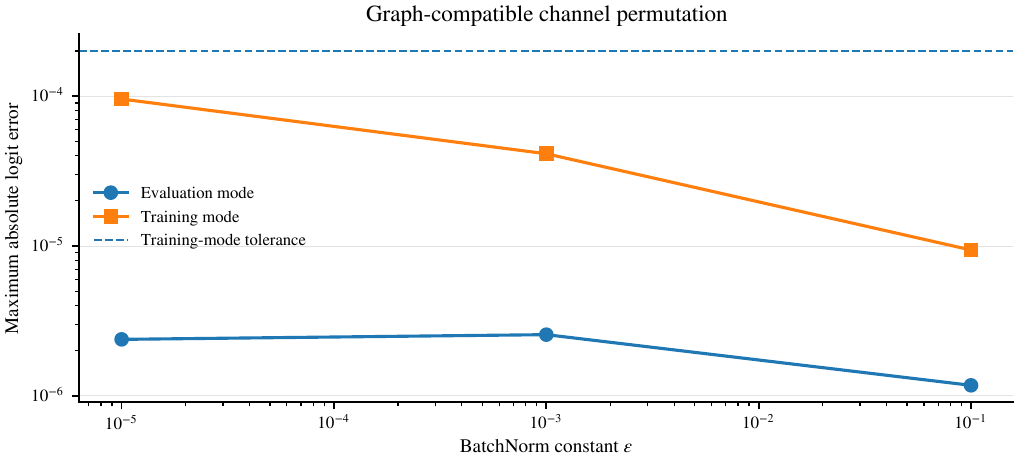}
\par\medskip
\includegraphics[width=0.82\textwidth]{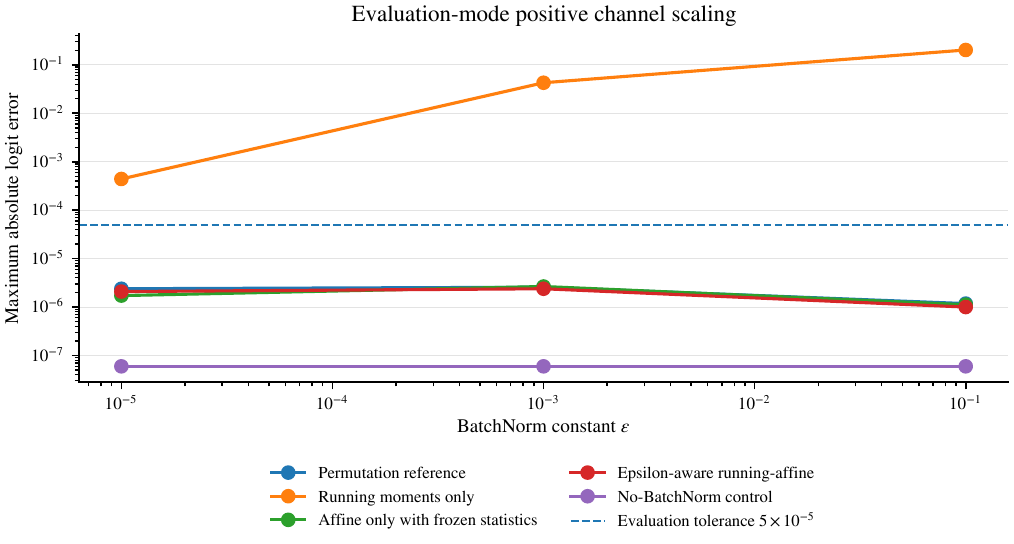}
\caption{ResNet-18 BatchNorm identity audit regenerated from the exact repository summary values. Top: graph-compatible channel permutations in evaluation and training modes; the dashed line is the preregistered training-mode tolerance. Bottom: evaluation-mode scaling strategies. ``Affine only with frozen statistics'' leaves the stored moments fixed, while ``epsilon-aware running-affine'' transforms the moments and applies the corresponding $\varepsilon$-aware affine correction. The dashed line is the evaluation-mode tolerance. Both vertical axes report maximum absolute logit error. These are functional-identity tests, not merge-accuracy results.}
\label{fig:repo-batchnorm-gauge-audit}
\end{figure*}

\begin{figure}[!tbp]
\centering
\includegraphics[width=\columnwidth]{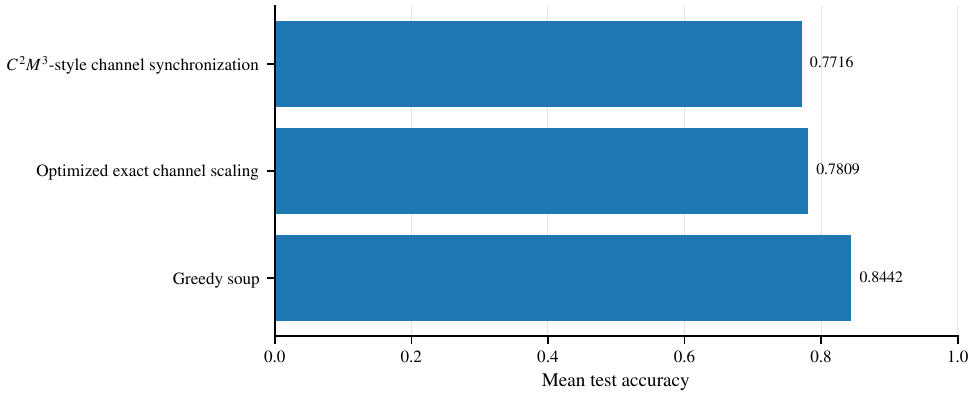}
\caption{Fashion-MNIST no-BatchNorm CNN boundary on the full accuracy scale $[0,1]$. Exact channel scaling gives a limited improvement over internal strict synchronization but remains below greedy soup.}
\label{fig:fashion-cnn-channel-gauge-boundary}
\end{figure}

Rotated- and colored-MNIST bridge datasets show the same boundary pattern. Across seventeen settings, greedy soup improves over \CtwoMthree{}-style channel synchronization by $+0.0847$, while the validation-safe selector ties greedy soup. These are MNIST-derived bridge datasets and do not imply CIFAR or broad vision performance.

The bounded no-BatchNorm CIFAR rescue clears the base-accuracy gate, with mean per-run best individual-model accuracy $0.6506$. Nevertheless, optimized channel scaling is only $+0.00054$ versus \CtwoMthree{}, with reported $95\%$ paired bootstrap confidence interval $[-0.00088,0.00274]$, while greedy soup reaches $0.6480$ and internal \CtwoMthree{} reaches $0.4644$. The union candidate soup is only $+0.00044$ over greedy soup, with a reported $95\%$ paired bootstrap confidence interval that contains zero. We therefore retain this no-BatchNorm CIFAR study as a descriptive appendix boundary rather than as evidence for the main method. A later torchvision-style CIFAR-10 ResNet-18 pipeline is test-isolated, checkpoint-resumable, and smoke-tested, but its base-quality pilot and confirmatory merge study remain gated and unrun at the recorded commit; it supplies no positive or negative model-merging result.

\FloatBarrier

\section*{Report on AI Usage}
In preparing this paper, we used ChatGPT 5.6 Pro to improve the language and organization of the manuscript. We also used Codex to assist with coding and running the experiments. All mathematical arguments, experimental designs, results, and conclusions were reviewed and verified by the authors, who take full responsibility for the content of the paper.

\section*{Acknowledgements}
The first author thanks the UW Math AI Lab for supporting this project and his advisor, Max Lieblich, for his guidance. Besides, the first author thanks the hospitality of the Beijing International Center for Mathematical Research (BICMR), where part of this work was carried out. The first author also thanks Soham Ghosh and Michael Ruofan Zeng for discussion. The second author thanks his mentor Zhiyu Tian for his support and guidance and BICMR for providing an excellent working environment.

\begingroup
\sloppy
\bibliographystyle{elsarticle-num}
\bibliography{references}
\endgroup
\end{document}